\documentclass[twoside]{article}
\usepackage[accepted]{aistats2024}

\usepackage{floatrow}
\usepackage{graphicx}
\graphicspath{{images/}}
\usepackage{caption}
\usepackage{subcaption}
\captionsetup[figure]{name=Figure, labelsep=period, font={small, stretch=1}}
\usepackage{comment}

\usepackage{amsmath}
\usepackage{accents}
\newlength{\dhatheight}

\usepackage{amsfonts}
\usepackage{amsthm}

\newtheorem{proposition}{Proposition}
\newtheorem{condition}{Condition}
\newtheorem{corollary}{Corollary}
\newtheorem{definition}{Definition}
\newtheorem{lemma}{Lemma}
\newtheorem{remark}{Remark}

\usepackage{dirtytalk} % 4 quotes
\usepackage{hyperref} % 4 hyperlinks

% bibliography
%\usepackage[apa, bibencoding=auto,strict,backend=biber, natbib, maxcitenames=2, sorting=nyt, citestyle=authoryear, uniquename=mininit, maxbibnames=99]{biblatex}
%\addbibresource{References.bib}

\usepackage[round]{natbib}
\bibliographystyle{plainnat}

% If you use BibTeX in apalike style, activate the following line:
%\bibliographystyle{apalike}

\usepackage[ruled, vlined, linesnumbered]{algorithm2e}
\usepackage{amsfonts}
\usepackage{amssymb}

\usepackage{wrapfig}
%\setlength{\belowcaptionskip}{-10pt}
%\captionsetup{belowskip=-1pt}

%\usepackage{calrsfs}

%\usepackage{xr}
\usepackage{xr-hyper}

%\title{\textbf{Robust Approximate Sampling via Stochastic Gradient Barker Dynamics}}
%\author{}

\begin{document}
%\maketitle
\twocolumn[

\aistatstitle{Robust Approximate Sampling via Stochastic Gradient Barker Dynamics}
\aistatsauthor{ Lorenzo Mauri \And Giacomo Zanella }
\aistatsaddress{Department of Statistical Science\\ Duke University \And  Department of Decision Sciences and BIDSA\\ Bocconi University} 
%\aistatsaddress{Duke University \And  Bocconi University}
]

\begin{abstract}
    Stochastic Gradient (SG) Markov Chain Monte Carlo algorithms (MCMC) are popular algorithms for Bayesian sampling in the presence of large datasets. 
    However, they come with little theoretical guarantees and assessing their empirical performances is non-trivial.
    In such context, it is crucial to develop algorithms that are robust to the choice of hyperparameters and to gradients heterogeneity since, in practice, 
    both the choice of step-size and behaviour of target gradients induce hard-to-control  biases in the invariant distribution. 
    In this work we introduce the stochastic gradient Barker dynamics (SGBD) algorithm, 
    extending the recently developed Barker MCMC scheme, a robust alternative to Langevin-based sampling algorithms, to the stochastic gradient framework.
    We characterize the impact of stochastic gradients on the Barker transition mechanism and develop a bias-corrected version 
    that, under suitable assumptions, eliminates the error due to the gradient noise in the proposal.
    We illustrate the performance on a number of high-dimensional examples, showing that SGBD is more robust to hyperparameter tuning and to irregular behavior of the target gradients compared to the popular stochastic gradient Langevin dynamics algorithm.
\end{abstract}

\section{INTRODUCTION}\label{sec:introduction}
Approximating posterior distributions arising from probabilistic models is a challenging computational task, especially in the context of large datasets. 
Standard gradient-based MCMC algorithms \citep{roberts95,duane_et_al_87,neal11} 
require evaluations of the exact target density and its gradient at each iteration, which can be computationally impractical.
Inspired by stochastic optimization \citep{robbins51}, stochastic gradient MCMC (SG-MCMC) algorithms replace the exact target gradient with a computationally cheaper estimate, such as those obtained from a randomly sampled subset the original data.
Since the influential work of \citet{sgld}, which introduced the stochastic gradient Langevin dynamics (SGLD) algorithm, SG-MCMC methods have gained considerable popularity among practitioners seeking to perform approximate Bayesian inferences with large datasets. We refer to  \citet{nemeth2021sgmc} for an overview of SG-MCMC.

Most SG-MCMC methods \citet{sgld,sg_hamiltonian, sgnht, ma_et_al_15} converge to the true posterior distribution if the step-size is appropriately decreased to zero \citep{teh_et_al15}. However, this strategy deteriorates mixing, increasing computational cost.
Practitioners usually 
keep the step-size fixed, which leads to non-negligible and hard-to-diagnose bias in the invariant distribution \citep{incompatibility_hmc, brosse_et_al18}, especially if the step-size is chosen too large or the target distribution is irregular.
Also, adaptive tuning is harder in the stochastic gradient setting relative to standard MCMC \citep{nemeth2021sgmc}, which makes 
robust methods even more
appealing in this context. 
Motivated by these considerations, we develop the stochastic gradient version of the Barker proposal scheme developed in \citet{barker}, which has been shown to enjoy improved robustness to target heterogeneity and hyperparameter tuning relative to classical gradient-based MCMC schemes.

The paper is structured as follows. 
Section \ref{sec:background} sets up notation and provides background on the Barker proposal. 
Section \ref{sec:sgbd} introduces and analyzes the stochastic gradient Barker dynamics (SGBD) algorithm. 
In particular: Section \ref{subsec:bias} analyzes the bias induced by direct use stochastic gradients;
Sections \ref{subsec:c-sgbd}-\ref{sec:noise_tolerance} propose a bias-correction methodology and identify the maximum level of noise that it can tolerate; Section \ref{subsec:e_sgbd} shows how to minimize bias for higher levels of noise. 
Section \ref{sec:simulations} numerically compares SGBD to SGLD. 
Therein, SGBD displays greater robustness to the choice of hyperparameters and to irregular posterior distributions, and in most cases exhibits either comparable or better out-of-sample predictive performance. 
Section \ref{sec:conclusion} discusses future research directions.

\section{BACKGROUND}\label{sec:background}
We consider the task of approximate sampling from a target probability distribution of the form $ \pi(\theta) \propto \exp{(g(\theta))}$ where  $\theta \in \mathbb R^d$ and  $g:\mathbb{R}^d\to\mathbb{R}$. In a classical Bayesian setting with conditional independent data, we have $g(\theta) = \log\left(p(\theta)\right) + \sum_{i=1}^N\log\left(p(y_i\mid x_i, \theta)\right)$, where $p(\theta)$ is the prior distribution of the parameter $\theta$, and $p(y_i\mid x_i, \theta)$ is the likelihood component of the $i$-th data point $y_i$ with covariates $x_i$. Hence, the gradient of $g(\theta)$ 
can be written as the sum of $N$ data points components, $\partial_j g(\theta) =  \sum_{i=1}^N \partial_j g_i(\theta)$,
where $\partial_j g_i(\theta) = \frac{1}{N} \partial_j \log \left( p(\theta)\right) + \partial_j \log \left(p(y_i\mid x_i, \theta)\right)$, and $\partial_j$ stands for the partial derivative with respect to the $j^{th}$ component of $\theta$, i.e.\ $\partial_j g(\cdot) = \frac{\partial}{\partial \theta_j} g(\cdot)$ for $j=1, \dots, d$.

\subsection{The Barker Proposal}\label{sec:barker_proposal}
The Barker proposal \citep{barker,barker_fresh_take} is a first-order approximation of 
a locally-balanced jump process \citep{informed_proposals,power2019accelerated,sun2023discrete}.
The latter are continuous-time $\pi$-invariant jump processes with
generator $Lf(\theta)=\int(f(\theta+w)-f(\theta))J(\theta,\theta+w)dw$ 
defined by the intensity function

\begin{align*}%\label{eq:bark_prop}
J(\theta,\theta+w)&=
h\left(\frac{\pi(\theta+w)}{\pi(\theta)}\right)
\prod_{j=1}^d\mu_\sigma(w_j)
&\theta,w\in\mathbb{R}^d\,.
\end{align*}
Above $h:(0,\infty)\to(0,\infty)$ can be any function satisfying $h(t)=th(1/t)$ and $\mu_\sigma(z)=\sigma^{-1}\mu(z/\sigma)$ 
any probability density function (PDF) on $\mathbb{R}$ with scale parameter $\sigma>0$
and a symmetric reference distribution $\mu$.
Taking $h$ as the Barker function, $h(t)=2t(1+t)^{-1}$, and approximating $\pi$ with its first order log-Taylor expansion, $\pi(\theta+w)/\pi(\theta)\approx \exp(\sum_{j=1}^d\partial_j g(\theta)w_j)$ leads to the Barker proposal, whose PDF is 
\begin{align}\label{eq:bark_prop}
Q_B(\theta,\theta+w)&=\prod_{i=1}^d
2p(\partial_jg(\theta),w_j)\mu_\sigma(w_j)
&\theta,w\in\mathbb{R}^d\\
\label{eq:p_def}
p(\delta,z)&=(1 + \exp(-z \delta))^{-1}
&\delta,z\in\mathbb{R}\,.
\end{align}
Since $p(\delta,z)+p(\delta,-z)=1$, $Q_B$ is a product of skew-symmetric distributions \citep{azzalini2013skew}, 
which 
provides full-tractability as well as a straightforward algorithm to sample from $Q_B$ (namely lines 2-7 in Algorithm \ref{alg:barker_proposal}). 
The gradient $\partial_j g(\theta^{(t-1)})$ enters into $Q_B$ as a degree of skewness, as opposed to a linear shift of the mean as for classical Euler-Maruyama based schemes such as the Unadjusted Langevin Algorithm (ULA; see e.g \citealp{roberts1996exponential}). 
It follows that the gradients only influence the direction of the increment under $Q_B$ and not its size, since the distribution of $|w_j|$ is independent of $\partial_j g(\theta)$. % \citep{barker_fresh_take}.
This decoupling of gradients and increments size leads to an increased robustness to sub-optimal hyperparameter tuning and target heterogeneity, see e.g.\ \cite{barker} for more details and some formal results.
On the other hand, being a first-order approximation to a $\pi$-invariant process, $Q_B$ shares the favourable high-dimensional scaling properties of classical gradient-based MCMC, such as a scaling of order $d^{-1/3}$ as $d$ diverges \citep{roberts95,optimal_design_barker}.

Algorithm \ref{alg:barker_proposal} describes the resulting unadjusted Barker proposal algorithm. 
\begin{algorithm}
\SetAlgoLined
\SetKwInOut{Input}{Input}    
\Input{$\theta^{(0)}\in\mathbb{R}^d, \sigma>0$}
 \For{t =1,\dots, T} {
  \For {j=1, \dots, d \tcp*{Can be parallelized}}{
 Draw $w^{(t)}_j \sim \mu_{\sigma}(\cdot)$\;
 Set $b^{(t)}_j = 1$ with probability $p(\partial_j g(\theta^{(t-1)}), w^{(t)}_j)$, %$=(1 + \exp{(-z \partial_j g(\theta^{(t-1)}))})^{-1}$,
 otherwise $b^{(t)}_j = -1$\;
 Update $\theta^{(t)}_j \gets \theta^{(t-1)}_j +  b^{(t)}_j w^{(t)}_j$ \;
 }
 }
 \caption{Unadjusted Barker Proposal}
 \label{alg:barker_proposal}
\end{algorithm}
The key step is the flipping operation at line $5$, where the algorithm flips the sign of the proposed increment with probability $1- p(\partial_j g(\theta), w^{(t)}_j)$.
This operation skews the proposal distribution towards the target $\pi$, since the increment will be $+w^{(t)}_j$ with high probability if $\partial_j g(\theta)w^{(t)}_j$ is high and $-w^{(t)}_j$ otherwise.
Note that Algorithm \ref{alg:barker_proposal} has essentially the same computational cost of standard ULA, where $\theta^{(t)}_j \gets \theta^{(t-1)}_j +  \sigma^2/2\partial_j g(\theta^{(t-1)})+w^{(t)}_j$ and $w^{(t)}_j\sim N(0,\sigma^2)$ for $j=1,\dots,d$.
Both schemes require $O(N)$ operations at each iteration, with the computation of the gradient representing the major computational bottleneck with large datasets, as in other gradient-based schemes.
Previous work have considered the Metropolis-adjusted version of Algorithm \ref{alg:barker_proposal}. Here we consider the unadjusted one, in order to mantain the computational savings induced by \eqref{eq:stochastic_gradient} when moving to the stochastic gradient context. While there are works combining MH schemes with mini-batching \citep{austerity_mcmc, bardenet14}, stochastic-gradient versions of unadjusted schemes are much more common and widely used.

In our experiments below we take $\mu_\sigma$ to be the bimodal distribution $0.5\mathcal N(-\sigma,(0.1\sigma)^2)+0.5\mathcal N(\sigma,(0.1\sigma)^2)$, as recommended in \citep{optimal_design_barker}. 
Note that in algorithmic implementations one can simply take $\mu_\sigma=N(\sigma,(0.1\sigma)^2)$, since the resulting algorithm is equivalent by symmetry (though for the equality \eqref{eq:bark_prop} to be correct one needs the symmetric  version of $\mu_\sigma$).

\section{THE STOCHASTIC GRADIENT BARKER PROPOSAL (SGBD)}\label{sec:sgbd}
In this section we propose and analyze the stochastic-gradient Barker Proposal (SGBD) algorithm. At each iteration, we replace the true gradient with the minibatch estimate
\begin{align} \label{eq:stochastic_gradient}
   \hat{\partial}_j g(\theta)& = \frac{N}{n}\sum_{i \in \mathcal{S}_n} \partial_j g_i(\theta)&j=1,\dots,p,
\end{align}
where $\mathcal{S}_n$ is a subset of $\{1,\dots,N\}$ of size $n \ll N$ sampled uniformly at random, with or without replacement.  
The vanilla version of SGBD (v-SGBD) consists in substituting the gradient in Algorithm \ref{alg:barker_proposal} with the estimate in \eqref{eq:stochastic_gradient}, leading to Algorithm \ref{alg:vSGBD}.
\begin{algorithm}
\SetAlgoLined
\SetKwInOut{Input}{Input}   \Input{$\theta^{(0)}\in\mathbb{R}^d, \sigma>0$}
 \For{t =1,\dots, T} {
 Draw $\mathcal{S}_n \subset \{1, \dots, N\}$ uniformly at random\,
 \For {j=1, \dots, d \tcp*{Can be parallelized}}{
 Draw $w^{(t)}_j \sim N(\sigma, (0.1\sigma)^2 )$ \;
 Compute $\hat{\partial}_j g\left(\theta^{(t-1)}\right)$ as in \eqref{eq:stochastic_gradient}\;
 Set $ b^{(t)}_j = 1$ with probability $p\left(\hat{\partial}_j g(\theta^{(t-1)}), w^{(t)}_j\right)$, otherwise $ b^{(t)}_j = -1$\;
 Update $\theta^{(t)}_j \gets \theta^{(t-1)}_j +  b^{(t)}_j w^{(t)}_j$ \;
 }
 }
 \caption{Vanilla Stochastic Gradient Barker Dynamics (v-SGBD)}\label{alg:vSGBD}
\end{algorithm}
\subsection{Bias of vanilla SGBD}\label{subsec:bias}
Lines $2$, $5$ and $6$ of Algorithm \ref{alg:vSGBD} are equivalent to setting $b^{(t)}_j=1$ with probability $\mathbb E[p(\hat{\partial}_j g(\theta^{(t-1}), w^{(t)}_j)]$, where the expectation is taken with respect to the subsampling mechanism.
Thus, denoting current location and proposed increment as $\theta\in\mathbb{R}^d$ and 
 $z\in\mathbb{R}$ for notational simplicity, Algorithm \ref{alg:vSGBD} effectively replaces the flipping probability, i.e.\ $\hbox{Pr}(b=1)=p(\partial_j g(\theta), z)$ in Algorithm \ref{alg:barker_proposal}, with $\mathbb E[p(\hat{\partial}_j g(\theta), z)]$. 
% [[CORRELATION?]] 
While \eqref{eq:stochastic_gradient} ensures that $\mathbb E[\hat{\partial}_j g(\theta)] =\partial_j g(\theta)$, 
%i.e.\ $\mathbb E\left[\hat{\partial} g(\theta)\right] = \partial_j g(\theta)$, 
the non-linearity of $p$ implies that $\mathbb E[p(\hat{\partial}_j g(\theta), z)] \neq p(\partial_j g(\theta), z)$ in general. 
The bias of $p(\hat{\partial}_j g(\theta), z)$
implies that the gradient noise does not balance out and, similarly to other SG-MCMC methods, Algorithm \ref{alg:vSGBD} introduces additional error in the stationary distribution of Algorithm \ref{alg:barker_proposal}. 
In the next section we analyse this bias 
and develop a strategy to reduce it. 
We implicitly assume that the approximation error 
inherent to Algorithm \ref{alg:barker_proposal}, induced by the first-order approximation of the jump process, is of smaller order relative to the one induced by stochastic gradients. The same holds true for most SG-MCMC schemes \citep{teh_et_al15,brosse_et_al18}.% [CITE?]

First, we identify the direction of the bias.
We make the following symmetry assumption on the stochastic gradient noise, which we denote as $\eta_\theta := \hat{\partial}_j g(\theta) - \partial_j g(\theta)$, suppressing the dependence on $j$ for brevity. 
\begin{condition}[Symmetry of $\eta_\theta$]\label{cond:symmetry}
\begin{equation} \label{eq:unimodality_symmetry}
    \eta_\theta \overset{d}{=} - \eta_{\theta}, \,
\end{equation}
where $\overset{d}{=}$ denotes equality in distribution.
\end{condition}
\begin{proposition}[Direction of bias]\label{prop:bias_p_hat}
Under Condition \ref{cond:symmetry} we have
    \begin{equation}\label{eq:bias_p_hat}
        \left|p\left(\partial_j g(\theta), z\right) - 0.5\right| \geq \left|\mathbb E\left[ p\left(\hat{\partial}_j g(\theta), z\right)\right]- 0.5\right|.\,
    \end{equation}
\end{proposition}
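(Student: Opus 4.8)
The plan is to recast \eqref{eq:bias_p_hat} as a statement about the logistic function and then exploit its odd symmetry together with its concave/convex structure around the origin and Condition~\ref{cond:symmetry}. Observe first that, by \eqref{eq:p_def}, $p(\delta,z)=(1+\exp(-z\delta))^{-1}=p(1,z\delta)$, so it is natural to set $\phi(u):=p(1,u)-\tfrac12=(1+e^{-u})^{-1}-\tfrac12$. Then $\phi$ is odd ($\phi(-u)=-\phi(u)$), strictly increasing, bounded in $(-\tfrac12,\tfrac12)$, satisfies $\phi(0)=0$, and — since $\phi'(u)=p(1,u)(1-p(1,u))$ and $\phi''(u)=p(1,u)(1-p(1,u))(1-2p(1,u))$ — is concave on $[0,\infty)$ and convex on $(-\infty,0]$. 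Writing $a:=z\,\partial_j g(\theta)$ and $X:=z\,\eta_\theta$, Condition~\ref{cond:symmetry} gives $X\overset{d}{=}-X$, and \eqref{eq:bias_p_hat} becomes exactly $|\phi(a)|\ge|\mathbb E[\phi(a+X)]|$.

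Next I would reduce to the case $a\ge0$: if $a<0$, applying $X\overset{d}{=}-X$ and oddness of $\phi$ shows $\mathbb E[\phi(a+X)]=-\mathbb E[\phi(-a+X)]$ and $|\phi(a)|=|\phi(-a)|$, so the $a<0$ case follows from the $a\ge0$ one. Assuming $a\ge0$, I would symmetrize the noise, $\mathbb E[\phi(a+X)]=\mathbb E\big[\tfrac12(\phi(a+X)+\phi(a-X))\big]$, so that it suffices to establish the pointwise bounds $-\phi(a)\le\tfrac12\big(\phi(a+x)+\phi(a-x)\big)\le\phi(a)$ for every $x\in\mathbb R$; since the middle expression is even in $x$, one may take $x\ge0$.

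The pointwise bounds split into two regimes. For $0\le x\le a$, both $a+x$ and $a-x$ lie in $[0,\infty)$, so the upper bound is midpoint concavity of $\phi$ on $[0,\infty)$, while the lower bound is immediate because the middle expression is then $\ge\phi(0)=0\ge\phi(-a)=-\phi(a)$. For $x>a$, oddness gives $\tfrac12(\phi(a+x)+\phi(a-x))=\tfrac12(\phi(x+a)-\phi(x-a))$, which is $>0\ge-\phi(a)$ by monotonicity, settling the lower bound; for the upper bound I would use that for a concave function the increment $\phi(t+c)-\phi(t)$ over a fixed length $c=2a$ is non-increasing in $t\ge0$, so $\phi(x+a)-\phi(x-a)\le\phi(2a)-\phi(0)=\phi(2a)$, and then that a concave function vanishing at $0$ is subadditive on $[0,\infty)$, giving $\phi(2a)\le 2\phi(a)$. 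Taking expectations over $X$ then yields $-\phi(a)\le\mathbb E[\phi(a+X)]\le\phi(a)$, i.e.\ $|\mathbb E[\phi(a+X)]|\le\phi(a)=|\phi(a)|$, which is \eqref{eq:bias_p_hat}.

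The only delicate point is the regime $|x|>a$, where the reflected pair $a\pm x$ straddles the origin and $\phi$ is neither concave nor convex across it; the resolution is precisely the pairing of ``concave increments are largest at the left endpoint'' with the subadditivity bound $\phi(2a)\le2\phi(a)$. The remaining ingredients — the reduction to $a\ge0$, the noise symmetrization, and the $|x|\le a$ case — are routine, and the argument uses only the specific logistic shape produced by the Barker choice $h(t)=2t/(1+t)$ in \eqref{eq:p_def}, so it is self-contained.
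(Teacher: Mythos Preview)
Your argument is correct. The overall skeleton matches the paper's proof: both exploit Condition~\ref{cond:symmetry} to symmetrize the noise and reduce \eqref{eq:bias_p_hat} to a pointwise two-sided bound on $\tfrac12\big(p(\delta+\eta,z)+p(\delta-\eta,z)\big)$, then take expectations. The difference lies in how that pointwise bound is obtained. The paper establishes it through two direct algebraic computations (its Lemmas~1 and~2), expanding $\big(1+ce^{z\eta}\big)^{-1}+\big(1+ce^{-z\eta}\big)^{-1}$ with $c=e^{-z\delta}$ and checking signs. You instead abstract to $\phi(u)=(1+e^{-u})^{-1}-\tfrac12$ and derive the bounds from its qualitative shape: oddness, monotonicity, and concavity on $[0,\infty)$, handling the delicate straddling regime $|x|>a$ via the decreasing-increments property of concave functions together with subadditivity $\phi(2a)\le 2\phi(a)$. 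Your route is slightly more conceptual and would transfer verbatim to any odd sigmoidal function that is concave on the positive half-line (hence to other balancing functions $h$ beyond the Barker choice), whereas the paper's explicit algebra is specific to the logistic but yields the strict-inequality cases more transparently. A minor cosmetic point: your ``$>0$'' in the $x>a$ lower bound should read ``$\ge 0$'' to accommodate $a=0$, but that case is trivial anyway since then both sides of \eqref{eq:bias_p_hat} vanish.
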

Proposition \ref{prop:bias_p_hat}, as well as results below, holds for every $\theta \in \mathbb{R}^d$ and $z\in\mathbb{R}$. 
Proofs of all theoretical results are provided in the supplement. 
Proposition \ref{prop:bias_p_hat} shows that, under symmetric noise, the expectation of $p(\hat{\partial}_j g(\theta), z)$ is always shrunk towards $0.5$ relative to its target value $p(\partial_j g(\theta), z)$. 
The practical implication of \eqref{eq:bias_p_hat} is an inflation of the variance of the stationary distribution, as Algorithm \ref{alg:vSGBD} moves less frequently towards a local mode of the distribution relative to Algorithm \ref{alg:barker_proposal}. This is analogous to what happens with other SG-MCMC algorithms: for instance, when the step-size is held fixed, the stochastic gradient noise increases the variance of the invariant distribution of SGLD when no correction is taken into account \citep{vollmer_et_al_16}.

\subsection{Corrected SGBD}\label{subsec:c-sgbd}
In this section we quantify the bias of $p(\hat{\partial}_j g(\theta), z)$ and derive a corrected estimator for $\hbox{Pr}(b=1)=p(\partial_j g(\theta), z)$.
To do so, we assume the stochastic gradient noise to be normally distributed. This is a common requirement in the SG-MCMC theory literature, which is typically justified by assuming the mini-batch size to be sufficiently large and applying a central limit theorem \citep{sg_hamiltonian, sgnht, relativistic_mc_17}.
\begin{condition}[Normality of $\eta_\theta$]\label{cond:normality}
    \begin{equation} \label{eq:normality}
           \eta_\theta \sim \mathcal{N}(0, \tau_\theta^2),\,
    \end{equation}
    for some $\tau_\theta>0$ that can depend on $\theta$ (and on $j$). 
\end{condition}
Under Condition \ref{cond:normality}, we obtain the following tractable approximation to the expectation of $p(\hat{\partial}_j g(\theta), z)$.
\begin{proposition}\label{prop:p_hat_biased}
Under Condition \ref{cond:normality}, we have
    \begin{equation}\label{eq:approx_expectation_p_hat}
    \begin{aligned}
        \left|\mathbb E\left[p\left(\hat{\partial}_j g(\theta), z\right)\right] - p\left(c_{z,\tau_\theta}\partial_j g(\theta), z \right) \right| < 0.019,
    \end{aligned}        
    \end{equation}
    where $c_{z,\tau_\theta} := \frac{1.702}{\sqrt{1.702^2 + z^2 \tau_\theta^2}}$.
 \end{proposition}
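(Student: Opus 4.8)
The plan is to reduce the statement to the classical fact that the logistic sigmoid is uniformly close to a rescaled Gaussian cdf. Writing $\sigma(x):=(1+e^{-x})^{-1}$ we have $p(\delta,z)=\sigma(z\delta)$; let $\Phi$ be the standard normal cdf and set $\lambda:=1.702$. Under Condition~\ref{cond:normality}, $z\hat\partial_j g(\theta)=m+U$ with $m:=z\,\partial_j g(\theta)$ and $U\sim\mathcal N(0,s^2)$, $s^2:=z^2\tau_\theta^2$ (the variance being $z^2$ regardless of the sign of $z$), so
\[
\mathbb E\!\left[p(\hat\partial_j g(\theta),z)\right]=\mathbb E_{U}\!\left[\sigma(m+U)\right],
\qquad
p\!\left(c_{z,\tau_\theta}\partial_j g(\theta),z\right)=\sigma\!\left(\frac{\lambda m}{\sqrt{\lambda^2+s^2}}\right).
\]
I would use two ingredients. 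First, the uniform approximation bound $\varepsilon^{\star}:=\sup_{x\in\mathbb R}\bigl|\sigma(x)-\Phi(x/\lambda)\bigr|<0.0095$: since $x\mapsto\sigma(x)-\Phi(x/\lambda)$ is smooth, odd, and vanishes at $0$ and at $\pm\infty$, its supremum over $\mathbb R$ is attained at one of the finitely many zeros of $\sigma(x)(1-\sigma(x))-\lambda^{-1}\phi(x/\lambda)$ in $(0,\infty)$, so this reduces to a finite numerical check (and is the well-documented $\lambda=1.702$ logistic--probit approximation). Second, the exact Gaussian convolution identity: for an independent $Z\sim\mathcal N(0,1)$,
\[
\mathbb E_{U}\!\left[\Phi\!\left(\tfrac{m+U}{\lambda}\right)\right]
=\Pr\!\left(\lambda Z-U\le m\right)
=\Phi\!\left(\frac{m}{\sqrt{\lambda^2+s^2}}\right),
\]
using $\lambda Z-U\sim\mathcal N(0,\lambda^2+s^2)$.

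With these in hand I would bound the target quantity by the triangle inequality, passing through the intermediate term $\mathbb E_{U}\bigl[\Phi((m+U)/\lambda)\bigr]$:
\[
\left|\mathbb E_{U}[\sigma(m+U)]-\sigma\!\left(\tfrac{\lambda m}{\sqrt{\lambda^2+s^2}}\right)\right|
\le\mathbb E_{U}\!\left[\bigl|\sigma(m+U)-\Phi(\tfrac{m+U}{\lambda})\bigr|\right]
+\left|\Phi\!\left(\tfrac{m}{\sqrt{\lambda^2+s^2}}\right)-\sigma\!\left(\lambda\cdot\tfrac{m}{\sqrt{\lambda^2+s^2}}\right)\right|.
\]
The first term on the right is $\le\varepsilon^{\star}$, since the integrand is pointwise $\le\varepsilon^{\star}$. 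The second term equals $|\Phi(y)-\sigma(\lambda y)|$ with $y:=m/\sqrt{\lambda^2+s^2}$, and the substitution $x=\lambda y$ turns it into $|\sigma(x)-\Phi(x/\lambda)|\le\varepsilon^{\star}$. Hence the left-hand side is $\le 2\varepsilon^{\star}<0.019$, which is \eqref{eq:approx_expectation_p_hat}; the degenerate cases $z=0$ or $\tau_\theta=0$ make both sides equal and are covered by the same computation.

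The only genuinely delicate point is the constant: the bound in \eqref{eq:approx_expectation_p_hat} is precisely $2\varepsilon^{\star}$, so everything rests on establishing the sharp-enough estimate $\sup_x|\sigma(x)-\Phi(x/1.702)|<0.0095$. I would state this as a standalone lemma and prove it via the oddness-plus-critical-point argument above (it suffices to bound $\sigma'(x)-\lambda^{-1}\phi(x/\lambda)=\sigma(x)(1-\sigma(x))-\lambda^{-1}\phi(x/\lambda)$, locate its sign changes on $(0,\infty)$, and evaluate the difference there), or to cite the standard reference for the $1.702$ logistic--probit approximation. The remaining steps — the reparametrisation $m=z\,\partial_j g(\theta)$, $s^2=z^2\tau_\theta^2$, the convolution identity, and the two-step triangle inequality — are routine.
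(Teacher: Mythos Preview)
Your proposal is correct and follows essentially the same approach as the paper: both reduce to the uniform logistic--probit approximation $\sup_x|\sigma(x)-\Phi(x/1.702)|<0.0095$, apply the Gaussian convolution identity $\mathbb E[\Phi((m+U)/\lambda)]=\Phi(m/\sqrt{\lambda^2+s^2})$, and use a two-step triangle inequality through the intermediate $\mathbb E[\Phi((m+U)/\lambda)]$ to arrive at $2\times 0.0095<0.019$. Your write-up is somewhat more explicit about how to justify the $0.0095$ constant, but the argument is the same.
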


\begin{remark}\label{rmk:derivation_corrected}
   
Proposition \ref{prop:p_hat_biased} approximates Algorithm \ref{alg:vSGBD} 
 by Algorithm \ref{alg:barker_proposal} 
 with target gradients shrunk by the multiplicative factor $c_{z,\tau_\theta}<1$.

 This supports the idea that stochastic gradients have the effect of tempering the stationary distribution by a power smaller than $1$, and  suggests the strategy of multiplying them by a factor larger than $1$ to counterbalance the effect induced by the noise. In particular, multiplying $\hat\partial_j g(\theta)$ by $\alpha>1$ inflates its expectation by $\alpha$ and its variance by $\alpha^2$. It turns out that the value $\alpha=1.702(1.702^2-\tau_\theta^2z^2)^{-1/2}$ (despite not depending on $\partial_j g(\theta)$)  makes the expectation of the resulting (corrected) estimator approximately equal to $p$ with the correct partial derivative $\partial_j g(\theta)$. This is formalized in Corollary \ref{prop:unbiased_p_tilde}.  \end{remark}

Following Remark \ref{rmk:derivation_corrected}, we define the 
\textit{corrected estimator} of $p(\partial_j g(\theta), z)$ as $\tilde{p}\left(\hat{\partial}_j g(\theta), z\right)$, where for any $\delta,z\in\mathbb{R}$
\begin{equation}\label{p_tilde}
    \tilde{p}\left(\delta, z\right) :=
    \begin{cases}
        p\left(\frac{1.702}{\sqrt{1.702^2-\tau_\theta^2z^2}}\delta, z\right) &\quad \text{if } |z| < \frac{1.702}{\tau_\theta},\\
        \mathbf{1}\left(\delta z >0\right) &\quad \text{otherwise}
    \end{cases},
\end{equation}
with $\mathbf1(A)$ denoting the indicator function of the event $A$. 
When the value of $\tau_\theta$ is not too large, $\tilde{p}\left(\hat{\partial}_j g(\theta), z\right)$ is an approximately unbiased estimator of $p(\partial_j g(\theta), z)$, as stated in the following corollary.
\begin{corollary}[Approximate unbiasedness of $\tilde{p}$]\label{prop:unbiased_p_tilde}
    Assume Condition \ref{cond:normality} and $\tau_\theta < \max\{1.702/|z|,\bar\tau(\partial_j g(\theta),z)\}$, where
    \begin{equation} \label{eq:breaking_point}
    \bar\tau(\delta,z) =\left| \delta/ \Phi^{-1}\left(\left(1+\exp{(-z\delta)}\right)^{-1}\right)\right|\,
\end{equation}
and $\Phi$ denotes the standard Normal CDF.
Then
\begin{equation}\label{eq:unbiased_p_tilde}
\left|\mathbb{E}\left[\tilde{p}\left(\hat{\partial}_j g(\theta), z\right) \right]- p(\partial_j g(\theta), z) \right| < 0.019.\,
\end{equation}
\end{corollary}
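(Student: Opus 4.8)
The plan is to treat the two cases of the corrected estimator \eqref{p_tilde} separately and reduce each to Proposition \ref{prop:p_hat_biased}. First I would normalise signs: using the symmetry $(\partial_j g(\theta),z)\mapsto(-\partial_j g(\theta),-z)$, under which $p(\partial_j g(\theta),z)$, the breaking point $\bar\tau$ in \eqref{eq:breaking_point}, the threshold $1.702/|z|$, and (by symmetry of the Gaussian noise in Condition \ref{cond:normality}) the law of $\tilde p(\hat{\partial}_j g(\theta),z)$ are all invariant, I may assume $z>0$; and since the further map $\partial_j g(\theta)\mapsto-\partial_j g(\theta)$ (with $z$ fixed) reflects both $\mathbb{E}[\tilde p(\hat{\partial}_j g(\theta),z)]$ and $p(\partial_j g(\theta),z)$ about $1/2$ while leaving $\bar\tau$ unchanged, I may also assume $\delta:=\partial_j g(\theta)\ge 0$. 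If $\delta=0$, both sides of \eqref{eq:unbiased_p_tilde} equal $1/2$; so assume $\delta>0$. The hypothesis then splits according to whether $\tau_\theta<1.702/|z|$, which is exactly the condition activating the first branch of \eqref{p_tilde}.

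\emph{First branch ($\tau_\theta<1.702/|z|$).} Here $\tilde p(\hat{\partial}_j g(\theta),z)=p(\alpha\,\hat{\partial}_j g(\theta),z)$ with $\alpha:=1.702\,(1.702^2-\tau_\theta^2 z^2)^{-1/2}\ge 1$, the radicand being positive precisely by the case assumption. I would note that $\alpha\,\hat{\partial}_j g(\theta)$ is Gaussian with mean $\alpha\delta$ and variance $(\alpha\tau_\theta)^2$, hence satisfies Condition \ref{cond:normality} with $(\partial_j g(\theta),\tau_\theta)$ replaced by $(\alpha\delta,\alpha\tau_\theta)$; applying Proposition \ref{prop:p_hat_biased} in this form gives $\big|\mathbb{E}[\tilde p(\hat{\partial}_j g(\theta),z)]-p(c'\alpha\delta,z)\big|<0.019$ with $c'=1.702\,(1.702^2+\alpha^2\tau_\theta^2 z^2)^{-1/2}$. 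Substituting $\alpha^2=1.702^2/(1.702^2-\tau_\theta^2 z^2)$ and simplifying yields $c'\alpha=1$, so $p(c'\alpha\delta,z)=p(\delta,z)$ and \eqref{eq:unbiased_p_tilde} follows on this branch. This realises the cancellation anticipated in Remark \ref{rmk:derivation_corrected}: $\alpha$ is chosen so that the inflation it produces exactly undoes the shrinkage factor $c_{z,\tau_\theta}$ of Proposition \ref{prop:p_hat_biased}.

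\emph{Second branch ($\tau_\theta\ge 1.702/|z|$).} Now the hypothesis forces $\tau_\theta<\bar\tau(\delta,z)$, and since $z>0$ we have $\tilde p(\hat{\partial}_j g(\theta),z)=\mathbf 1(\hat{\partial}_j g(\theta)>0)$, so Condition \ref{cond:normality} gives $\mathbb{E}[\tilde p(\hat{\partial}_j g(\theta),z)]=\Pr(\delta+\eta_\theta>0)=\Phi(\delta/\tau_\theta)$. It remains to bound $|\Phi(\delta/\tau_\theta)-p(\delta,z)|$. Since $\delta>0$ and $z>0$, $p(\delta,z)>1/2$, so $\Phi^{-1}(p(\delta,z))>0$ and \eqref{eq:breaking_point} reads $\bar\tau(\delta,z)=\delta/\Phi^{-1}(p(\delta,z))$; the inequality $\tau_\theta<\bar\tau(\delta,z)$ therefore rearranges to $\Phi^{-1}(p(\delta,z))<\delta/\tau_\theta$, i.e.\ $p(\delta,z)<\Phi(\delta/\tau_\theta)$. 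On the other hand $z\ge 1.702/\tau_\theta$ together with monotonicity of $z\mapsto p(\delta,z)$ (valid as $\delta>0$) gives $p(\delta,z)\ge p(\delta,1.702/\tau_\theta)=(1+\exp(-1.702\,\delta/\tau_\theta))^{-1}$. Thus $p(\delta,z)$ is trapped between the logistic value $(1+\exp(-1.702\,\delta/\tau_\theta))^{-1}$ and $\Phi(\delta/\tau_\theta)$, so $0\le \Phi(\delta/\tau_\theta)-p(\delta,z)\le \Phi(\delta/\tau_\theta)-(1+\exp(-1.702\,\delta/\tau_\theta))^{-1}$, which is at most the maximal error of the logistic approximation to the standard Gaussian CDF — below $0.0095$, hence below $0.019$ (the constant $0.019$ in Proposition \ref{prop:p_hat_biased} being itself obtained from two applications of this approximation). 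This establishes \eqref{eq:unbiased_p_tilde}.

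The main obstacle I anticipate is the second branch: reading the ordering $p(\delta,z)<\Phi(\delta/\tau_\theta)$ off the definition of $\bar\tau$ requires care with signs once the normalisation is undone (before it, $\bar\tau$ carries absolute values and $\Phi^{-1}$ flips sign with $\delta$), and the key insight is that the crude indicator estimate $\mathbf 1(\hat{\partial}_j g(\theta)>0)$ is sandwiched between two logistic quantities so that the standard Gaussian--logistic bound closes the gap. The first branch is essentially bookkeeping, once one observes that $\tilde p$ is literally the Proposition \ref{prop:p_hat_biased} surrogate evaluated at a rescaled Gaussian and that the multiplicative factor $\alpha$ was designed for precisely this cancellation.
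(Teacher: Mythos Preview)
Your proof is correct and follows essentially the same route as the paper: the same case split on whether $\tau_\theta<1.702/|z|$, the same reduction of the first branch to Proposition~\ref{prop:p_hat_biased} applied to the rescaled gradient (the paper equivalently rescales $z$ instead of $\hat\delta$, which is the same thing since $p$ depends only on the product), and in the second branch the same sandwich built from the definition of $\bar\tau$ and the logistic--normal bound. The only cosmetic differences are that you reduce signs up front by symmetry (the paper instead handles $z\partial_j g(\theta)\gtrless 0$ separately at the end), and your sandwich traps $p(\delta,z)$ between $(1+\exp(-1.702\,\delta/\tau_\theta))^{-1}$ and $\Phi(\delta/\tau_\theta)$ whereas the paper traps $\Phi(\delta/\tau_\theta)$ between $\Phi(\delta/\bar\tau)=p(\delta,z)$ and $\Phi(\delta z/1.702)$; these are two rearrangements of the same three ingredients and both yield the sharper constant $0.0095$ on this branch.
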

Replacing the naive estimate $p(\hat{\partial}_j g(\theta), z)$ used in v-SGBD with $\tilde{p}(\hat{\partial}_j g(\theta), z)$ leads to 
what we refer to as \emph{corrected SGBD} (c-SGBD).
Note, however, that the corrected estimator requires knowledge of the variance of the gradient noise, $\tau_\theta$. In practical applications, $\tau_\theta$ must be estimated.
To do that we adopt a simple online sample variance estimator, leading to the version of c-SGBD described in Algorithm \ref{alg:cSGBD}. 
\begin{algorithm}
\SetAlgoLined
\SetKwInOut{Input}{input}
    \Input{$\theta^{(0)}\in\mathbb{R}^d, \sigma>0, \beta\in(0,1), \{\hat{\tau}_{j}^{(0)}\}_{j=1,\dots,d}$}   
 \For{t =1,\dots, T}{
 Draw $\mathcal{S}_n \subset \{1, \dots, N\}$ uniformly at random\;
 \For{j=1, \dots, d \tcp*{Can be parallelized}}{
 Compute $\hat{\partial}_j g(\theta^{(t-1)})$ using \eqref{eq:stochastic_gradient}\;

Update $\hat\tau_j^{(t)} \gets (1-\beta)\hat\tau_j^{(t-1)} + \beta \sqrt{\sum_{i\in\mathcal{S}_n}\frac{(\partial_j g_i(\theta^{(t-1)}) - \frac{1}{n}\sum_{i\in\mathcal{S}_n}\partial_j g_i(\theta^{(t-1)}))^2}{n-1}}$\;
 Draw $w_j^{(t)}\sim N(\sigma, (0.1\sigma)^2)$\;
 Set $b_j^{(t)} = 1$ with probability $\tilde{p}\left(\hat{\partial}_j g(\theta^{(t-1)}), w_j^{(t)}\right)$, where $\tau_\theta$ in \eqref{p_tilde} is replaced by $\hat\tau_j^{(t)}$, otherwise $b_j^{(t)} = -1$\;
 Update $\theta_j^{(t)} \gets \theta_j^{(t-1)} + b_j^{(t)}w_j^{(t)}$ \;}
 }
  \caption{Corrected Stochastic Gradient Barker Dynamics (c-SGBD)}
  \label{alg:cSGBD}
\end{algorithm}\!

\begin{remark}
While Condition \ref{cond:normality} is not satisfied in most practical scenarios, it allows to quantify the bias and devise estimators that 
can reduce it when normality holds approximately. See for example Figure \ref{fig:p_hat_simulation}. 
\end{remark}

\begin{remark}
    The results in Proposition \ref{prop:p_hat_biased} and Corollary \ref{prop:unbiased_p_tilde} are based on the  bound 
    $\max_x \left|F(x) - \Phi \left(x/1.702\right)\right| < 0.0095$, see e.g.\ \citet[Section~3.2]{logistic_normal_approx_09}. Here $F(\cdot)$ is the CDF of the logistic distribution and $\Phi(\cdot)$ is the one of the standard Normal distribution.
\end{remark}

\subsection{Noise tolerance of SGBD}\label{sec:noise_tolerance}
There is a maximum amount of noise 
that can be tolerated while still being able to estimate 
$p\left(\partial_j g(\theta), z\right)$
from $\hat{\partial}_j g(\theta)$.
In particular, even under Condition \ref{cond:normality}, if $\tau_\theta$ is too large it is not possible to have 

unbiased estimators of $p\left(\partial_j g(\theta), z\right)$, by which we mean functions $\hat{p}(\hat \delta, z;\tau_{\theta})$ taking values in $[0,1]$ such that
\begin{align}\label{eq:existence_estimator}
        \mathbb{E}\left[\hat{p}(\hat \delta, z;\tau_{\theta})\right]= ~ &p(\delta,z)
        &\hbox{for all }\delta\in\mathbb{R}
    \end{align}
    with expectation taken under $\hat \delta\sim N(\delta,\tau_{\theta}^2)$.

\begin{proposition}[Noise tolerance]
\label{prop:breaking_point}
Assume Condition \ref{cond:normality} and $\tau_\theta > \tau^*$, where $$\tau^*
=
\inf_{\delta\in\mathbb{R}}\bar\tau(\delta,z)
=
4\phi(0)/|z|,\,$$ with $\phi$ denoting the standard Normal PDF.
Then there exist no unbiased estimator of $p\left(\partial_j g(\theta), z\right)$.
\end{proposition}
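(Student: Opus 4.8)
The plan is to show that if $\tau_\theta$ is large enough, then the map $\delta \mapsto p(\delta,z)$ simply cannot be expressed as $\mathbb{E}_{\hat\delta\sim N(\delta,\tau_\theta^2)}[\hat p(\hat\delta,z;\tau_\theta)]$ for any $[0,1]$-valued $\hat p$, by exploiting a rigidity property of Gaussian convolution: if $\hat p$ takes values in $[0,1]$, then $\delta\mapsto\mathbb{E}[\hat p(\hat\delta,z;\tau_\theta)]$ is the convolution of a bounded function with a Gaussian density, hence is real-analytic and, more importantly, has its derivatives controlled in a way that a genuine $[0,1]$-valued function convolved with $\phi_{\tau_\theta}$ must satisfy. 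The natural obstruction is a slope bound: since $\hat p\in[0,1]$, one has (by differentiating under the integral and using that $\int |\phi_{\tau_\theta}'(u)|\,du = 2\phi_{\tau_\theta}(0) = 2/(\tau_\theta\sqrt{2\pi})$) the Lipschitz-type estimate
\[
\left|\frac{d}{d\delta}\,\mathbb{E}[\hat p(\hat\delta,z;\tau_\theta)]\right|
\le \int |\phi_{\tau_\theta}'(u)|\,du \cdot \sup|\hat p|
= \frac{2\phi(0)}{\tau_\theta}.
\]
Wait — this gives $2\phi(0)/\tau_\theta$, whereas the claimed threshold is $\tau^* = 4\phi(0)/|z|$. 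The discrepancy is a factor of $2|z|$, so the correct comparison is against the maximal slope of $\delta\mapsto p(\delta,z)$ itself, which is attained at $\delta=0$ where $\partial_\delta p(\delta,z)\big|_{\delta=0} = z\,p(1-p)\big|_{\delta=0} = z/4$. So the argument is: the left side must equal $p(\delta,z)$ for all $\delta$, hence in particular their maximal slopes must match, giving $|z|/4 \le 2\phi(0)/\tau_\theta$, i.e. $\tau_\theta \le 8\phi(0)/|z|$. That is off by a factor of $2$ from $\tau^* = 4\phi(0)/|z|$, so the crude sup-slope bound is not tight enough and one needs the sharper characterization already encoded in $\bar\tau(\delta,z)$.

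The better route, which I would actually pursue, is to connect directly to $\bar\tau$ as defined in \eqref{eq:breaking_point}. The key observation is that $p(\delta,z) = F(z\delta)$ where $F$ is the logistic CDF, and the Gaussian-deconvolution obstruction should be phrased in terms of the function $\Phi^{-1}(p(\delta,z)) = \Phi^{-1}(F(z\delta))$. The heuristic behind $\bar\tau$ is that $\tilde p$ in \eqref{p_tilde} works by writing the corrected flip probability as $p$ evaluated at an inflated gradient, and the inflation factor blows up precisely when the target $p$-value is so extreme that even an infinite inflation of the mean cannot, after Gaussian smoothing, reproduce it — equivalently, when $\Phi(\delta\cdot z/1.702 \cdot c^{-1})$ would need $c\to 0$. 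Concretely, I expect the proof to run: (i) reduce \eqref{eq:existence_estimator} to the question of whether $p(\delta,z)$, as a function of $\delta$, lies in the range of the Gaussian-convolution operator $T_{\tau_\theta}$ acting on $[0,1]$-valued functions; (ii) show that a necessary condition is a pointwise comparison of the form $p(\delta,z)\le \Phi(\delta/\tau_\theta) $-type inequalities — more precisely, that there is no $[0,1]$-valued $\hat p$ with $\mathbb{E}_{\hat\delta\sim N(\delta,\tau^2)}[\hat p(\hat\delta)] = q(\delta)$ unless $q$ is "no steeper than" the extreme case $\hat p = \mathbf{1}(\cdot > \delta_0)$, whose convolution is exactly $\delta\mapsto \Phi((\delta-\delta_0)/\tau)$; (iii) conclude that since every shifted-indicator realizes the profile $\Phi(\cdot/\tau)$ up to translation, and any $[0,1]$ function gives a mixture (average) of such, the achievable functions $q$ are exactly those expressible as averages $\int \Phi((\delta-u)/\tau)\,d\nu(u)$ over probability-ish measures; a monotone $q$ like $p(\cdot,z)$ is achievable iff its "spread" dominates that of $\Phi(\cdot/\tau)$, which fails once $\tau>\tau^*$.

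The cleanest rigorous formulation I would write down: by Condition~\ref{cond:normality}, $\mathbb{E}[\hat p(\hat\delta,z;\tau_\theta)] = (\hat p(\cdot,z;\tau_\theta)*\phi_{\tau_\theta})(\delta)$. For this to equal $p(\delta,z) = F(z\delta)$ for all $\delta$, apply $\Phi^{-1}$: we would need $\hat p*\phi_{\tau_\theta} = F(z\cdot)$, and the crucial fact (this is where I expect the main work) is that a function of the form $g*\phi_{\tau}$ with $g\in[0,1]$ cannot approach $0$ or $1$ faster than $\Phi(\cdot/\tau)$ does — quantitatively, $\Phi^{-1}(g*\phi_\tau(\delta))$ has derivative bounded by $1/\tau$ in a suitable sense, by a log-concavity/Gaussian-isoperimetry argument (the Gaussian isoperimetric inequality says exactly that $\Phi^{-1}\circ(g*\phi_\tau)$ is $1/\tau$-Lipschitz when $g$ is $\{0,1\}$-valued, and this extends to $[0,1]$-valued $g$ by the Ehrhard/Borell functional form). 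Then $\Phi^{-1}(F(z\delta))$ must be $(1/\tau_\theta)$-Lipschitz in $\delta$; computing its maximal slope and setting it $\le 1/\tau_\theta$ yields exactly $\tau_\theta \le \inf_\delta \bar\tau(\delta,z) = \tau^*$, contradicting $\tau_\theta>\tau^*$. The main obstacle is establishing the Gaussian-isoperimetric Lipschitz bound on $\Phi^{-1}\circ(g*\phi_\tau)$ for general $g\in[0,1]$ (the $\{0,1\}$ case is classical; the convex-combination extension follows from Ehrhard's inequality, but citing/proving it carefully is the delicate point), together with verifying the identity $\inf_\delta\bar\tau(\delta,z) = 4\phi(0)/|z|$, which is a one-variable calculus exercise but needs the Mills-ratio-type asymptotics of $\Phi^{-1}$ near the endpoints to confirm the infimum is attained (or approached) at the right place.
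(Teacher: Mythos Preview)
Your approach is correct but genuinely different from the paper's. The paper argues by first invoking Proposition~\ref{prop:optimality_p_bar} (optimality of the extreme estimator $\bar p(\hat\delta,z)=\mathbf{1}(\hat\delta z>0)$ among symmetric estimators), then showing that when $\tau_\theta>\tau^*$ there exists $\bar\delta$ with $\bar\tau(\bar\delta,z)<\tau_\theta$, so that even $\mathbb{E}[\bar p]=\Phi(|\bar\delta|\,\mathrm{sgn}(z\bar\delta)/\tau_\theta)$ is shrunk toward $0.5$ relative to $p(\bar\delta,z)$; hence no symmetric estimator can be unbiased, and finally completeness of the Gaussian location family forces any unbiased estimator to be symmetric. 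Your route instead establishes the functional inequality that $\Phi^{-1}\circ(g*\phi_\tau)$ is $(1/\tau)$-Lipschitz for every $g\in[0,1]$, and compares directly against the maximal slope of $\delta\mapsto\Phi^{-1}(p(\delta,z))$, which is $|z|/(4\phi(0))=1/\tau^*$. This is more self-contained: it avoids the forward reference to Proposition~\ref{prop:optimality_p_bar} and the separate completeness step, handling all $[0,1]$-valued estimators in one stroke. Conversely, the paper's route makes the role of $\bar p$ as the extremizer explicit, which feeds naturally into Section~\ref{subsec:e_sgbd}.

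Two remarks on execution. First, the Lipschitz bound you call ``the delicate point'' is actually elementary and does not need Ehrhard: writing $q'(\delta)=\tau^{-1}\mathbb{E}[W\,g(\delta+\tau W)]$ for $W\sim N(0,1)$, a Neyman--Pearson argument shows that over $h\in[0,1]$ with $\mathbb{E}[h(W)]=c$ fixed, $|\mathbb{E}[Wh(W)]|$ is maximised by $h=\mathbf{1}(W>\Phi^{-1}(1-c))$, giving $|\mathbb{E}[Wh(W)]|\le\phi(\Phi^{-1}(c))=I(c)$ and hence $|q'|\le I(q)/\tau$. Second, your final step equates the supremum of the \emph{derivative} of $\Phi^{-1}(F(z\cdot))$ with the reciprocal of $\inf_\delta\bar\tau(\delta,z)$, but the latter is defined through \emph{secant} slopes $|\delta/\Phi^{-1}(F(z\delta))|$. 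Both quantities equal $|z|/(4\phi(0))$ because both extrema are attained at $\delta=0$ (equivalently, $\Phi^{-1}\circ F$ is concave on $(0,\infty)$); you should state and check this explicitly rather than treating the two as interchangeable.
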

Note that $[0,1]$-valued unbiased estimators are what is needed in order to implement a stochastic gradient scheme that introduce no further bias in Algorithm \ref{alg:barker_proposal}.
Thus, Proposition \ref{prop:breaking_point} identifies a noise level, $\tau^*$, beyond which it is not possible to implement SGBD without introducing further bias due to stochastic gradients. 
The value of $\tau^*=
4\phi(0)/|z|
\approx
1.596/|z|$ is related to, though slightly smaller than, the upper bound on $\tau_\theta $ we required in Corollary \ref{prop:unbiased_p_tilde} to guarantee approximate unbiasedness of $\tilde{p}(\hat{\partial}_j g(\theta), z)$.
Intuitively, since only approximate unbiasedness is required in Corollary \ref{prop:unbiased_p_tilde}, one can afford slightly larger values of $\tau_\theta$ therein.

We can re-interpret Proposition \ref{prop:breaking_point} in terms of upper bound to the algorithmic step-size: given a noise level $\tau_\theta$, the largest increment one can propose in Algorithm \ref{alg:cSGBD} without introducing further bias due to stochastic gradients is $|z|\leq 1.596/\tau_\theta$ (or $|z|\leq 1.702/\tau_\theta$ if a small controllable bias is allowed as in Corollary \ref{prop:unbiased_p_tilde}).
These results could also be used to devise adaptive versions of SGBD where $\sigma$ is tuned on-the-fly so that $|z|\leq 1.702/\tau_\theta$ occurs with high-probability, in a spirit similar to e.g.\ \citet{sgnht}.
We leave such extensions to future work.

Figure \ref{fig:p_hat_simulation} numerically illustrates these phenomena. 
There we plot $p\left({\partial}_j g(\theta), z\right)$, $\mathbb{E}[p(\hat{\partial}_j g(\theta), z)]$ and $\mathbb{E}[\tilde{p}(\hat{\partial}_j g(\theta), z)]$ as a function of $z$ when $\pi(\theta)$ is the posterior distribution in a high-dimensional logistic regression model with real data and randomly chosen values of $\theta$ and $j$. 
See the supplement for more details on the model and generation of $\theta$ and $j$. 
We observe the value of $\mathbb{E}[p(\hat{\partial}_j g(\theta), z)]$
being close to $p\left({\partial}_j g(\theta), z\right)$ when $|z|$ is small, 
while as $|z|$ increases the shrinkage effect discussed in Proposition \ref{prop:bias_p_hat} becomes evident.
The corrected estimator $\tilde{p}$ successfully reduces the bias up until the tolerance level $|z|\approx 1.702/\tau_\theta$, after which the signal in the stochastic gradient is too weak to successfully estimate the true value of $p$ (Corollary \ref{prop:unbiased_p_tilde} and Proposition \ref{prop:breaking_point}).
\begin{figure}
\includegraphics[width=\textwidth]{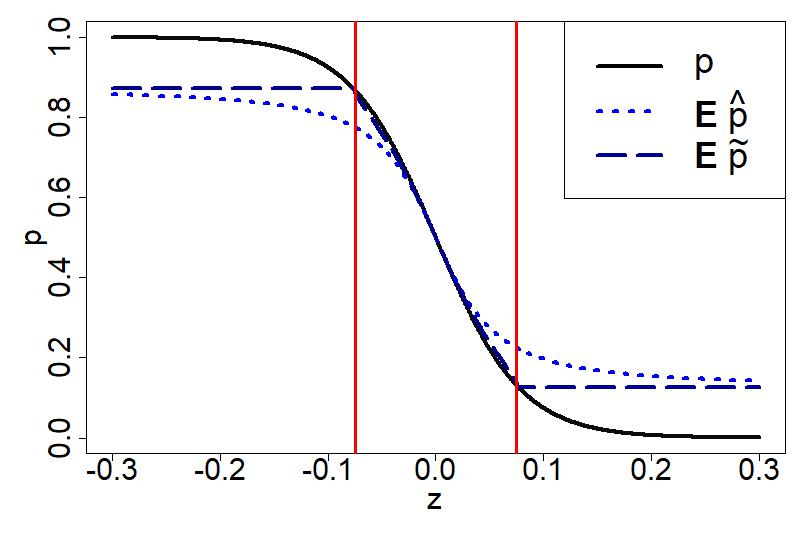}
     \vspace{.3in}
\caption{Shrinkage effect and bias correction. Plot of $p({\partial}_j g(\theta), z)$ (black line; $p$) and Monte Carlo estimates of $\mathbb{E}[p(\hat{\partial}_j g(\theta), z)]$ (dotted blue line; $\textbf{E}\hat{p}$), and $\mathbb{E}[\tilde p(\hat{\partial}_j g(\theta), z)]$ (dashed dark blue line; $\textbf{E}\tilde{p}$) versus the proposed increment $z$; for a logistic regression example with real data (see supplement for more details). Vertical red lines indicate $-1.702/\tau_\theta$ and $1.702/\tau_\theta$.
}
\label{fig:p_hat_simulation}
\end{figure}
%\begin{remark}
We refer to the supplement for a comparison between the tolerance level of SGBD and SGLD.

\subsection{Extreme SGBD}
\label{subsec:e_sgbd}

Sections \ref{subsec:c-sgbd} and \ref{sec:noise_tolerance} show that, 
under Condition \ref{cond:normality}, one can implement a stochastic-gradient version of the unadjusted Barker proposal without introducing significant bias. 
Doing that requires 
 $\tau_\theta<\max \left\{1.702/|z|, \bar \tau (\partial_j g(\theta), z)\right\}$.
This can be achieved either by reducing the stepsize $\sigma$ (which reduces $|z|$) or by increasing the minibatch size $n$ (which reduces $\tau_\theta$).   
However, in many settings, users prefer to run SG-MCMC schemes with larger stepsize and smaller mini-batch size to speed-up convergence and reduce computational cost, even if this introduces non-negligible bias.
In this section we thus focus on the case of larger values of $\tau_\theta$ 

and identify the optimal estimator of $p\left({\partial}_j g(\theta), z\right)$ in such settings, which turns out to be
$$\bar{p}(\delta, z) := \mathbf{1}\left(\delta z >0\right)\,,$$
i.e.\ $\bar{p}(\delta, z)$ equals $1$ when $\delta$ and $z$ have the same sign and $0$ otherwise.
We refer to $\bar p$ as \emph{extreme} estimator. 
Note that $\bar p$ coincides with the \textit{corrected}-estimator, $\tilde p$, whenever $\tau_\theta \geq 1.702/|z|$.
We will prove optimality of $\bar p$ within the following class of estimators.

\begin{definition}[Symmetric estimator]
\label{def:symmetric_estimator}
 An estimator $\hat{p}(\hat \delta, z)=\hat{p}(\hat \delta, z;\tau_\theta)$ of $p\left({\partial}_j g(\theta), z\right)$ is said to be symmetric if 
\begin{align}\label{eq:symmetric_estimator}
  \hat{p}(\hat \delta, z) + \hat{p}(-\hat \delta, z) &= 1
  &
  \hbox{for all }(\hat \delta,z)\in\mathbb{R}^2
 %(z, \hat \delta) \in \mathbb{R}^2.\!
   \end{align}
\end{definition}
Condition \eqref{eq:symmetric_estimator} requires $\hat{p}$ to have a symmetric behaviour about 0.5 when the sign of the stochastic gradient is flipped, i.e.\ $\hat{p}(\hat \delta, z) - 0.5 = 0.5 - \hat{p}(-\hat \delta, z)$. 

The latter is a natural requirement for an estimator of $p({\partial}_j g(\theta), z)$ to be of practical interest,
otherwise the resulting algorithm would be unjustifiably biased towards the left or right side of the current value of $\theta_j$.

We make the following assumption on stochastic gradients, which is strictly  weaker than Condition \ref{cond:normality}. 
\begin{condition}[Unimodality of noise distribution]\label{cond:unimodality}
The random variable $\eta_{\theta}$ admits a density function $f_{\theta}(x)$ with respect to the Lebesgue measure and $f_{\theta}(x)$
is non-decreasing for $x\leq 0$ and non-increasing otherwise. 
\end{condition}

Relative to Condition \ref{cond:normality}, Conditions \ref{cond:symmetry} and \ref{cond:unimodality} accomodate for more general scenarios, such as heavier-tailed distributions of $\eta_{\theta}$.

Under these conditions, any symmetric estimator induces more shrinkage towards $0.5$ than $\bar{p}$ as we now show. 
\begin{proposition}\label{prop:optimality_p_bar}
Under Conditions \ref{cond:symmetry} and \ref{cond:unimodality}, we have
    \begin{equation}
        \left|\mathbb E\left[\bar{p}\left(\hat{\partial}_j g(\theta), z\right)\right] - 0.5\right| \geq \left|\mathbb E\left[\hat{p}\left(\hat{\partial}_j g(\theta), z\right)\right] - 0.5\right|.\,
    \end{equation}
for any symmetric estimator $\hat{p}$, with strict inequality when $\hat p\neq \bar p$ and $\partial_j g(\theta)z \neq 0$. 
\end{proposition}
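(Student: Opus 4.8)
The plan is to reduce the statement to a one-dimensional integral inequality and exploit a likelihood-ratio-type monotonicity coming from unimodality. By the sign symmetry of $\bar p$ and of the constraint \eqref{eq:symmetric_estimator} it suffices to treat $z>0$; and when $\partial_j g(\theta)=0$ or $z=0$ the right-hand side equals $1/2$ and the inequality is trivial because $\hat p\in[0,1]$, so I may also assume $\delta:=\partial_j g(\theta)>0$, the case $\delta<0$ being identical after a global sign flip. Write $\hat\delta=\delta+\eta$ with $\eta=\eta_\theta$ having the even, unimodal density $f=f_\theta$ guaranteed by Conditions \ref{cond:symmetry} and \ref{cond:unimodality}, and CDF $F$; and set $G(x):=\hat p(x,z)-\tfrac12$, so that $G$ is odd by \eqref{eq:symmetric_estimator} and $|G|\le\tfrac12$. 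Since $z>0$, $\bar p(x,z)=\mathbf1(x>0)$, hence $\bar G(x):=\bar p(x,z)-\tfrac12=\tfrac12\,\mathrm{sign}(x)$.

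The first step is to fold the expectation about the origin: substituting $u=\delta+x$, splitting the integral at $u=0$, and using that $G$ (resp.\ $\bar G$) is odd while $f$ is even, one gets
\begin{equation*}
\mathbb E\big[\hat p(\hat\delta,z)\big]-\tfrac12=\int_0^\infty G(u)\,w(u)\,du,\qquad \mathbb E\big[\bar p(\hat\delta,z)\big]-\tfrac12=\tfrac12\int_0^\infty w(u)\,du=F(\delta)-\tfrac12>0,
\end{equation*}
where $w(u):=f(u-\delta)-f(u+\delta)$. The second, crucial, step is the observation that for $u,\delta>0$ one has $|u-\delta|\le u+\delta$, so symmetry together with unimodality of $f$ (Conditions \ref{cond:symmetry} and \ref{cond:unimodality}) forces $w(u)\ge 0$ on $(0,\infty)$.

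Combining the two steps with the uniform bound $|G|\le\tfrac12$,
\begin{equation*}
\Big|\mathbb E\big[\hat p(\hat\delta,z)\big]-\tfrac12\Big|\le\int_0^\infty|G(u)|\,w(u)\,du\le\tfrac12\int_0^\infty w(u)\,du=\Big|\mathbb E\big[\bar p(\hat\delta,z)\big]-\tfrac12\Big|,
\end{equation*}
which is the claimed inequality. For the strict version when $\delta z\neq 0$ and $\hat p\neq\bar p$: equality above would require $G(u)=\tfrac12$ for a.e.\ $u$ with $w(u)>0$, and then oddness of $G$ pins down $\hat p(\cdot,z)$ to agree with $\bar p(\cdot,z)$ on $\{u:f(u-\delta)\neq f(u+\delta)\}$; when the noise density is positive on $\mathbb R$ (in particular under the Gaussian Condition \ref{cond:normality}) this set is $\mathbb R\setminus\{0\}$, contradicting $\hat p\neq\bar p$, so the inequality is strict.

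I expect the only delicate points to be getting the folding identity exactly right (the parity bookkeeping under $u\mapsto -u$) and the strictness argument; the monotonicity $w\ge 0$ is an immediate consequence of Condition \ref{cond:unimodality}, and everything else is the triangle inequality plus $|G|\le\tfrac12$. A minor subtlety is that for noise densities with flat stretches $\{w>0\}$ may be a proper subset of $\mathbb R\setminus\{0\}$, so the cleanest form of the strict-inequality statement assumes the noise density is everywhere positive, or else one argues separately that a symmetric $\hat p\neq\bar p$ must already differ from $\bar p$ on $\{w>0\}$.
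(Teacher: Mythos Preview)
Your argument for the non-strict inequality is correct and follows essentially the same route as the paper: fold the integral over $\mathbb{R}$ to $(0,\infty)$ via the oddness of $G(\cdot)=\hat p(\cdot,z)-\tfrac12$ and the evenness of $f_\theta$, then use unimodality to get $w(u)=f(u-\delta)-f(u+\delta)\ge0$ for $u>0$. The paper instead writes the difference $\mathbb E[\bar p]-\mathbb E[\hat p]=\int_0^\infty \hat p(-\epsilon,z)\,w(\epsilon)\,d\epsilon$ and argues about its sign; your use of $|G|\le\tfrac12$ together with the triangle inequality is a slightly cleaner way to land directly on the absolute-value statement, but the core mechanism is identical.

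Your strictness analysis has a gap. Equality in the chain $\bigl|\int_0^\infty G\,w\bigr|\le\int_0^\infty|G|\,w\le\tfrac12\int_0^\infty w$ forces $G$ to have constant sign \emph{and} $|G|=\tfrac12$ a.e.\ on $\{w>0\}$; that is, $G\equiv\tfrac12$ \emph{or} $G\equiv-\tfrac12$ there. You treat only the first alternative. The second corresponds to the symmetric estimator $\hat p(u,z)=\mathbf{1}(uz<0)$ (set $\hat p(0,z)=\tfrac12$), for which $\mathbb E[\hat p]-\tfrac12=-(F(\delta)-\tfrac12)$ and hence $|\mathbb E[\hat p]-\tfrac12|=|\mathbb E[\bar p]-\tfrac12|$ even though $\hat p\neq\bar p$ and $\delta z\neq0$. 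The paper's proof does not close this case either (it controls $\mathbb E[\bar p]-\mathbb E[\hat p]$ rather than the absolute deviations from $\tfrac12$), so this is a shared subtlety rather than an error specific to your approach; your remark about flat stretches of $f$ is a separate, smaller issue.
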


Since $\hat{p}$ is always biased towards $0.5$ for large $\tau_\theta$, Proposition \ref{prop:optimality_p_bar} implies that in such cases $\bar{p}$ achieves minimal bias. 

We formally state this under Condition \ref{cond:normality}, quantifying how large $\tau_\theta$ needs to be.
\begin{corollary}\label{cor:optimality_p_bar}
Assume Condition \ref{cond:normality}, $\partial_j g(\theta)z \neq 0$, and $\tau_{\theta}>\bar\tau(\partial_j g(\theta),z)$, with $\bar\tau$ defined in \eqref{eq:breaking_point}.Then
\begin{align*}
    &\left|p(\partial_j g(\theta), z) - \mathbb E\left[\bar{p}\left(\hat{\partial}_j g(\theta), z\right)\right]\right| \\ &<
    \left|p(\partial_j g(\theta), z) -  \mathbb E\left[\hat{p}\left(\hat{\partial}_j g(\theta), z\right)\right]\right|
\end{align*}
for any symmetric estimator $\hat{p}\neq \bar p$.
\end{corollary}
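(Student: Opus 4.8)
The plan is to derive the corollary from Proposition~\ref{prop:optimality_p_bar} by converting its statement about distances to $0.5$ into one about distances to $p(\partial_j g(\theta),z)$. First I would note that a centred Gaussian is both symmetric about $0$ and unimodal, so Condition~\ref{cond:normality} implies Conditions~\ref{cond:symmetry} and~\ref{cond:unimodality}; since moreover $\partial_j g(\theta)z\neq 0$, Proposition~\ref{prop:optimality_p_bar} applies and yields, for every symmetric $\hat p\neq\bar p$,
\[
\left|\mathbb E\left[\hat p\left(\hat{\partial}_j g(\theta),z\right)\right]-0.5\right|
\;<\;
\left|\mathbb E\left[\bar p\left(\hat{\partial}_j g(\theta),z\right)\right]-0.5\right|.
\]
It then remains to locate $\mathbb E[\bar p(\hat{\partial}_j g(\theta),z)]$ precisely, and the key claim is that the hypothesis $\tau_\theta>\bar\tau(\partial_j g(\theta),z)$ forces this quantity to lie strictly between $0.5$ and $p(\partial_j g(\theta),z)$.

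To prove that claim, write $\delta=\partial_j g(\theta)$ and reduce, via the sign flip $z\mapsto-z$ (which sends $p(\delta,z)$ to $1-p(\delta,z)$, $\bar p(\hat{\partial}_j g(\theta),z)$ to $1-\bar p(\hat{\partial}_j g(\theta),z)$ almost surely, leaves $\bar\tau(\delta,z)$ invariant, and is a bijection of the set of symmetric estimators), to the case $\delta z>0$, so that $p(\delta,z)=(1+e^{-z\delta})^{-1}>0.5$. Since $\bar p(\hat\delta,z)=\mathbf 1(\hat\delta z>0)$ and $\hat{\partial}_j g(\theta)=\delta+\eta_\theta\sim N(\delta,\tau_\theta^2)$, a direct computation gives $\mathbb E[\bar p(\hat{\partial}_j g(\theta),z)]=\Phi(|\delta|/\tau_\theta)>\Phi(0)=0.5$. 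On the other hand $\bar\tau(\delta,z)=|\delta|/\Phi^{-1}(p(\delta,z))$ with positive denominator, so $\tau_\theta>\bar\tau(\delta,z)$ is equivalent to $|\delta|/\tau_\theta<\Phi^{-1}(p(\delta,z))$, and applying the increasing function $\Phi$ gives $\mathbb E[\bar p(\hat{\partial}_j g(\theta),z)]=\Phi(|\delta|/\tau_\theta)<p(\delta,z)$. Hence $0.5<\mathbb E[\bar p]<p(\delta,z)$.

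Finally I would combine the two facts. Fix a symmetric $\hat p\neq\bar p$ and abbreviate $p=p(\delta,z)$, $\bar e=\mathbb E[\bar p(\hat{\partial}_j g(\theta),z)]$, $\hat e=\mathbb E[\hat p(\hat{\partial}_j g(\theta),z)]$. If $\hat e\geq 0.5$, the inequality from Proposition~\ref{prop:optimality_p_bar} gives $\hat e<\bar e$, so $0<p-\bar e<p-\hat e$; if $\hat e<0.5$, then $|p-\hat e|=p-\hat e>p-0.5>p-\bar e=|p-\bar e|>0$ because $\bar e>0.5$. Either way $|p-\bar e|<|p-\hat e|$, which is the asserted inequality, and the case $\delta z<0$ follows identically with all inequalities reversed (or by the $z\mapsto-z$ reduction already used). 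I expect the only mildly delicate step to be this final case split: Proposition~\ref{prop:optimality_p_bar} controls only $|\hat e-0.5|$, so one must separately verify that when $\hat e$ falls on the opposite side of $0.5$ from $p$ the bound still holds, which it does precisely because any such $\hat e$ is automatically farther from $p$ than $\bar e$ is.
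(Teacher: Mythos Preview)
Your proof is correct and follows essentially the same route as the paper: compute $\mathbb E[\bar p(\hat{\partial}_j g(\theta),z)]=\Phi(|\delta|/\tau_\theta)$ under Condition~\ref{cond:normality}, use the hypothesis $\tau_\theta>\bar\tau(\delta,z)$ to place it strictly between $0.5$ and $p(\delta,z)$, and then invoke Proposition~\ref{prop:optimality_p_bar}. The only cosmetic difference is that the paper applies the directional inequality $\bar e>\hat e$ (established inside the proof of Proposition~\ref{prop:optimality_p_bar}) to obtain the ordering $\hat e<\bar e<p$ directly, whereas you work solely from the absolute-value statement of Proposition~\ref{prop:optimality_p_bar} and handle the possibility $\hat e<0.5$ via an explicit case split---your treatment is in fact slightly more self-contained.
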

Corollary \ref{cor:optimality_p_bar} supports the use of $\bar{p}$ for large values of $\tau_\theta$.

Also, Corollaries \ref{prop:unbiased_p_tilde}  and \ref{cor:optimality_p_bar} combined imply that the \textit{corrected} estimator $\tilde{p}(\hat{\partial}_j g(\theta), z)$ is optimal, in the sense of being approximately unbiased when $\tau_\theta \leq \bar\tau(\partial_j g(\theta),z)$ and achieving minimal bias when $\tau_\theta > \bar\tau(\partial_j g(\theta),z)$.
In our simulations, especially when we are interested in predictive accuracy, we also study the performance of the algorithm that always employs the extreme estimator, irrespective of the value of $z$ and $\tau_\theta$.
We refer to such algorithm as \textit{extreme} SGBD (e-SGBD), and provide  pseudo-code for it in the supplement. 
Effectively, e-SGBD sets $b=1$ when $z$ and $\hat \partial_jg(\theta^{(t-1)})$ have the same sign and $b=-1$ otherwise. Hence, it always moves each coordinate in the direction of its component of the stochastic gradient, in a way that is similar to stochastic optimization methods such as AdaGrad \citep{adagrad}. 
\section{EXPERIMENTS}\label{sec:simulations}
In this section we study the performances of the three SGBD versions (vanilla, corrected and extreme), applying them to sampling tasks arising from various models and comparing them to SGLD. 
To help comparability, we consider three versions of SGLD: a vanilla one (v-SGLD) corresponding to the stochastic gradient version of ULA; a corrected one (c-SGLD; see Algorithm 4 in the supplement), where the standard deviation of the artificial noise is adjusted to correct for the stochastic gradient noise; and an extreme one (e-SGLD), where the maximum correction is applied by adding no artificial noise (which simply corresponds to the stochastic gradient descent algorithm, \citealp{robbins51}).
We also tested the modified variant of SGLD proposed in \citep{vollmer_et_al_16}, obtaining comparable results to c-SGLD.
Full details on the SGLD variants, as well as additional simulations for this section can be found in the supplement.
Overall, the results support the fact that the increased robustness of the Barker scheme to target heterogeneity and hyperparameters tuning \citep{barker} is relevant also in the stochastic gradient context.

\subsection{Skewed target distributions}\label{subsec:toy_example}
First, we study how skewness in the target distribution affects algorithmic performances.
Skewness naturally generates heterogeneity in the magnitude of the gradient $\partial_j g(\theta)$ in different regions of the state space, thus being an interesting test case for SG-MCMC schemes.
In particular, we consider the family of skew-normal target  distributions, i.e.\ $\pi_\alpha(\theta) = 2 \phi(\theta) \Phi(\alpha \theta)$ where $\theta\in\mathbb{R}$, $\alpha>0$ and $\phi$ and $\Phi$ are the standard normal PDF and CDF, 
for different values of the skewness parameter $\alpha$.
$\eta_\theta\sim N(0,var(\pi_\alpha))$ and consider two values for the step-size $\sigma$, namely  $\sigma_1=0.1\times sd(\pi_\alpha)$ and $\sigma_2=0.5\times sd(\pi_\alpha)$, where $sd(\pi_\alpha)$ and $var(\pi_\alpha)$ denote the standard deviation and variance of $\pi_\alpha$.  
Figure \ref{plot:toy_1} displays the results for v-SGBD and v-SGLD. 
The results obtained with  corrected variants, as well as different noise levels for $\eta_\theta$, led to similar conclusions and are reported in the supplement.
\begin{figure}[h!]
     \centering
     \begin{subfigure}[b]{0.6\textwidth}
         \centering
         \includegraphics[width=\textwidth]{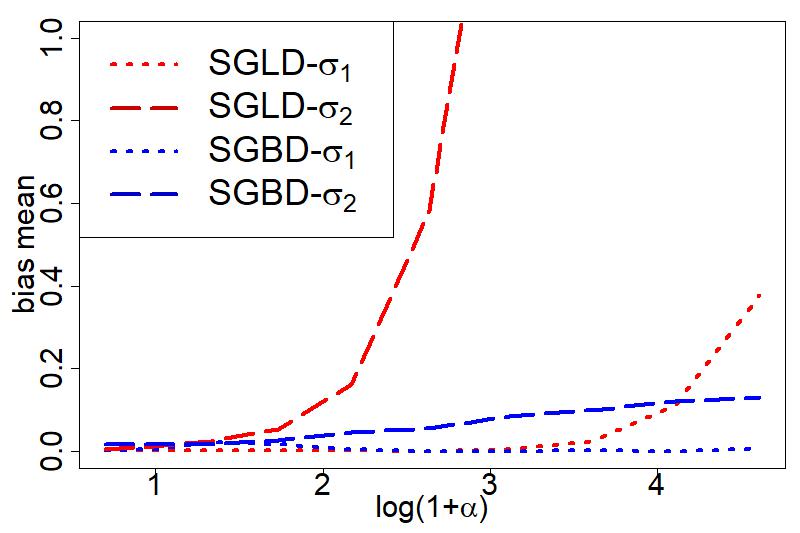}
         \caption{}
         \label{fig:toy_1_a}
     \end{subfigure}
     \hfill
     \begin{subfigure}[b]{0.36\textwidth}
         \centering
         \includegraphics[width=\textwidth, height=1.6cm]{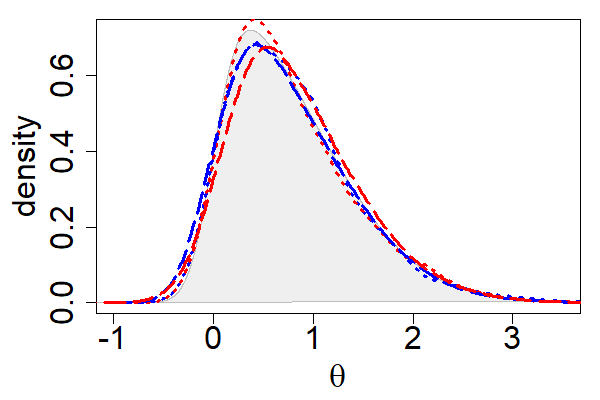}
         \includegraphics[width=\textwidth, height=1.6cm]{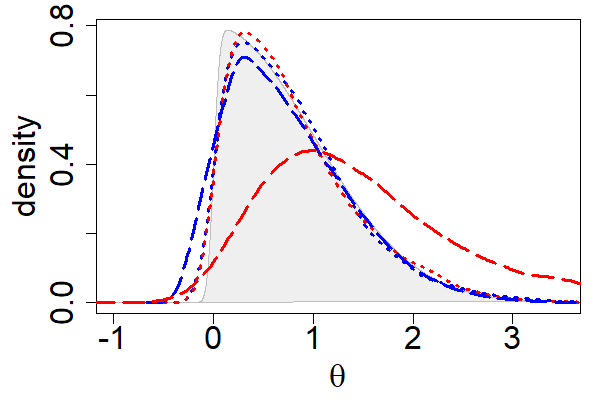}
         \caption{}
          \label{fig:toy_1_b}
          
     \end{subfigure}
          \vspace{.3in}
        \caption[Toy Example: Skew-Normal target with isotropic Gaussian Noise]{Toy Example: Skew-Normal target with isotropic Gaussian Noise. Shape parameter (in log-scale) vs relative bias of mean (left) and invariant distribution of the samplers for different levels of $\alpha$: 5 (top) and 20 (bottom) (right). Red refers to vanilla Langevin-based schemes and blue to vanilla Barker-based schemes. Dotted (dashed resp.) lines are produced with $\sigma_1=0.1\times sd(\pi_\alpha)$ ($\sigma_2=0.5\times sd(\pi_\alpha)$ resp.) of the target standard deviation. The grey shaded area in the right plots represents the true target distribution density.}
        \label{plot:toy_1}
\end{figure}
Figure \ref{fig:toy_1_a} reports the bias on the posterior mean estimates as a function of the skewness parameter $\alpha$, while Figure \ref{fig:toy_1_b} displays stationary distribution for two values of $\alpha$. 
The results suggest that SGBD is more robust to skewness relative to
SGLD, which can suffer from high bias in the invariant distribution as $\alpha$ increases.
Also, the two algorithms exhibit a different robustness to hyperpameter tuning, with SGBD displaying more stable performances when the step-size is increased.

\subsection{Scale Heterogeneity}\label{subsec:log_reg_sepsis}
Next, we study the performance of SGBD on a binary regression task with scale heterogeneity across parameters.

We consider a Bayesian logistic regression model of the form
\begin{equation}\label{eq:log_reg_model}
     y_i\mid x_i, \theta \sim \textit{Bern}\big(\big(1+e^{(-\theta^{\top}x_i)} \big)^{-1}\big) \quad i = 1, \dots, N\,,
\end{equation}
where $x_i \in \mathbb{R}^d$ and $y_i \in \{0, 1\}$. We use SG-MCMC to target the posterior distribution $\pi(\theta)=p(\theta\mid x,y)$ resulting from \eqref{eq:log_reg_model} and a standard normal prior on $\theta$, i.e.\ $\theta\sim \mathcal{N}_d(0,\mathbf{I}_d)$. 

We consider the \href{https://archive.ics.uci.edu/ml/datasets/Sepsis+survival+minimal+clinical+records}{Sepsis dataset} from the UCI repository, which contains $N=110204$ instances and $d=4$ covariates. 
Note that this example is low-dimensional, while in the supplement a high-dimensional binary regression data-set is considered.

The Sepsis dataset leads to an ill-conditioned posterior distribution, where
the posterior standard deviation of the first coordinate is much smaller than the others, as shown in Figure \ref{traceplots:log_reg_sepsis}.
While similar issues related to ill-conditioning could be in principle solved by preconditioning, finding good posterior preconditioners before running MCMC is not always easy and in practice SG-MCMC schemes are often used without adaptive preconditioning, also  
due to the difficulty in tuning them \citep{nemeth2021sgmc}. 
In this sense, robustness to scale heterogneity is a desirable feature for SG-MCMC schemes.

We compare v-SGBD and v-SGLD, with a mini-batch size of $n=\lfloor 0.01 N\rfloor$ and $T=2\times 10^5$ iterations.

Figure \ref{traceplots:log_reg_sepsis} reports the resulting traceplots in the stationary phase. Black lines correspond to the true posterior mean plus and minus two standard deviations, computed with full-batch MCMC using Stan \citep{stan}.

Due to ill-conditioning, SGLD struggles to explore the posterior: tuning the step-size to match the first coordinate (Figure \ref{traceplots:log_reg_sepsis} top row) leads to very poor mixing in the other coordinates; while doubling the step-size with respect to that (Figure \ref{traceplots:log_reg_sepsis} bottom row) dramatically inflates the variance of the first component and leads to biased inferences.  
Instead SGBD, while still being affected by ill-conditioning, is remarkably more robust to scale heterogeneity (e.g.\ it simultaneously achieves good mixing and moderate bias in the stationary distribution) as well as to the choice of hyperparameters (e.g.\ doubling the step-size from top to bottom only leads to a moderate inflation of the marginal stationary distribution in the first component). 
The specific values for the hyperparameters used in Figure \ref{traceplots:log_reg_sepsis} are $\sigma=0.00075, 0.0015$ for SGBD and $\sigma = 0.0002, 0.0004$ for SGLD.
See also the supplement for plots of marginal density estimates.  
\begin{figure}
     \centering
     \begin{subfigure}[b]{0.48 \textwidth}
         \centering
         \includegraphics[width=\textwidth]{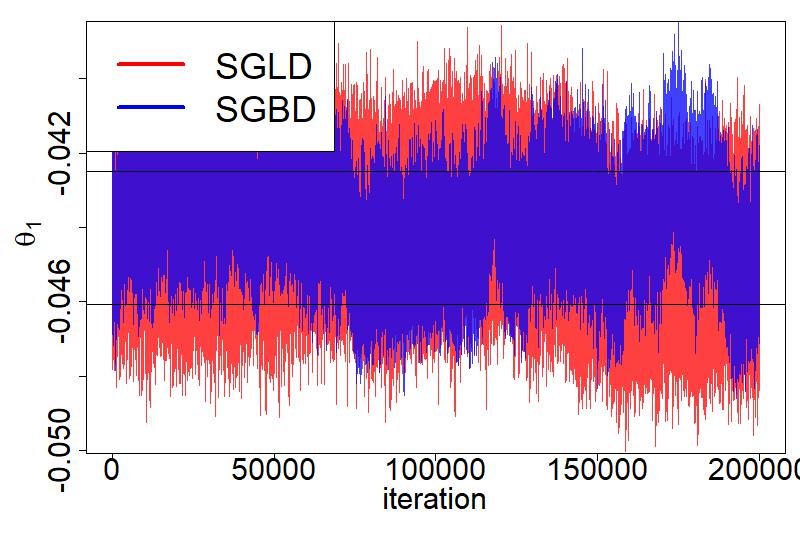}
         %\label{traceplot:log_reg_sepsis_1}
     \end{subfigure}
     \hfill
     \begin{subfigure}[b]{0.48\textwidth}
         \centering
         \includegraphics[width=\textwidth]{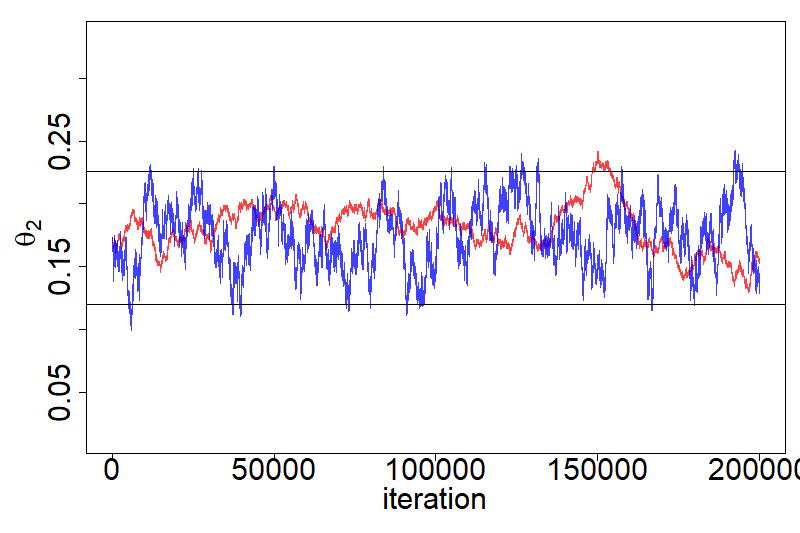}
         %\label{traceplot:log_reg_sepsis_2}
     \end{subfigure}
     \vfill
     \begin{subfigure}[b]{0.48\textwidth}
         \centering
         \includegraphics[width=\textwidth]{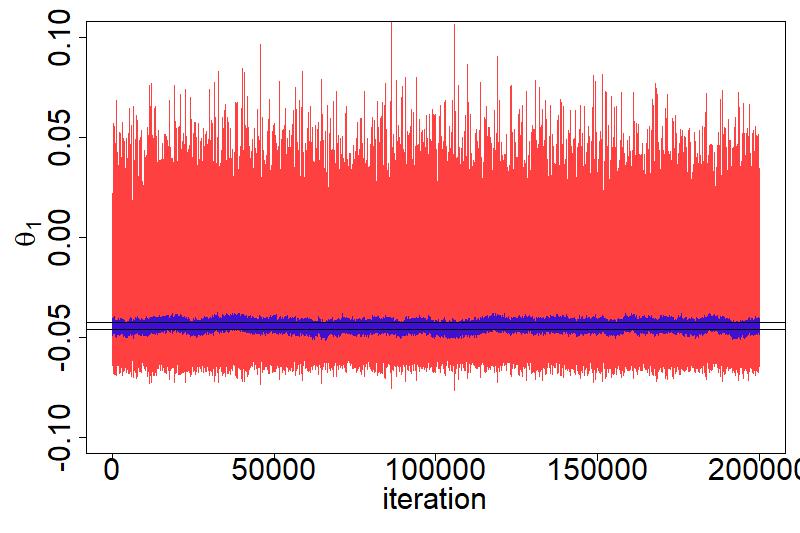}
         %\caption{}
         %\label{traceplot:log_reg_sepsis_3}
     \end{subfigure}
     \hfill
     \begin{subfigure}[b]{0.48\textwidth}
         \centering
         \includegraphics[width=\textwidth]{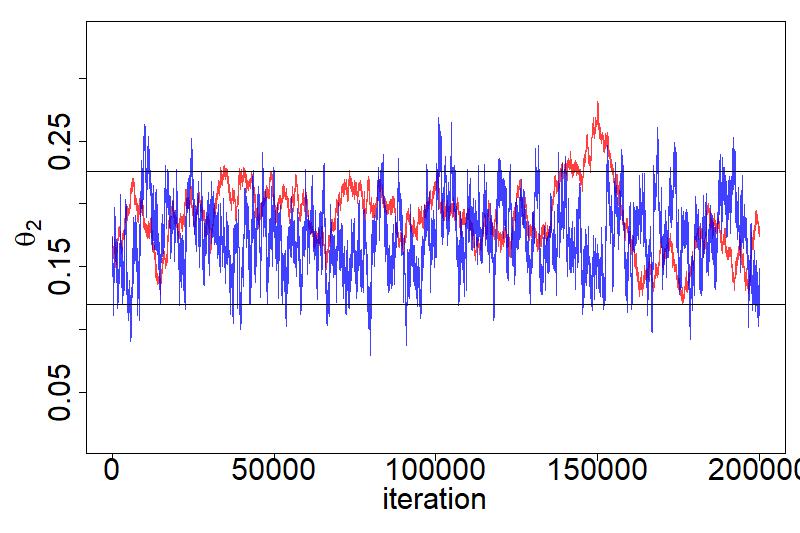}
         %\label{traceplot:log_reg_sepsis_4}
     \end{subfigure}
     \vspace{.3in}
        \caption[Logistic Regression with Scale Heterogeneity]{%Logistic Regression with Scale Heterogeneity on the 
        Sepsis dataset example. Traceplots of two coordinates with a different scale ($\theta_1$ on the left has small scale) with two step-size configuration: small $\sigma$ (top), larger $\sigma$ (bottom). Red refers to v-SGLD and blue to v-SGBD. Black horizontal lines indicate the interval centered at the posterior mean with two standard deviations width.}
        \label{traceplots:log_reg_sepsis}
\end{figure}

In the supplementwe fit model \eqref{eq:log_reg_model} to the Arrhythmia data-set which contains 100 features and 362 training data points. In this example, SGLD is more accurate for small step-sizes and low mixing, while SGBD appears to be more robust to hyperparameter tuning. In particular,
it enjoys a more favourable mixing-accuracy trade off when the step size is chosen increasingly large. In addition, SGBD obtains a better prediction accuracy on the hold-out set for all hyperparameter configurations considered.

\subsection{Bayesian Matrix Factorization}\label{sec.pmf}

We consider a Bayesian matrix factorization model \citep{bpmf} for recommendation. We have $U$ users rating $M$ items. The (potentially sparse) matrix of ratings $\mathbf{R}=(R_{ij})_{ij}$, where $R_{ij}$ corresponds to the rating of user $i$ to item $j$, is modeled as

\begin{equation*}
\begin{aligned}
     R_{ij}&\mid \mathbf{U}, \mathbf{V}, \alpha, I_{ij}=1 \sim \mathcal{N}(\mathbf{U}_i^\top \mathbf{V}_j, \alpha^{-1}) \\
    \mathbf{U}_i&\mid\mu_{\mathbf{U}}, \Lambda_{\mathbf{U}} \sim  \mathcal{N}_p(\mu_{\mathbf{U}}, \Lambda_{\mathbf{U}}^{-1})
    \qquad i = 1, \dots, U\\
\mathbf{V}_j&\mid\mu_{\mathbf{V}}, \Lambda_{\mathbf{V}} \sim \mathcal{N}_p(\mu_{\mathbf{V}}, \Lambda_{\mathbf{V}}^{-1}) 
\qquad j = 1, \dots, M\end{aligned}
    \label{bpmf_model}
\end{equation*}
%for $i = 1, \dots, U$ and $j=1, \dots, M$, 
where $\mathbf{U} \in \mathbb{R}^{U \times p}$, $\mathbf{V} \in \mathbb{R}^{M \times p}$ and $I_{ij}=1$ if user $i$ has rated item $j$.
We adopt the hyperprior
\begin{equation*}
\mu_{\mathbf{Z}}\mid \mu_0,  \Lambda_{\mathbf{Z}} \sim \mathcal{N}_p(\mu_0, \Lambda_{\mathbf{Z}}^{-1}) 
\end{equation*}
where $\Lambda_{\mathbf{Z}} = \text{diag}\left(\lambda_{\mathbf{Z},1}, \dots, \lambda_{\mathbf{Z},p}\right)$, $\lambda_{\mathbf{Z},j}{\sim}\Gamma(a_0, b_0)$ for $j=1, \dots, p$ and $ \mathbf{Z} =(\mathbf U, \mathbf V)$. 

We consider the \href{https://grouplens.org/datasets/movielens/100k}{MovieLens dataset} containing $10^5$ ratings (taking values in $\{1,2,3,4,5\}$) of $1000$ users on $1700$ movies. Taking $p=20$ and applying a $80\%-20\%$ train-test split, we obtain a $54080$-dimensional target posterior distribution $p\left(\mathbf U, \mathbf V, \mu_{\mathbf{U}}, \mu_{\mathbf{V}}, \Lambda_{\mathbf{U}}, \Lambda_{\mathbf{V}} \mid \mathbf R\right)$. We set $\alpha=3$, $\mu_0=0$, $a_0=1$ and $b_0=5$ and use a mini batch size of $n=N/100=800$.

\begin{figure}[h!]
     \centering
     \begin{subfigure}[b]{0.48\textwidth}
         \centering
         \includegraphics[width=\textwidth]{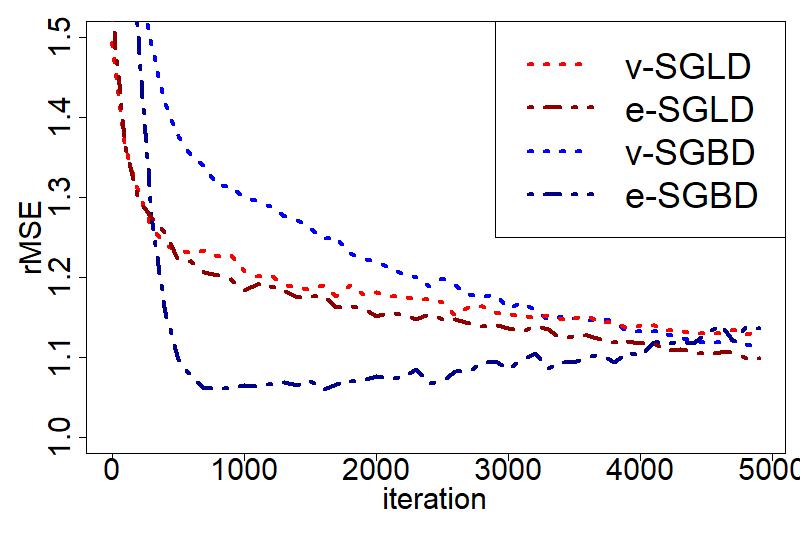}
         \caption{Sample rMSE}
         \label{bpmf_sample_rmse_plot}
     \end{subfigure}
     \hfill
     \begin{subfigure}[b]{0.48\textwidth}
         \centering
         \includegraphics[width=\textwidth]{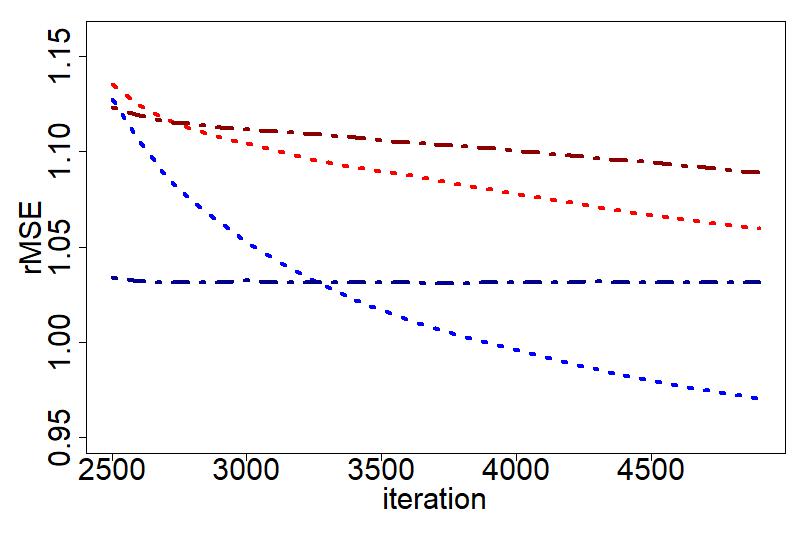}
         \caption{MCMC rMSE}
         \label{bpmf_mcmc_rmse_plot}
     \end{subfigure}
     \vspace{.3in}
        \caption[Bayesian Probabilistic Matrix Factorization: Predictive Accuracy]{Bayesian Probabilistic Matrix Factorization: Predictive Accuracy on the MovieLens dataset. Sample (left) and MCMC (right) estimates rMSE. Red refers to SGLD and blue to the SGBD. Lighter and dotted lines refer to vanilla implementations of the algorithm, darker and dashed-dotted lines to their extreme variants.}
        \label{fig:bpmf}
\end{figure}

Figure \ref{fig:bpmf} compares v-SGBD, e-SGBD, v-SGLD and e-SGLD in terms of predictive performances, measured in root mean squared errors (rMSE; see supplement for explicit definition).
We consider both predictions obtained using single MCMC samples at a given iteration, as well as
ones obtained using MCMC ergodic averages.
In general, SGBD outperforms SGLD in all variants. e-SGBD converges faster and single samples have a better predictive accuracy. However v-SGBD ergodic average estimates have the best overall predictive accuracy. For all schemes, the step-size maximising predictive performances was selected over a grid.

\subsection{Independent Component Analysis}\label{subsec:ica}
We consider an independent component analysis \citep{ica_95,sgld} model, where the likelihood of each datapoint $x_i=(x_{ij})_{j=1,\dots,p} \in \mathbb{R}^p$ for $i=1,\dots,N$ is
\begin{equation*}   \label{eq:ica_model}
    p(x_i\mid W) \propto |\det W| \prod_{k=1}^p 
    \cosh(0.5\sum_{j=1}^pw_{kj} x_{ij})^{-2}
\end{equation*}
and each entry of $W=(w_{ij})_{i,j=1,\dots,p}$ is assigned a standard normal prior, i.e.\ $w_{ij}\stackrel{iid}
\sim N(0,1)$.

We apply the model to the \href{http://research.ics.aalto.fi/ica/eegmeg/MEG\_data.html}{MEG data} collected by the Brain Research Unit, from the Helsinki University of Technology, which contains $N=17730$ data points of dimensionality 100. To perform our experiments, we extract the first $10$ channels (i.e.\ $p=10$) and use SG-MCMC to sample from the resulting $100$-dimensional posterior distribution $p(W|(x_i)_{i=1,\dots,n})$.
We perform a $80\%-20\%$ train-test split and compare log-likelihood on the unseen test set. 
We run the samplers for $T=4\times 10^4$ iterations choosing a batch-size of $n=100$.
\begin{figure}
     \centering
     \begin{subfigure}[b]{0.48\textwidth}
         \centering
         \includegraphics[width=\textwidth]{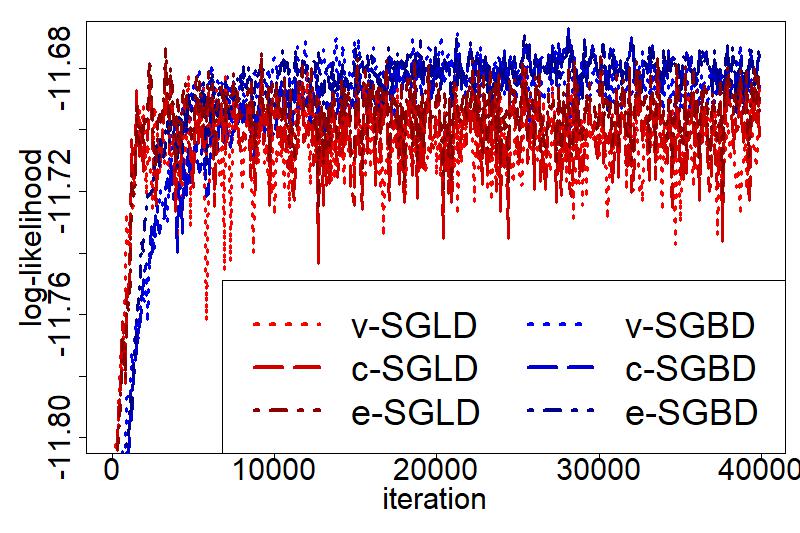}
         \caption{Samples log-likelihood}
         \label{ica_log_lik_plot}
     \end{subfigure}
     \hfill
     \begin{subfigure}[b]{0.48  \textwidth}
         \centering
         \includegraphics[width=\textwidth]{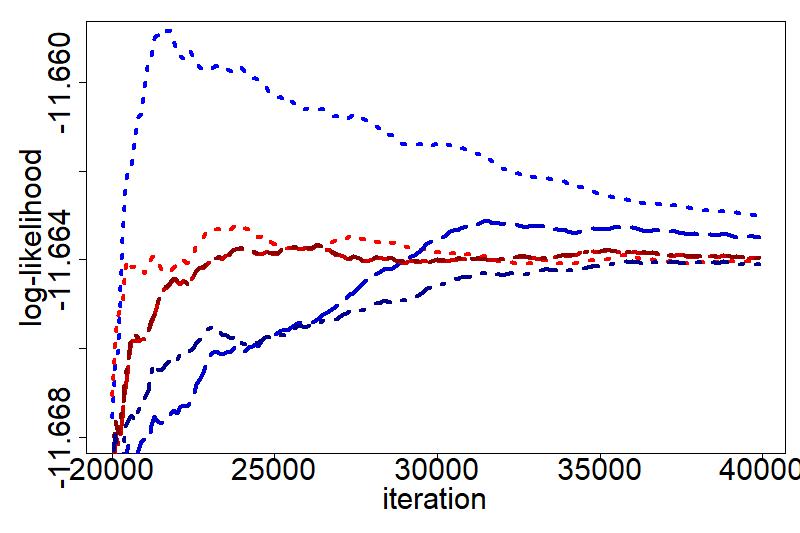}
         \caption{MCMC log-likelihood}
         \label{ica_log_lik_mcmc_plot}
     \end{subfigure}
     \vspace{.3in}
        \caption[ICA log likelihood]{ICA: log likelihood on the MEG dataset. Log likelihood produced by each sample (left) and by the MCMC estimates (right) on held-out data. Red refers to SGLD and blue to SGBD. For both algorithms, the vanilla (lighter dotted lines), corrected (medium scale dashed lines) and extreme (darker dotted-dashed lines) versions are displayed.}
        \label{ica_plot_1}
\end{figure}
We test the performance on the unseen test set both by computing the log-likelihood produced by each sample, as well as using the one produced by the ergodic averages of $W$, see the supplement for explicit definitions.  
Results are reported in Figure \ref{ica_plot_1}.
In this example optimal performances were obtained with very small stepsizes, both for SGLD and SGBD, 
and in such setting the difference between the two algorithms is more limited.

Overall, the c-SGBD estimates produce higher levels of log-likelihood on unseen data.

\section{CONCLUSION}\label{sec:conclusion}
In this paper, we extended the Barker proposal to the stochastic gradient setting, leading to the SGBD algorithm. 
We studied the bias induced by stochastic gradient noise in the Barker proposal and develop strategies to remove it, for small noise levels, or minimize it, for larger ones. 

 We then compared SGBD to SGLD numerically on simulated and real datasets. Results suggest that, while the two algorithms have similar performances for small step-sizes, SGBD is more robust to hyperparameters choice (thus potentially allowing for larger values to be used in practice) and to heterogeneity in the target gradients (arising from e.g.\ skewness or ill-conditioning). 
 
 Overall, SGBD represents a valid alternative to SGLD with minimal algorithmic change and appealing robustness.

There are many directions for future research, including: adding momentum to improve efficiency (similarly to, e.g., SGHMC \citep{sg_hamiltonian}); developing adaptive variants that optimally tunes the step-size across iterations; characterizing more explicitly how the invariant distribution is affected by the choice of the step-size; directly studying impact on predictive accuracy;  
 further exploring the connection with optimization schemes as well as the use of SGBD for optimization purposes.

\subsubsection*{Acknowledgements}

GZ acknowledges support from the European Research Council (ERC),
through StG ``PrSc-HDBayLe'' grant ID 101076564. LM was partially supported by the National Institutes of Health, grant ID R01ES035625.

\bibliography{References}

\clearpage
\onecolumn
\aistatstitle{Supplementary Materials}
\appendix
\section{ALGORITHMS}
In this section we provide the pseudocode for the e-SGBD, v-SGLD and c-SGLD algorithms described in Section \ref{subsec:e_sgbd} of the paper.

\begin{algorithm}
\SetAlgoLined
\SetKwInOut{Input}{Input}   \Input{$\theta^{(0)}\in \mathbb R^d, \sigma >0$}
\For{t =1,\dots, T} {
 Draw $\mathcal{S}_n \subset \{1, \dots, N\}$ uniformly at random\;
 \For{j=1, \dots, d \tcp*{Can be parallelized}}{
 Compute $\hat{\partial}_j g(\theta^{(t-1)})$ as in \eqref{eq:stochastic_gradient}\;
 Draw $w_j^{(t)} \sim N(\sigma, (0.1\sigma)^2)$\;
 Set $b_j^{(t)} = 1$ if  $w_j^{(t)}\hat{\partial}_j g(\theta^{(t-1)}) >0$ otherwise $b_j^{(t)} = -1$\;
 Update $\theta^{(t)} \gets \theta^{(t-1)} +b_j^{(t)}w_j^{(t)}$ \;
}}
 \caption{Extreme Stochastic Gradient Barker Dynamics (e-SGBD)}\label{alg:e_sgbd}
\end{algorithm}\

\begin{algorithm}
\SetAlgoLined
\SetKwInOut{Input}{Input}   \Input{$\theta^{(0)}\in \mathbb R^d, \sigma >0$}
 \For{t =1,\dots, T} {
 Draw $\mathcal{S}_n \subset \{1, \dots, N\}$ uniformly at random\;
 \For{j=1, \dots, d \tcp*{Can be parallelized}}{
 Compute $\hat{\partial}_j g(\theta^{(t-1)})$ as in \eqref{eq:stochastic_gradient}\;
 Draw $\epsilon_j^{(t)} \sim N(0, \sigma^2)$\;
 Update $\theta^{(t)} \gets \theta^{(t-1)} + \frac{\sigma^2}{2}\hat{\partial}_j g(\theta^{(t-1)}) + \epsilon_j^{(t)}$\;
 } 
 }
 \caption{Stochastic Gradient Langevin Dynamics (v-SGLD)}
 \label{SGLD_algo}
\end{algorithm}\

\begin{algorithm}
\SetAlgoLined
\SetKwInOut{Input}{Input}   \Input{$\theta^{(0)}\in \mathbb R^d, \sigma>0, \beta\in(0,1), \{\hat{\tau}_{j}^{(0)}\}_{j=1,\dots,d}$}
 \For{t =1,\dots, T} {
 Draw $\mathcal{S}_n \subset \{1, \dots, N\}$ uniformly at random\;
  \For{t =1,\dots, T \tcp*{Can be parallelized}} {
 Compute $\hat{\partial}_j g(\theta^{(t-1)})$ as in \eqref{eq:stochastic_gradient}\;
 Update $\hat\tau_j^{(t)} \gets (1-\beta)\hat\tau_j^{(t-1)} + \beta \sqrt{ \frac{1}{n-1}\sum_{i\in\mathcal{S}_n}(\partial_j g_i(\theta^{(t-1)}) - \frac{1}{n}\sum_{i\in\mathcal{S}_n}\partial_j g_i(\theta^{(t-1)}))^2}$\;
 Draw $\epsilon_j^{(t)} \sim N\left(0, \max\left(0, \sigma^2 - \frac{\hat{\tau}^{(t)}2}{4}\sigma^4\right) \right)$\;
 Update $\theta^{(t)} \gets \theta^{(t-1)} + \frac{\sigma^2}{2}\hat{\partial}_j g(\theta^{(t-1)}) + \epsilon_j^{(t)}$\;
 }  }

 \caption{Corrected Stochastic Gradient Langevin Dynamics (c-SGLD)}
 \label{cSGLD_algo}
\end{algorithm}\

\section{PROOFS AND ADDITIONAL LEMMAS}\label{subsec:proofs}

\subsection{Proof of Proposition \ref{prop:bias_p_hat}}\label{proof:prop:bias_p_hat}
We first state two auxiliary lemmas.

\begin{lemma}\label{lemma:1}
Let $(\delta, z) \in \mathbb R^2$ and $\eta \neq 0$.
Then 
$\frac{1}{2}\left(p(\delta - \eta, z) + p(\delta+ \eta, z) \right)\leq  p(\delta, z)$ if and only if $z \delta \geq 0$, with equality only if $z\delta = 0$.

\end{lemma}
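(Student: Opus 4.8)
I would begin by writing $q(u):=(1+e^{-u})^{-1}$, so that $p(\delta,z)=q(z\delta)$, and isolating the two elementary facts that drive everything: (a) $q'(u)=\ell(u)$ with $\ell(u)=e^{-u}(1+e^{-u})^{-2}=\tfrac14\cosh^{-2}(u/2)$, which is strictly positive, even, and strictly decreasing in $|u|$ (since $\cosh$ is even and strictly increasing on $[0,\infty)$); and (b) $q(-u)=1-q(u)$. Set $a:=z\delta$ and $b:=z\eta$. Since the left-hand side of the claimed inequality is unchanged under $\eta\mapsto-\eta$, I may assume $b\ge0$; and the case $z=0$ is trivial since then $a=b=0$ and all three quantities equal $1/2$, so I assume $z\ne0$, hence $b>0$.

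Next I would study the quantity $\Delta:=p(\delta,z)-\tfrac12\bigl(p(\delta-\eta,z)+p(\delta+\eta,z)\bigr)$ and show that $\Delta>0$ for $a>0$, $\Delta=0$ for $a=0$, and $\Delta<0$ for $a<0$; this is exactly the assertion (the $\le$ holds iff $a\ge0$, with equality only at $a=0$). By the fundamental theorem of calculus $\Delta=\tfrac12\bigl(\int_{a-b}^{a}\ell-\int_{a}^{a+b}\ell\bigr)$, and the substitution $u\mapsto 2a-u$ in the first integral rewrites this as $\Delta=\tfrac12\int_{a}^{a+b}\bigl[\ell(2a-u)-\ell(u)\bigr]\,du$. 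Using fact (b) one checks directly that $\Delta$ is an \emph{odd} function of $\delta$, so it suffices to handle $a\ge0$ and the case $a<0$ follows by negation.

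For $a=0$ the integrand is $\ell(-u)-\ell(u)\equiv0$ by evenness of $\ell$, giving $\Delta=0$. For $a>0$ the crucial pointwise estimate is that $0\le|2a-u|<|u|$ for every $u\in(a,a+b)$: here $u>a>0$ so $|u|=u$, while $2a-u<a$, and splitting on the sign of $2a-u$ gives $|2a-u|=2a-u<u$ when $2a-u\ge0$ (since $a<u$) and $|2a-u|=u-2a<u$ when $2a-u<0$ (since $a>0$). Strict monotonicity of $\ell$ in $|\cdot|$ then yields $\ell(2a-u)>\ell(u)$ throughout the nondegenerate interval $(a,a+b)$, so $\Delta>0$.

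The one genuine obstacle is that $q$ is convex on $(-\infty,0]$ and concave on $[0,\infty)$, so a naive midpoint-concavity argument breaks down precisely when the window $[a-b,a+b]$ straddles the origin. The reflection substitution $u\mapsto 2a-u$ combined with the unimodality of the logistic density $\ell$ is what circumvents this, reducing the claim to the elementary comparison $|2a-u|<|u|$, while the oddness of $\Delta$ in $\delta$ spares us from repeating the casework for $a<0$.
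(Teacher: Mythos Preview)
Your proof is correct, but the route is genuinely different from the paper's. The paper proceeds by a direct algebraic computation: setting $c=e^{-z\delta}$, it expands $2p(\delta,z)-p(\delta-\eta,z)-p(\delta+\eta,z)$ into the closed form
\[
\frac{c(1-c)\bigl((e^{-z\eta}+e^{z\eta})-2\bigr)}{(1+c)(1+ce^{-z\eta})(1+ce^{z\eta})},
\]
after which the sign is read off factor by factor (positive denominator, $e^{-z\eta}+e^{z\eta}\ge2$ by convexity, and $1-c\ge0\iff z\delta\ge0$). Your argument instead writes $\Delta$ as an integral of the logistic density $\ell=q'$, uses the reflection $u\mapsto 2a-u$ to recast it as $\tfrac12\int_a^{a+b}[\ell(2a-u)-\ell(u)]\,du$, and then exploits only the evenness and strict unimodality of $\ell$ together with the elementary inequality $|2a-u|<|u|$ on $(a,a+b)$ when $a>0$. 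The paper's approach is shorter and yields an explicit factored expression; yours is more conceptual and, importantly, generalizes verbatim to any $p(\delta,z)=q(z\delta)$ with $q$ the CDF of a symmetric strictly unimodal law (i.e.\ any $q$ whose derivative is even and strictly decreasing in $|u|$), which is exactly the structure the paper later leans on in Condition~3 and Proposition~4. Your remark that naive midpoint concavity fails when the window straddles the origin, and that the reflection trick is what rescues it, is a nice observation that the algebraic proof obscures.
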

\begin{proof}
We have
\begin{equation}
\label{eq:lemma_1_a}
    2 p(\delta, z) - \left(p(\delta - \eta, z) + p(\delta + \eta, z)\right) = \left(1 + e^{-z\delta}\right)^{-1} - \left(\left(1 + e^{-z\delta + z\eta}\right)^{-1} + \left(1 + e^{-z\delta - z\eta}\right)^{-1}\right)\,.
\end{equation}
Define $c = e^{-z\delta}$. 
Expanding the right hand side of \eqref{eq:lemma_1_a}, we obtain
\begin{equation}
\label{eq:lemma_1}
    \begin{aligned}
      &\left(1 + c\right)^{-1} - \left(\left(1 + c e^{+ z\eta}\right)^{-1} + \left(1 + c e^{- z\eta}\right)^{-1}\right)
           &=  \frac{c (1-c) \left(\left(e^{-z \eta} + e^{z \eta}\right) -2\right)}{(1 + c)\left(1+ c e^{- z \eta}\right)\left(1+ c e^{+ z \eta}\right)}
    \end{aligned}
\end{equation}
    Note that the denominator on the right hand side of \eqref{eq:lemma_1} is strictly positive and $\left(e^{-z \eta} + e^{z \eta}\right) -2 \geq 0$ by Jensen's inequality with strict inequality if $z \neq 0$. The result follows by noting that $c >0$ always, and $1-c\geq 0$ if and only if $z\delta \geq 0$, with strict inequality if $z\delta >0$.
\end{proof}

\begin{lemma}\label{lemma:2}
Let $(\delta, z) \in \mathbb R^2$ and $\eta \neq 0$.
Then 
$\frac{1}{2}\left(p(\delta - \eta, z) + p(\delta + \eta, z)\right) \geq  0.5$ if and only if $z \delta \geq 0$, with equality only if $z\delta = 0$.

\end{lemma}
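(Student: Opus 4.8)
The plan is to prove Lemma \ref{lemma:2} by exactly the same algebraic route used for Lemma \ref{lemma:1}, exploiting that the two statements share the same ``if and only if'' structure. First I would write
\[
p(\delta-\eta,z)+p(\delta+\eta,z) - 1 = \left(1+ce^{z\eta}\right)^{-1} + \left(1+ce^{-z\eta}\right)^{-1} - 1,
\]
with $c=e^{-z\delta}>0$ as in the previous proof, and combine the three terms over the common denominator $(1+ce^{z\eta})(1+ce^{-z\eta})$. The numerator simplifies to $1 - c^2$ (the cross terms $ce^{z\eta}+ce^{-z\eta}$ cancel against the same quantity coming from expanding $-(1+ce^{z\eta})(1+ce^{-z\eta})$, leaving $1-c^2$), so that
\[
\tfrac12\bigl(p(\delta-\eta,z)+p(\delta+\eta,z)\bigr) - 0.5 = \frac{1-c^2}{2\left(1+ce^{z\eta}\right)\left(1+ce^{-z\eta}\right)}.
\]

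Once this identity is in place the conclusion is immediate: the denominator is strictly positive, $c>0$ always, and $1-c^2 = 1-e^{-2z\delta}\ge 0$ if and only if $z\delta\ge 0$, with equality precisely when $z\delta=0$. This yields both the ``if and only if'' claim and the equality characterization stated in the lemma.

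I do not expect any serious obstacle here; the only thing requiring a little care is the algebraic simplification of the numerator, making sure the $e^{z\eta}$-dependent terms genuinely cancel so that the answer is the clean, $\eta$-independent expression $1-c^2$. It is worth double-checking that step since, unlike Lemma \ref{lemma:1}, the bound in Lemma \ref{lemma:2} does not depend on $\eta$ at all (which is itself a useful sanity check: averaging two skew-symmetric probabilities over a symmetric perturbation leaves the side of $0.5$ determined entirely by $\mathrm{sign}(z\delta)$). An alternative, slightly slicker route would be to observe that $\zeta\mapsto p(\zeta,z)$ is increasing in $\zeta$ when $z>0$ and decreasing when $z<0$, and that $p(\delta-\eta,z)+p(\delta+\eta,z) - 1 = \bigl(p(\delta-\eta,z)-p(-\delta+\eta,z)\bigr)$ using skew-symmetry $p(\delta+\eta,z)=1-p(-\delta-\eta,z)$; but this reduces to comparing $\delta-\eta$ with $-(\delta-\eta)$ only after a further symmetrization and is no shorter than the direct computation, so I would present the direct one.
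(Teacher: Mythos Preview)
Your proposal is correct and follows essentially the same route as the paper: both substitute $c=e^{-z\delta}$ and reduce the sign question to whether $c^2\le 1$, i.e.\ $z\delta\ge 0$. Your version is marginally cleaner in that subtracting $1$ first makes the $\eta$-dependent terms cancel explicitly, yielding the numerator $1-c^2$, whereas the paper keeps the full fraction and compares it to $1/2$ (and somewhat superfluously invokes the Jensen bound $e^{z\eta}+e^{-z\eta}\ge 2$, which is not actually needed once the comparison reduces to $c^2\le 1$).
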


\begin{proof}
Define $c:= e^{-z\delta}$ and consider the following:
     \begin{equation}
         \begin{aligned}
            \frac{1}{2}\left(p(\delta+\eta, z) + p(\delta-\eta, z) \right) &= \frac{1}{2}  \left(\left(1 +c e^{-z\eta}\right)^{-1} + \left(1+c e^{z\eta}\right)^{-1} \right) \\
            &= \frac{2 + c \left(e^{- z \eta} + e^{+z \eta}\right)}{2\left(1+  c e^{- z \eta}\right)\left(1+  c e^{+ z \eta}\right) }\\
            &= \frac{2 + c \left(e^{- z \eta} + e^{+z \eta}\right)}{2 + 2 c \left(e^{- z \eta} + e^{+z \eta}\right) + c ^2}
         \end{aligned}
     \end{equation}
     The result follows by noting that $c ^2 \leq 1$ if and only if $ z\delta\geq 0$, with strict inequality if $z\delta>0$, and $\left(e^{-z \eta} + e^{z \eta}\right) -2 \geq 0$ by Jensen's inequality with strict inequality if $z \neq 0$.
\end{proof}
We can now prove Proposition \ref{prop:bias_p_hat}.
\begin{proof}[Proof of Proposition \ref{prop:bias_p_hat}]

Consider first the case $z\partial_j g(\theta)>0$ and $\tau_\theta>0$. 
Denoting  the distribution of $\eta_\theta$ by $P_{\eta_\theta}$, we have
\begin{equation}
    \begin{aligned}
  \mathbb{E}\left[ p \left(\partial_j \hat g(\theta),  z\right)\right]&\overset{(I)}{=} \int_{0}^{+\infty} \left( p\left(\partial_j g(\theta)+\eta_\theta, z\right) +  p\left(\partial_j g(\theta)-\eta_\theta, z\right) \right)
  dP_{\eta_\theta}(\eta_\theta) 
  \\& =
  \int_0^{+\infty} \left(\left(1+ e^{- z (\partial_j g(\theta) +\eta_\theta)}\right)^{-1} + \left(1+ e^{- z (\partial_j g(\theta) -\eta_\theta)}\right)^{-1} \right) 
  dP_{\eta_\theta}(\eta_\theta)
  \\ &
  \overset{(II)}{\leq}
   \int_0^{+\infty} 2\left(1+ e^{- z (\partial_j g(\theta))}\right)^{-1}
  dP_{\eta_\theta}(\eta_\theta)
   \\
   &\overset{(III)}{=}\int_{-\infty}^{+\infty} \left(1+ e^{- z (\partial_j g(\theta))}\right)^{-1}
     dP_{\eta_\theta}(\eta_\theta)
= 
p(\partial_j g(\theta), z),
    \label{eq:E_p_hat}
\end{aligned}
\end{equation}
where $(I)$ and $(III)$ follow from the symmetry assumption in Condition \ref{cond:symmetry}, and $(II)$ follows from Lemma \ref{lemma:1}
Moreover, $z\partial_j g(\theta)>0$ implies $p(\partial_j g(\theta), z)>0.5$ and $\frac{1}{2}\left(p(\partial_j g(\theta)+\eta_\theta, z) + p(\partial_j g(\theta)-\eta_\theta, z) \right)> 0.5$ for all values $\eta_\theta \neq 0$ by Lemma \ref{lemma:2}.
It follows, again using the symmetry assumption in Condition \ref{cond:symmetry}, that
\begin{equation}
  0.5 < \mathbb E\left[\hat{p}\left(\hat{\partial}_j g(\theta), z\right) \right] < p(\partial_j g(\theta), z),
\end{equation}
Using the same argument, it is easy to show that the reverse inequalities hold if $z\partial_j g(\theta)<0$, leading to $\left|p\left(\partial_j g(\theta), z\right) - 0.5\right| > \left|\mathbb E\left[ p\left(\hat{\partial}_j g(\theta), z\right)\right]- 0.5\right|$ as desired.
The argument for the case $z\partial_j g(\theta)<0$ is analogous.
Finally, if $z\partial_j g(\theta)=0$ or $\tau_\theta = 0$, we have $\mathbb{E}\left[ p \left(\partial_j \hat g(\theta),  z\right)\right]=p(\partial_j g(\theta), z)=0.5$. 
\end{proof}

\subsection{Proof of Proposition \ref{prop:p_hat_biased}}\label{proof:prop:p_hat_biased}
\begin{proof}
Let $F(x)=(1+\exp(-x))^{-1}$ be the CDF of the logistic distribution and $\Phi$ the one of the standard Normal distribution.
Then we have $\left|F(x) - \Phi\left(x/1.702\right) \right|< 0.0095$ for all $x \in \mathbb{R}$, see e.g.\ \citet[Section~3.2]{logistic_normal_approx_09}. 
By definition of $p$, this implies
\begin{align}\label{ineq:logistic_cdf_bound}
   \left|p(\delta,z) - \Phi\left(\frac{z\delta}{1.702}\right) \right| &< 0.0095  &\delta,z \in \mathbb R
\end{align}
and, as a result,
\begin{align}\label{eq_E_p_hat_normal}
&  \left| 
   \mathbb{E}
   \left[p\left(\hat{\partial}_j g(\theta), z\right)
   -
   \Phi\left(\frac{z \hat{\partial}_j g(\theta)}{1.702}\right)\right]
     \right|
  < 0.0095.
  \end{align}
Hence, $ \left| \mathbb E\left[p\left(\hat{\partial}_j g, z\right)\right] - \Phi\left(\frac{z\partial_j g(\theta)}{\sqrt{1.702^2+z^2\tau_\theta^2)}}\right)\right| < 0.0095$ noting that
\begin{equation}
    \mathbb E \left[\Phi\left(\frac{z \hat{\partial}_j g(\theta)}{1.702}\right)\right] = \Phi\left(\frac{z \partial_j g(\theta)}{\sqrt{1.702^2+z^2\tau_\theta^2}}\right)\,,
\end{equation}
which follows by  $\hat{\partial}_j g(\theta) \sim \mathcal{N}(\partial_j g(\theta), \tau_\theta^2)$ and standard properties of the normal distribution.
The result in Equation \eqref{eq:approx_expectation_p_hat} follows using the bound \eqref{ineq:logistic_cdf_bound} with an application of the triangle inequality.
\end{proof}

\subsection{Proof of Corollary \ref{prop:unbiased_p_tilde}}\label{proof:prop:unbiased_p_tilde}
\begin{proof}
%Consider the family of estimators 
%Let $\tilde p_\alpha$ be as in (\ref{eq:p_tilde_alpha}).
%To prove Proposition \ref{prop:unbiased_p_tilde}, it is enough to 
%Since $\tilde{p}_{\alpha}\left(\hat{\partial}_j g, z\right)=p\left(\hat{\partial}_j g, \alpha z\right)$, Proposition \ref{prop:p_hat_biased}
%Follow the steps of the proof of Proposition \ref{prop:bias_p_hat}, \gzc{(Do we mean Prop 1 or Prop 2 here?)}
Consider first the case when $\tau_\theta < 1.702/|z|$.
Then, replacing $z$ with $z\alpha$ in Proposition \ref{prop:p_hat_biased}, we obtain
\begin{equation}
\left|\mathbb{E}\left[p\left(\hat{\partial}_j g, \alpha z\right)\right] - p\left(\frac{1.702 \alpha }{\sqrt{1.702^2+z^2\tau_\theta^2\alpha^2}} \partial_j g(\theta), z\right)\right| < 0.019. 
\label{E_eq:p_tilde_alpha}
\end{equation} 
Taking $\alpha = \frac{1.702}{ \sqrt{1.702^2-\tau_\theta^2z^2}}$, we have $p\left(\hat{\partial}_j g, \alpha z\right) = \tilde{p}\left(\hat{\partial}_j g, z\right)$  and the second term in the absolute value on the left hand side of \eqref{E_eq:p_tilde_alpha} simplifies to $p(\partial_j g(\theta), z)$ leading to the desired result. 

Consider now the case when $\bar \tau(\partial_j g(\theta), z) \geq 1.702/|z|$ and $\tau_\theta \in \left[1.702/|z|, \bar \tau(\partial_j g(\theta), z) \right]$. In this case, we have $\tilde p\left(\hat{\partial}_j g(\theta),  z\right) = \mathbf{1}(\hat{\partial}_j g(\theta)z>0)$.
We first consider the case when $z\partial_j g(\theta)>0$. 
Under Condition \ref{cond:normality}, we have
\begin{equation}
\label{eq:E_p_tilde_above_bp}
    \mathbb{E}\left[\tilde p\left(\hat{\partial}_j g,  z\right)\right] = \Phi\left(|\partial_jg(\theta)|/\tau_\theta\right).
\end{equation}
The right hand side of \eqref{eq:E_p_tilde_above_bp} is a strictly increasing function  of $\tau_\theta$, hence it can be lower and upper bounded by 
\begin{equation}
\label{eq:E_p_tilde_above_bp_bound}
    \Phi\left(|\partial_jg(\theta)|/\bar \tau(\partial_j g(\theta), z) \right)
\leq \Phi\left(|\partial_jg(\theta)|/\tau_\theta\right) \leq \Phi\left(\partial_jg(\theta)z/1.702\right).
\end{equation}
Define $D(x) : =\Phi(x/1.702) - (1 + e^{-x})^{-1}$ and expand the upper bound in \eqref{eq:E_p_tilde_above_bp_bound} as
\begin{equation}
\begin{aligned}
    \Phi\left(\partial_jg(\theta)z/1.702\right) &= \left(1 + e^{-\partial_jg(\theta)z}\right) - D(\partial_jg(\theta)z)\\
    &\leq p\left(\partial_jg(\theta), z\right) + 0.0095.
\end{aligned}
\end{equation}
Moreover, note that, when $z\partial_jg(\theta)>0 $, the left hand side in \eqref{eq:E_p_tilde_above_bp} is equal to $p\left(\partial_jg(\theta), z\right)$ by the definition of $\tau(\partial_j g(\theta), z)$, i.e.
where the inequality follows from the definition of $p$ and $|D(x)| < 0.0095$ for all $x \in \mathbb R$ \cite[Section~3.2]{logistic_normal_approx_09}.
\begin{equation}
\label{eq:E_p_tilde_above_bp_lo_bound}
\Phi\left(\partial_jg(\theta)/\bar \tau(\partial_j g(\theta), z) \right) = p\left(\partial_jg(\theta), z\right).
\end{equation}

Thus, combining all of the above, we get 
\begin{equation}
  p\left(\partial_jg(\theta), z\right)  \leq \mathbb{E}\left[\tilde p\left(\hat{\partial}_j g(\theta),  z\right)\right] \leq p\left(\partial_jg(\theta), z\right) + 0.0095.
\end{equation}

When $\partial_jg(\theta)z\leq 0$, it is easy to show that the reverse inqualities hold, leading to the following bound:
\begin{equation}
  p\left(\partial_jg(\theta), z\right) -0.0095 \leq \mathbb{E}\left[\tilde p\left(\hat{\partial}_j g,  z\right)\right] \leq p\left(\partial_jg(\theta), z\right).
\end{equation}
\end{proof}

\subsection{Proof of Proposition \ref{prop:breaking_point} }
\begin{proof}
To prove the statement, consider the expectation of the \textit{extreme}-estimator defined in Section \ref{subsec:e_sgbd} of the paper, $\bar{p}$, 
    \begin{equation}
    \label{eq:expectation_p_bar_appendix}
    \mathbb E\left[\bar{p}\left(\hat{\partial}_jg(\theta), z\right)\right] = \Phi\left(\tau_\theta^{-1}\left|\partial_j g(\theta)\right|sgn(z\partial_j g(\theta)) \right).
    \end{equation}
    Note that the definition $\bar \tau(\partial_j g(\theta), z)$ in \eqref{eq:breaking_point} implies that
\begin{equation}
     \Phi\left(\left|\partial_j g(\theta)\right|sgn(z\partial_j g(\theta)) /\tau(\partial_j g(\theta), z) \right) = p\left(\partial_jg(\theta), z\right).
    \label{eq:implicit_breaking_point_1}
\end{equation}
Consider the case when $\tau_\theta > \tau^*$. Since $\bar \tau( \delta, z)$ is a continuous function of $ \delta$, there exists $\bar  \delta \in \mathbb R$ such that $\tau^* < \bar \tau(\bar  \delta, z) < \tau_\theta$. 
 
Notice that the right hand side of \eqref{eq:expectation_p_bar_appendix} is decreasing in $\tau_\theta$ if $z\partial_j g(\theta) >0$ and increasing otherwise. 
Hence, for a value of the gradient $\bar  \delta$ and noise standard deviation $\tau_\theta > \bar \tau(\bar \delta, z)$, we obtain 
\begin{equation}
   \left|p\left(\bar  \delta, z\right) - 0.5\right| < \left|\mathbb E\left[\bar{p}\left(\hat{\partial}_jg(\theta), z\right)\right] - 0.5\right|.
\end{equation}

By Proposition \ref{prop:optimality_p_bar}, we know that, for any other symmetric estimator $\hat p$ satisfying \eqref{eq:symmetric_estimator}, we have
\begin{equation}
   \left|\mathbb E\left[\bar{p}\left(\hat{\partial}_jg(\theta), z\right)\right] - 0.5\right| <  \left|\mathbb E\left[\hat{p}\left(\hat{\partial}_jg(\theta), z\right)\right] - 0.5\right| 
\end{equation}
Hence, no unbiased symmetric estimator exists for $\tau_\theta > \tau^*$. 

Finally, note that the absence of any unbiased symmetric estimator implies the absence of any unbiased estimator, since any unbiased estimator needs to be symmetric.
Indeed, assume by contradiction that $\hat p(\hat \delta, z)$ is unbiased for $p(\delta, z)$ when $\hat \delta \sim \mathcal N( \delta, \tau_\theta)$ but it is not symmetric, i.e. $\hat p(\hat \delta, z) + \hat p(-\hat \delta, z) \neq 1$ for some $\hat \delta \in \mathbb R$. Consider $z$ fixed and define $h(\hat \delta) = \hat p(\hat \delta, z) + \hat p(-\hat \delta, z) -1$. Unbiasedness imply $\mathbb E\left[\hat p(\hat \delta, z) + \hat p(-\hat \delta, z)\right] =p( \delta, z) + p(- \delta, z)= 1$, or equivalently
 $\mathbb E\left[h(\hat \delta)\right] = 0$ with $\hat \delta \sim \mathcal N( \delta, \tau_\theta^2)$, for all $\delta \in \mathbb R$. Since $\hat \delta$ is a complete sufficient statistics for $\hat{\delta} \sim \mathcal N( \delta, \tau_\theta^2)$, the ladder condition implies  $h(\hat \delta) \equiv 0$, determining a contradiction.
\end{proof}

\subsection{Proof of Proposition \ref{prop:optimality_p_bar}}

\begin{proof}
The case $z=0$ is trivial. 
We prove the result when $z>0$, the case $z<0$ is analogous. 
For any symmetric estimator $\hat p$ satisfying \eqref{eq:symmetric_estimator}, we have
\begin{equation}
    \begin{aligned}
   \mathbb E\left[\hat p(\hat{\partial}_j g(\theta), z)\right] &= \int_{-\infty}^{+\infty} \hat p(\partial_j g(\theta) + \eta_\theta, z) f_\theta(\eta_\theta) d\eta_\theta 
   \\
   &\overset{(I)}{=} 
   \int_{-\infty}^{+\infty} \hat p(\varepsilon, z) f_\theta(\varepsilon -    \partial_j g(\theta) ) d\varepsilon 
   \\
    &= 
    \int_{0}^{+\infty} \hat p(\varepsilon, z) f_\theta(\varepsilon -    \partial_j g(\theta) ) d\varepsilon
    +
    \int_{-\infty}^{0}\hat p(\varepsilon, z) f_\theta(\varepsilon -    \partial_j g(\theta) ) d\varepsilon
    \\
    & \overset{(II)}{=}
    \int_{0}^{+\infty} \hat p(\varepsilon, z) f_\theta(\varepsilon -    \partial_j g(\theta) ) d\varepsilon
    +
    \int_{0}^{+\infty}\hat p(-\epsilon, z) f_\theta(-\epsilon -    \partial_j g(\theta) ) d\epsilon
    \\
    & \overset{(III)}{=} 
    \int_{0}^{+\infty} \hat p(\varepsilon, z) f_\theta(\varepsilon -    \partial_j g(\theta) ) d\varepsilon
    +
    \int_{0}^{+\infty}\hat p(-\epsilon, z) f_\theta(\epsilon +  \partial_j g(\theta) ) d\epsilon
\end{aligned} 
  \label{eq:expectation_p_doublehat}  
\end{equation}
where $(I)$ and ($II$) follows by changes of variables ($\varepsilon: = \eta_\theta + \partial_j g(\theta)$ and $\epsilon := - \varepsilon$, respectively), and $(III)$ follows by Conditions 1 and 3 which implies $f_\theta(x) = f_\theta(-x)$ for all $x \in \mathbb R$.

The expected value of the \textit{extreme}-estimator $\bar{p}$ is given by:
\begin{equation}
\begin{aligned}
    \mathbb E\left[\Bar{p}(\hat{\partial}_j g(\theta),z)\right] &= 
    \int_{-\infty}^{+\infty} \Bar{p}(\partial_j g(\theta) + \eta_\theta
, z)f_\theta(\eta_\theta)d\eta_\theta
\\
& =\int_{-\partial_j g(\theta)}^{+\infty} 1 f_\theta(\eta_\theta)d\eta_\theta 
\\
&  \overset{(I)}{=}\int_{0}^{+\infty} 
1 f_\theta(\epsilon -\partial_j g(\theta))d\epsilon
\\
&\overset{(II)}{=}
\int_{0}^{+\infty} \hat p\left(\epsilon,z\right) 
f_\theta(\epsilon -\partial_j g(\theta)) d\epsilon 
+
\int_{0}^{+\infty} \hat p\left(-\epsilon,z\right)f_\theta(\epsilon -\partial_j g(\theta)) d\epsilon
\end{aligned}
\label{eq:expectation_p_bar_2}
\end{equation}
where $(I)$ follows from the change of variable $\epsilon:= \eta_\theta + \partial_j g(\theta)$ and $(II)$ follows from \eqref{eq:symmetric_estimator}.
Thus
\begin{equation}
    \label{eq:diff_p_bar_p_sym}
 \mathbb E\left[\Bar{p}(\hat{\partial}_j g(\theta),z)\right]
 -
 \mathbb E\left[\hat p(\hat{\partial}_j g(\theta), z)\right]
 =
 \int_{0}^{+\infty} \hat p\left(-\epsilon,z\right)
 (f_\theta(\epsilon -\partial_j g(\theta))-f_\theta(\epsilon +  \partial_j g(\theta) )) d\epsilon.
\end{equation}
Moreover, note that by Conditions 1 and 3, if $\partial_j g(\theta) >0$ we have $f_\theta(\epsilon +  \partial_j g(\theta) )< f_\theta(\epsilon - \partial_j g(\theta) )$, while if $\partial_j g(\theta) <0$ we have $f_\theta(\epsilon +  \partial_j g(\theta) )> f_\theta(\epsilon - \partial_j g(\theta) )$, for every $\epsilon >0$.   
Hence, the right hand side in equation \eqref{eq:diff_p_bar_p_sym} is greater the than $0$ if $\partial_j g(\theta) >0$ 
and smaller then $0$ otherwise, with equality holding if and only if $\hat p\left(\epsilon,z\right)$ for all $\epsilon<0$, which holds if and only if $\hat p = \bar p$, if we exclude the trivial estimator $\hat p \equiv 1$.
\end{proof}

\subsection{Proof of Corollary \ref{cor:optimality_p_bar}}\begin{proof}
    By Proposition \ref{prop:optimality_p_bar}, we have $\mathbb E\left[\bar{p}\left(\hat{\partial}_j g(\theta), z\right)\right]> \mathbb E\left[\hat{p}\left(\hat{\partial}_j g(\theta), z\right)\right]$ if $\partial_j g(\theta) z>0$, while the reverse inequality holds if  $\partial_j g(\theta) z<0$.
    Moreover, by \eqref{eq:expectation_p_bar_appendix}, when $\tau_{\theta}>\bar\tau(\partial_j g(\theta),z)$ the expectation of $\bar{p}\left(\hat{\partial}_j g(\theta), z\right)$ is shrunk to $0.5$ compared to $p\left(\partial_j g(\theta), z\right)$. Thus, for $\partial_j g(\theta)z>0$, we have 
 \begin{equation} 0.5 <  \mathbb E\left[\hat{p}\left(\hat{\partial}_j g(\theta), z\right)\right] <  \mathbb E\left[\bar{p}\left(\hat{\partial}_j g(\theta), z\right) \right]  < p\left(\partial_j g(\theta), z\right),\end{equation} while the reverse inequalities hold when $\partial_j g(\theta)z<0$, proving the result. \end{proof}

\section{CONNECTION BETWEEN SGBD AND SGLD NOISE TOLERANCE}
One can compare the noise tolerance of SGBD with the one of SGLD. Consider the recursion of SGLD 
\begin{equation}
    \theta^{(t+1)} \gets \theta^{(t)} + \frac{\sigma^2}{2} \hat{\nabla}g(\theta^{(t)}) + z \quad z \sim \mathcal{N}(0, \sigma^2).
\end{equation} Under Condition \ref{cond:normality}, this recursion is equivalent to 
\begin{equation}
    \theta^{(t+1)} \gets \theta^{(t)} + \frac{\sigma^2}{2} \nabla g(\theta^{(t)}) + \tilde z \quad \tilde z \sim \mathcal{N}(0, \sigma^2 +(\sigma^2/2)^2 \tau_\theta^2).
\end{equation}
When $\tau_\theta \leq 2\sigma^{-1}$, it is possible to correct exactly the recursion reducing the variance of the artificial noise: namely, if $z \sim \mathcal{N}(0,\sigma^2 - (\sigma^2/2)^2 \tau_\theta^2 )$, we obtain back the exact unadjusted Langevin proposal, i.e.\ $\theta^{(t+1)} \sim \mathcal{N}( \theta^{(t)} + \frac{\sigma^2}{2} \nabla g(\theta^{(t)}) , \sigma^2)$.
On the contrary, if $\tau_\theta > 2\sigma^{-1}$, simple variance arguments show that there is no distribution for the noise $z$ (assuming it to be independent from $\eta_\theta$) such that the resulting proposal coincides with the exact unadjusted Langevin one. 
In this sense, the noise tolerance of SGLD under Condition \ref{cond:normality} is $\tau_{SGLD}^* = 2\sigma^{-1}$. 
Contrary to the one of SGBD, such value depends directly on the hyperparameter $\sigma$ and not on the sampled increment $z$. 
However, there are similarities with the one of SGBD identified in Section \ref{sec:noise_tolerance} of the paper. 
In particular, 
%For example, when $\mu = 0.5N(-\sigma,(0.1\sigma)^2)+0.5N(\sigma,(0.1\sigma)^2)$, 
the size of the proposed increment under $\mu$ is of order 
%will lie with high probability in a small neighborhood of 
$\sigma$, think e.g.\ at the choice $\mu = 0.5N(-\sigma,(0.1\sigma)^2)+0.5N(\sigma,(0.1\sigma)^2)$, meaning that we can roughly interpret the noise tolerance of SGBD under Condition \ref{cond:normality} as 
%Hence, we can further approximate the $\tau^*$ as
being $\tau^* \approx 1.596\sigma^{-1}$. 
This value is similar to the one of SGLD and exhibit the same dependance with respect to $\sigma$, despite the constant in front being slightly smaller. 

This rough analysis suggests that the amount of Gaussian noise the two algorithms can tolerate while being able to exactly recover the original proposal is similar. 
On the other hand, one can expect that, due to the non-linear use of gradients in SGBD, the two algorithms will exhibit significantly different behavior in the case of noise $\eta_\theta$ with larger variance and/or an heavier-tail distribution, with SGBD providing a more and stable robust behaviour (e.g.\ in the sense of resulting in a proposal closer to the original exact unadjusted scheme). 
We illustrate this phenomenon numerically in this section, leaving a more detailed theoretical analysis of this behaviour to future work.

We consider a standard Normal target, i.e. $\pi(\theta) = \phi(\theta)$ and add to the true gradient symmetric noise with Laplace or Cauchy distribution and scale parameter $\tau_\theta$.
\begin{figure}[h!]
     \centering
     \begin{subfigure}[b]{0.48\textwidth}
         \centering
         \includegraphics[width=\textwidth]{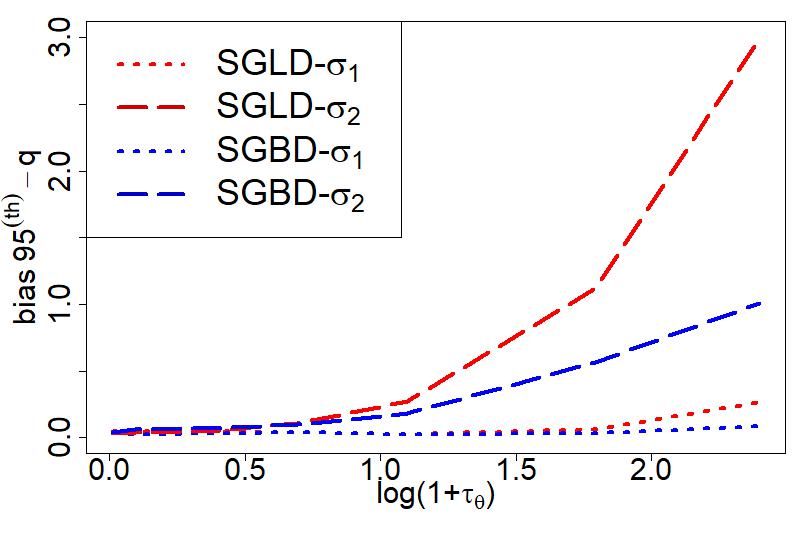}
         \caption{}
         \label{fig:toy_ht_laplace_q_95}
     \end{subfigure}
     \hfill
     \begin{subfigure}[b]{0.48\textwidth}
         \centering
         \includegraphics[width=\textwidth]{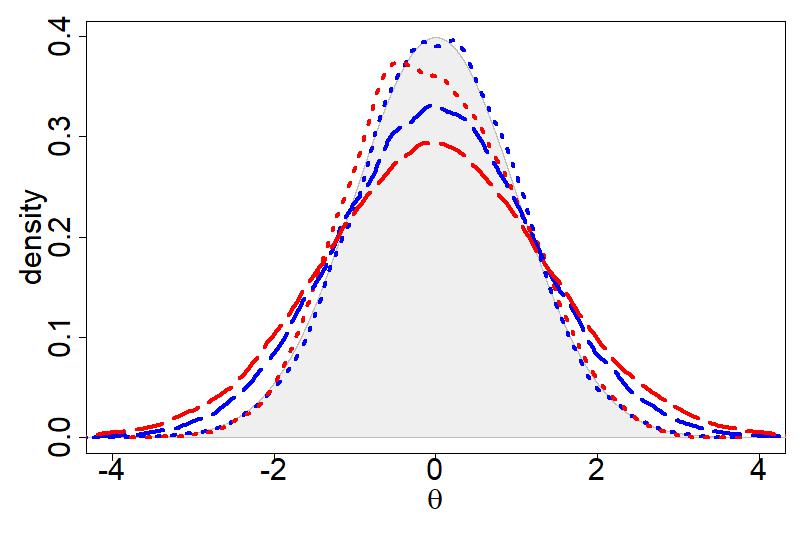}
         \caption{}
         \label{fig:toy_ht_laplace_density}
     \end{subfigure}
 %         \vspace{.3in}
       \caption[Toy Example: Normal Target with isotropic Heavy-Tailed Noise]{Toy Example: Normal Target with Laplace Noise in gradient estimates. Scale parameter of the noise distribution (in log-scale) versus bias of the $95^{th}$ quantile (left) and estimate of the stationary distributions when $\tau_\theta = e^{1.5}-1$ (right). Red refers to v-SGLD and blue to v-SGBD. Dotted (dashed resp.) lines are produced with $\sigma_1=0.1$ ($\sigma_2=0.5$ resp.).}
        \label{fig:toy_ht_laplace}
\end{figure}
\begin{figure}[h!]
     \centering
     \begin{subfigure}[b]{0.48\textwidth}
         \centering
         \includegraphics[width=\textwidth]{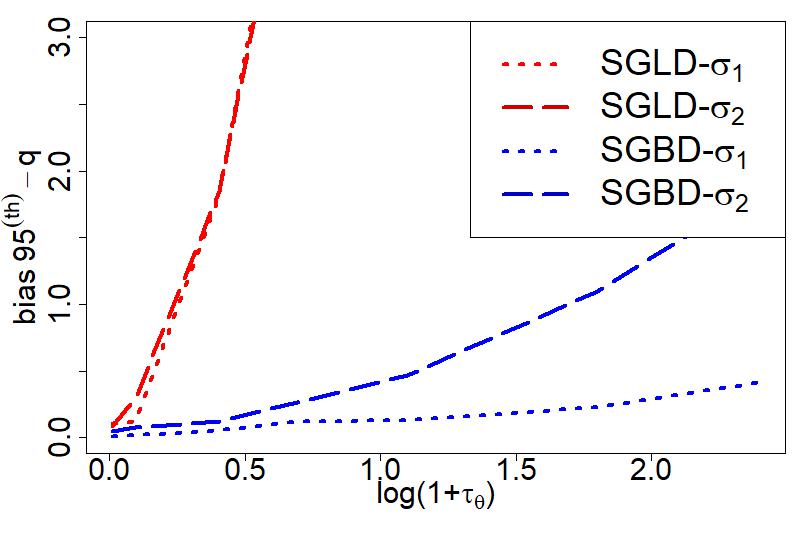}
         \caption{}
         \label{fig:toy_ht_cauchy_q_95}
     \end{subfigure}
     \hfill
     \begin{subfigure}[b]{0.48\textwidth}
         \centering
         \includegraphics[width=\textwidth]{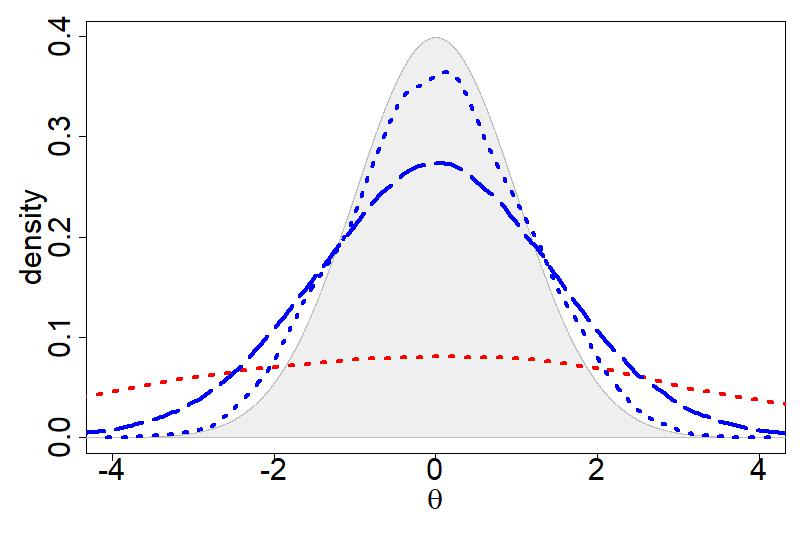}
         \caption{}
         \label{fig:toy_ht_cauchy_density}
     \end{subfigure}
%          \vspace{.3in}
       \caption[Toy Example: Normal Target with isotropic Heavy-Tailed Noise]{Toy Example: Normal Target with Cauchy Noise in gradient estimates. Scale parameter of the noise distribution (in log-scale) versus bias of the $95^{th}$ quantile (left) and estimate of the stationary distributions when $\tau_\theta = e^{1.5}-1$ (right). Red refers to v-SGLD and blue to v-SGBD. Dotted (dashed resp.) lines are produced with $\sigma_1=0.1$ ($\sigma_2=0.5$ resp.). v-SGLD with step-size $\sigma_2$ fails completely to sample from the target and its density estimate is omitted.}
        \label{fig:toy_ht_cauchy}
\end{figure}
Compared to the example presented in Section \ref{subsec:toy_example}, this setting allows to focus on the robustness to heavy-tailed gradient noise rather than skewness in the target distribution. 
We consider two values for the step-size $\sigma$, namely $\sigma_1 = 0.1$ and $\sigma_2 = 0.5$ and run v-SGBD and v-SGLD for $T=2\times 10^5$ discarding the first half as burn-in. 
Figure \ref{fig:toy_ht_laplace_q_95} shows the increase in the bias of the $95^{th}$ quantile of the invariant distribution relative to $\pi$, when $\eta_\theta \sim \mathbf{Laplace}(0, \tau_\theta)$  as $\tau_\theta$ increases, and Figure \ref{fig:toy_ht_laplace_density} reports the density estimates when $\tau_\theta = e^{1.5} - 1$.  With Laplace distributed noise, v-SGLD and v-SGBD with a small step-size behave similarly. However, v-SGBD is more robust than v-SGLD when the step-size is increased and exhibits a smaller bias.
Figure \ref{fig:toy_ht_cauchy} shows the same plots, where the noise is Cauchy distributed, i.e.\  $\eta_\theta \sim \mathbf{Cauchy}(0, \tau_\theta)$. In this scenario, v-SGBD is dramatically more robust than v-SGLD.

\section{ADDITIONAL EXPERIMENTS}
We report the additional simulations details and results. All experiments were run on a laptop with 11th Gen Intel(R) Core(TM) i7-1165G7 2.80 GHz using R version 4.3.1.
\subsection{Empirical Simulation Of The Correction for $p$}
We present additional details for Figure \ref{fig:p_hat_simulation} in the paper. 
Figure \ref{fig:p_hat_simulation_2} reports the same plot for three additional coordinates. 
We take the last sample from the v-SGBD chain used to produce the Figure \ref{fig:log_reg_iter_vs_log_loss} in the paper, we repeatedly subsample a mini-batch, store the gradient for each coordinate, estimate its standard deviation, and compute $\hat{p}$ and $\tilde{p}$. 
\begin{figure}[h!]
     \centering
     \begin{subfigure}[b]{0.48\textwidth}
         \centering
         \includegraphics[width=\textwidth]{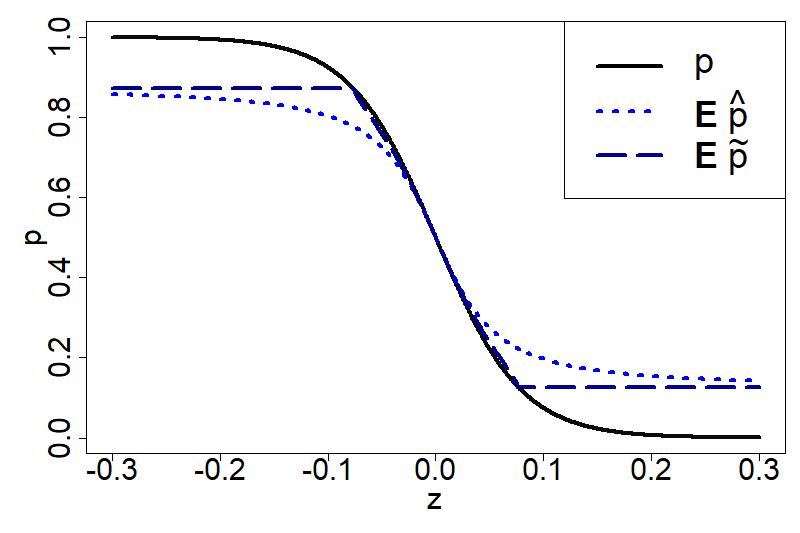}
         \caption{}
         \label{p_hat_plot_a}
     \end{subfigure}
     \hfill
     \begin{subfigure}[b]{0.48\textwidth}
         \centering
         \includegraphics[width=\textwidth]{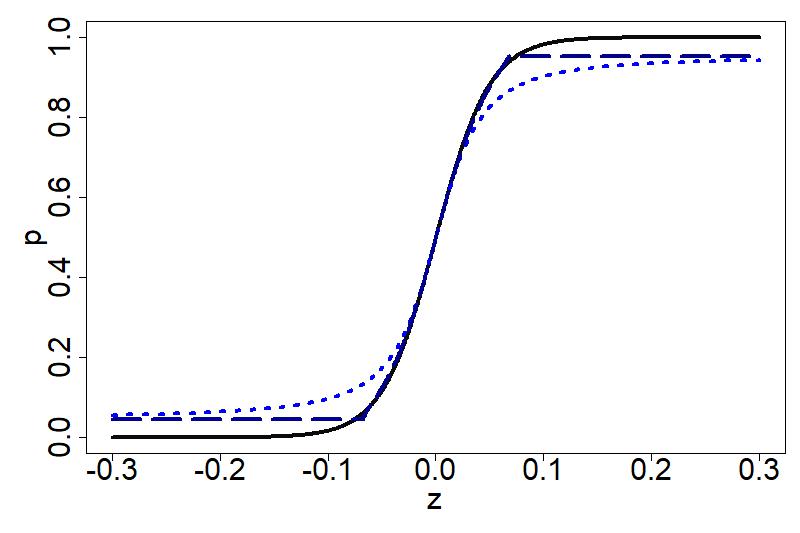}
         \caption{}
         \label{p_hat_plot_b}
     \end{subfigure}
     \vfill
     \begin{subfigure}[b]{0.48\textwidth}
         \centering
         \includegraphics[width=\textwidth]{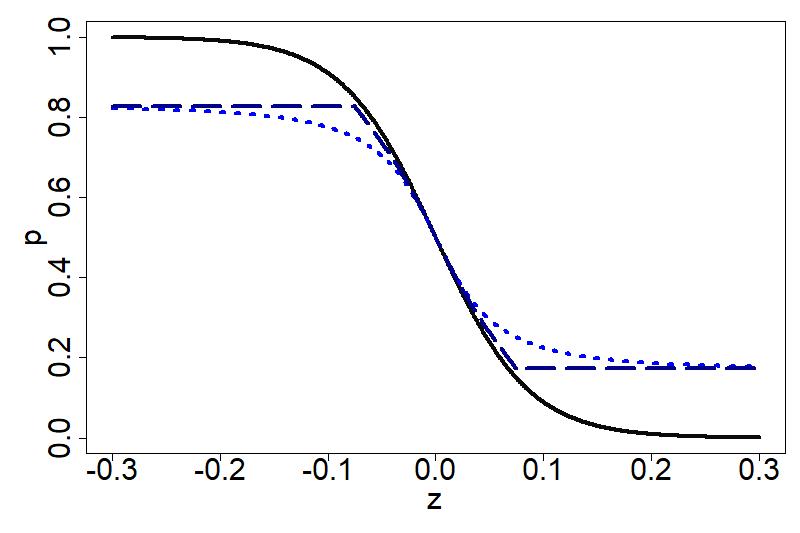}
         \caption{}
         \label{p_hat_plot_c}
     \end{subfigure}
     \hfill
          \begin{subfigure}[b]{0.48\textwidth}
         \centering
         \includegraphics[width=\textwidth]{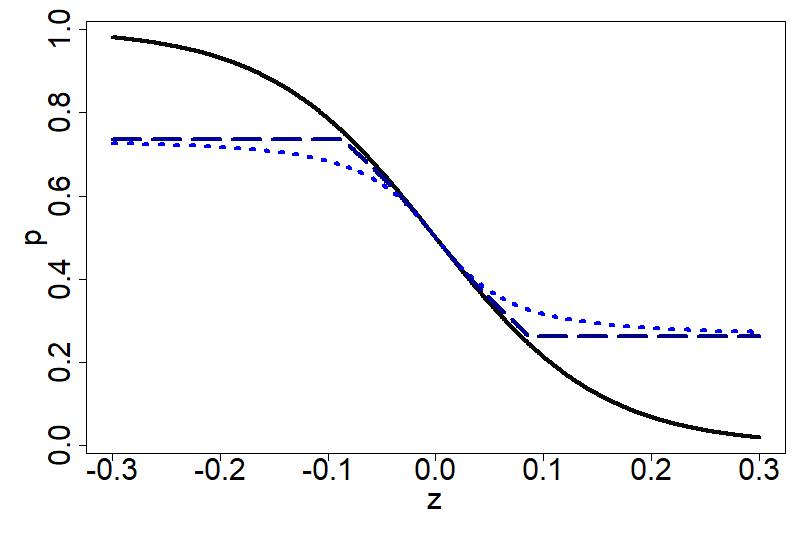}
         \caption{}
         \label{p_hat_plot_d}
     \end{subfigure}
        \vspace{.3in}
        \caption{Shrinkage effect and bias correction. Plot of $p({\partial}_j g(\theta), z)$ (black line; $p$) and Monte Carlo estimates of $\mathbb{E}[p(\hat{\partial}_j g(\theta), z)]$ (dotted blue line; $\textbf{E}\hat{p}$), and $\mathbb{E}[\tilde p(\hat{\partial}_j g(\theta), z)]$ (dashed dark blue line; $\textbf{E}\tilde{p}$) versus the proposed increment $z$; for a logistic regression example with real data.}
        \label{fig:p_hat_simulation_2}
\end{figure}
Figure \ref{fig:p_hat_simulation_2} reports the resulting Monte Carlo average of $\hat{p}$ and $\tilde{p}$ as the value of the proposed increment varies.

The figure was produced with the following values: (a) $\partial_j g(\theta) = -25.15$, $\tau_\theta = 22.72$, (b) $\partial_j g(\theta) = 40.39$, $\tau_\theta = 25.63$, (c) $\partial_j g(\theta) = -23.21$, $\tau_\theta = 22.88$
(d) $\partial_j g(\theta) = -13.03$, $\tau_\theta = 20.27$.

\subsection{Toy Example: Skew-Normal target with isotropic Gaussian Noise}

Figure \ref{toy_plot_1} reports the results obtained using also corrected variants of the algorithms and with a different value for the standard deviation of $\eta_\theta$ for the experiment presented in Section \ref{subsec:toy_example} of the paper. 
\begin{figure}[h!]
     \centering
     \begin{subfigure}[b]{0.48\textwidth}
         \centering
         \includegraphics[width=\textwidth]{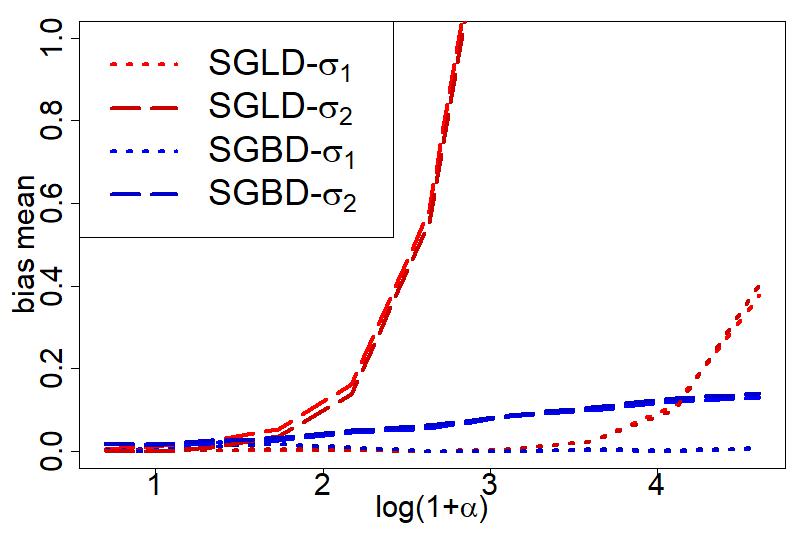}
         \caption{small $\tau_\theta$}
         \label{toy_2_a_plot}
     \end{subfigure}
     \hfill
     \begin{subfigure}[b]{0.48\textwidth}
         \centering
         \includegraphics[width=\textwidth]{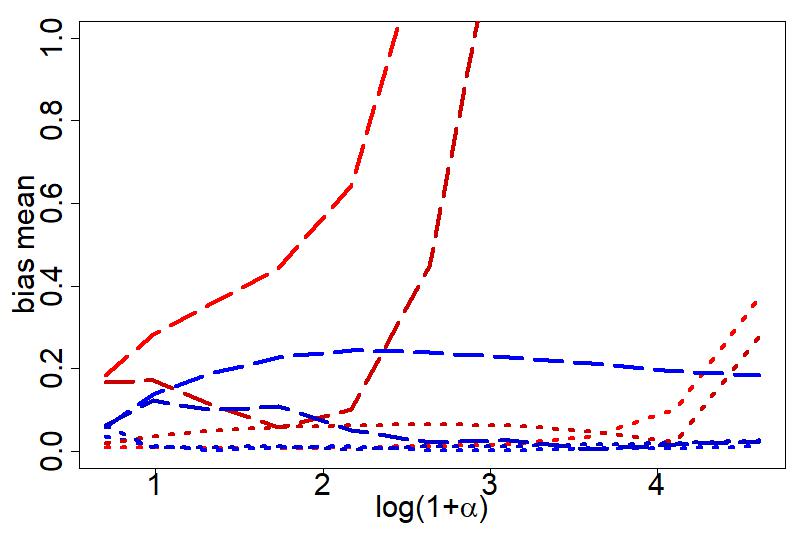}
         \caption{large $\tau_\theta$}
         \label{toy_2_b_plot}
     \end{subfigure}
          \vspace{.3in}
       \caption[Toy Example: Skew-Normal target with isotropic Gaussian Noise]{Toy Example: Skew-Normal target with isotropic Gaussian Noise in gradient estimates. Shape parameter (in log-scale) versus relative bias of mean with noise standard deviation $\tau_\theta = 1 \times sd(\pi_\alpha)$ (left) and $\tau_\theta = 10\times sd(\pi_\alpha)$ (right). Red refers to Langevin-based schemes and blue to Barker-based schemes. Lighter lines refer to vanilla implementations of the algorithm, darker lines to their corrected variants. Dotted (dashed resp.) lines are produced with $\sigma_1=0.1\times sd(\pi_\alpha)$ ($\sigma_2=0.5\times sd(\pi_\alpha)$ resp.).}
        \label{toy_plot_1}
\end{figure}
With a gradient noise standard deviation equal to the one of the target, we observe little difference between vanilla and corrected variants of the algorithms (Figure \ref{toy_2_a_plot}). 
When the stardard deviation if large, i.e.\ $\tau_\theta = 10\times sd(\pi_\alpha)$, corrected variants achieve a lower bias then their vanilla counterparts with a large step-size (Figure \ref{toy_2_b_plot}). Under all configurations, SGBD displays increased robustness to skewness and to tuning with respect to SGLD.

Next, we perform a sensitivity analysis to the step-size choice. Figure \ref{fig:toy_skew_sensitivity_analysis} reports the density estimates of the samples obtained via v-SGBD with the step-size respectively equal to $0.05\times sd(\pi_\alpha)$, $0.1\times sd(\pi_\alpha)$, $0.5\times sd(\pi_\alpha)$, and $0.75\times sd(\pi_\alpha)$, for $\alpha=20$, where $ sd(\pi_\alpha)$ denotes the standard deviation of the target distribution. v-SGBD appears to be very robust to the step-size choice, and only slightly inflates the scale of the target distribution under the configuration with the largest step-size. 
\begin{figure}[h!]
     \centering
     \begin{subfigure}[b]{0.48\textwidth}
         \centering
         \includegraphics[width=\textwidth]{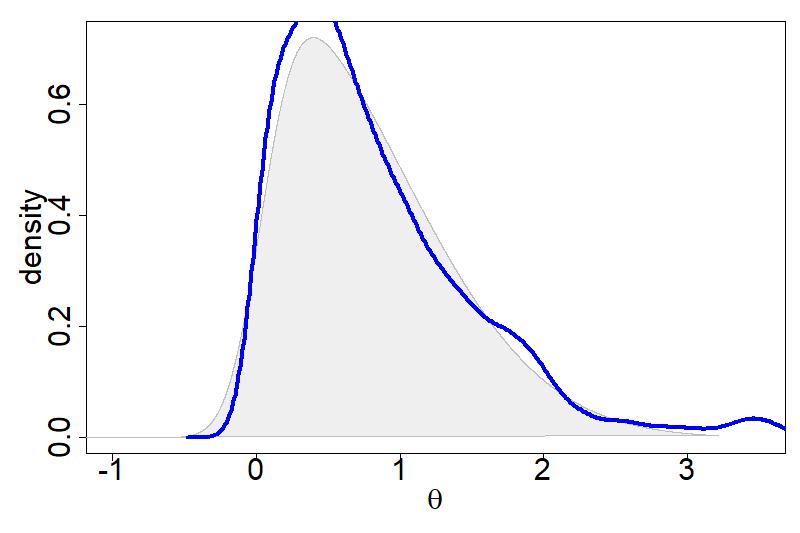}
         %\caption{$\theta_1$}
         \label{}
     \end{subfigure}
     \hfill
     \begin{subfigure}[b]{0.48\textwidth}
         \centering
         \includegraphics[width=\textwidth]{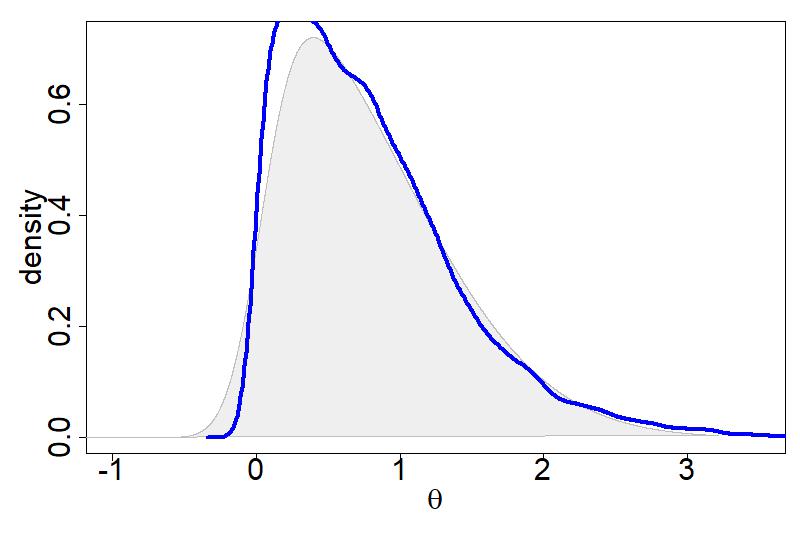}
         %\caption{$\theta_2$}
         \label{}
     \end{subfigure}
     \vfill
     \begin{subfigure}[b]{0.48\textwidth}
         \centering
         \includegraphics[width=\textwidth]{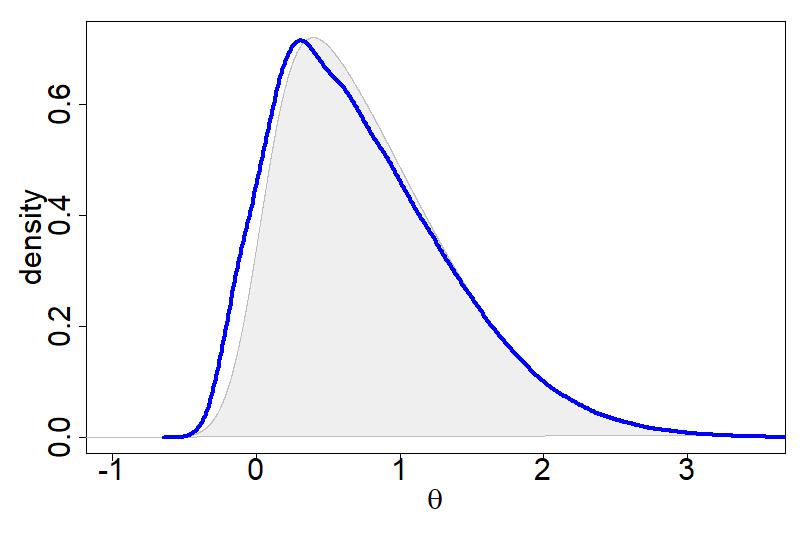}
         %\caption{$\theta_3$}
         \label{}
     \end{subfigure}
     \hfill
    \begin{subfigure}[b]{0.48\textwidth}
         \centering
         \includegraphics[width=\textwidth]{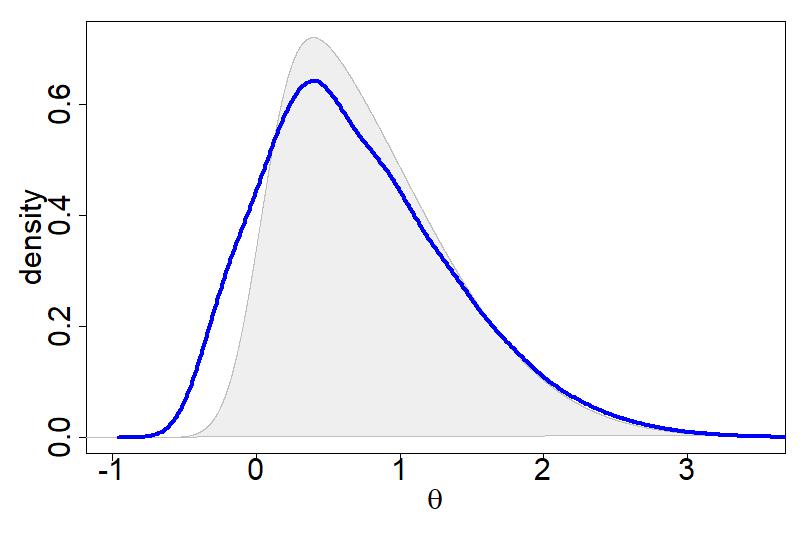}
         %\caption{$\theta_4$}
         \label{}
     \end{subfigure}
          \vspace{.3in}
        \caption[Toy Example: Skew-Normal target with isotropic Gaussian Noise]{Toy Example: Skew-Normal target with isotropic Gaussian Noise in gradient estimates. Density estimates of the samples obtained via v-SGBD with the step-size respectively equal to $0.05\times sd(\pi_\alpha)$, $0.1\times sd(\pi_\alpha)$, $0.5\times sd(\pi_\alpha)$, and $0.75\times sd(\pi_\alpha)$, for $\alpha=20$.}
        \label{fig:toy_skew_sensitivity_analysis}
\end{figure}

%\clearpage
\begin{comment}

\end{comment}

\subsection{Binary Regression With Scale Heterogeneity}

Figure \ref{sepsis_log_reg_hist_1} shows the density estimates for all coordinates of the posterior samples obtained in the experiment in Section \ref{subsec:log_reg_sepsis}, and Figure \ref{traceplots:log_reg_sepsis_appendix} shows the corresponding trace plots. 
\begin{figure}[h!]
     \centering
     \begin{subfigure}[b]{0.48\textwidth}
         \centering
         \includegraphics[width=\textwidth]{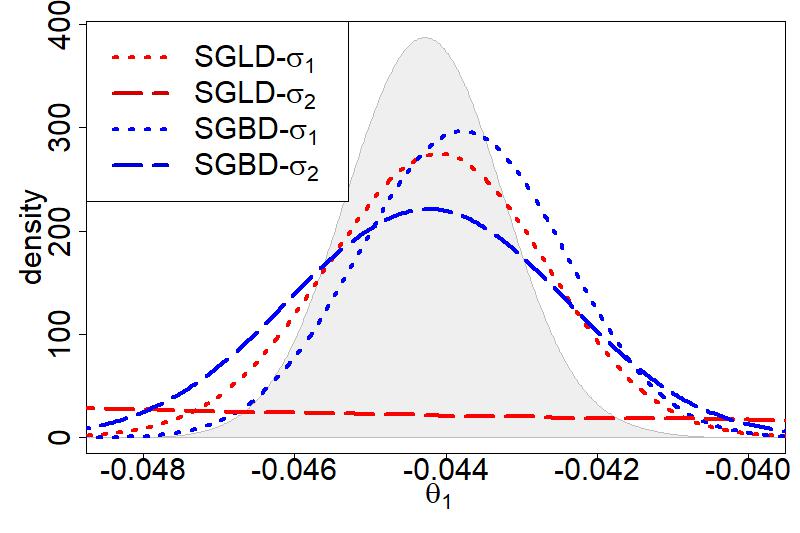}
         %\caption{$\theta_1$}
         \label{}
     \end{subfigure}
     \hfill
     \begin{subfigure}[b]{0.48\textwidth}
         \centering
         \includegraphics[width=\textwidth]{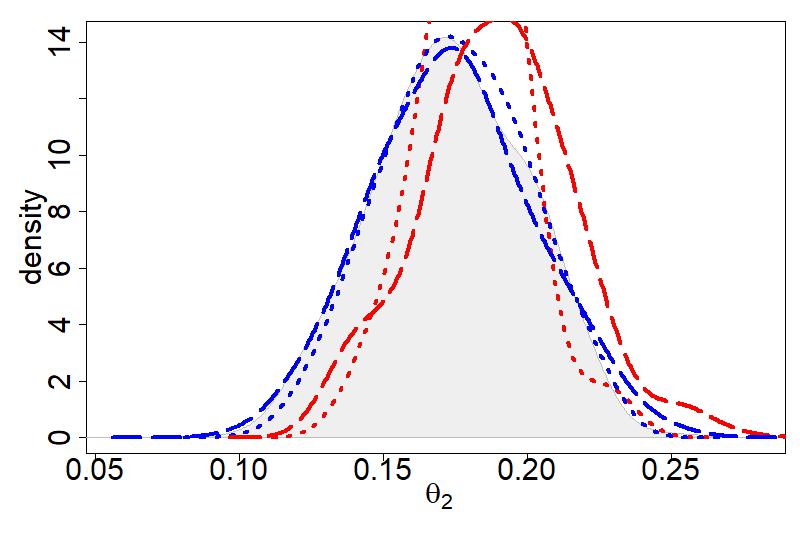}
         %\caption{$\theta_2$}
         \label{}
     \end{subfigure}
     \vfill
     \begin{subfigure}[b]{0.48\textwidth}
         \centering
         \includegraphics[width=\textwidth]{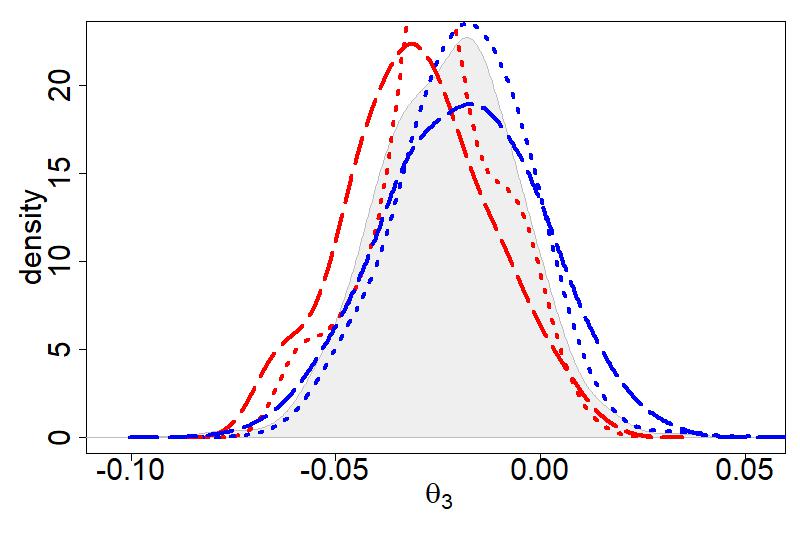}
         %\caption{$\theta_3$}
         \label{}
     \end{subfigure}
     \hfill
    \begin{subfigure}[b]{0.48\textwidth}
         \centering
         \includegraphics[width=\textwidth]{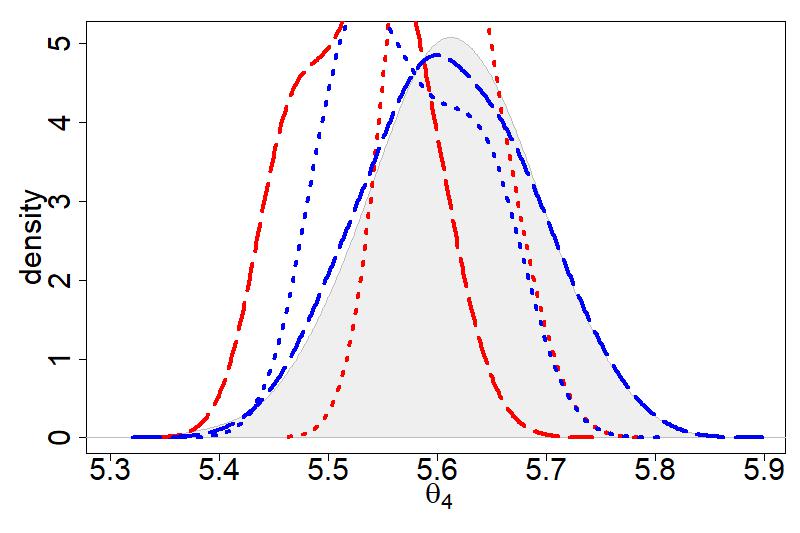}
         %\caption{$\theta_4$}
         \label{}
     \end{subfigure}
          \vspace{.3in}
        \caption[Logistic Regression Univariate distributions with heterogeneity of scale]
        {Logistic Regression with Scale Heterogeneity on the Sepsis dataset. Blue (red resp.) lines represent the density estimates of v-SGBD (v-SGLD) samples. Dotted (dashed resp.) lines are produced using a smaller (larger resp.) step size. Grey areas represent the estimate of the marginal posterior densities obtained with STAN.}
        \label{sepsis_log_reg_hist_1}
\end{figure}
\begin{figure}[h!]
     \centering
     \begin{subfigure}[b]{0.48\textwidth}
         \centering
         \includegraphics[width=\textwidth]{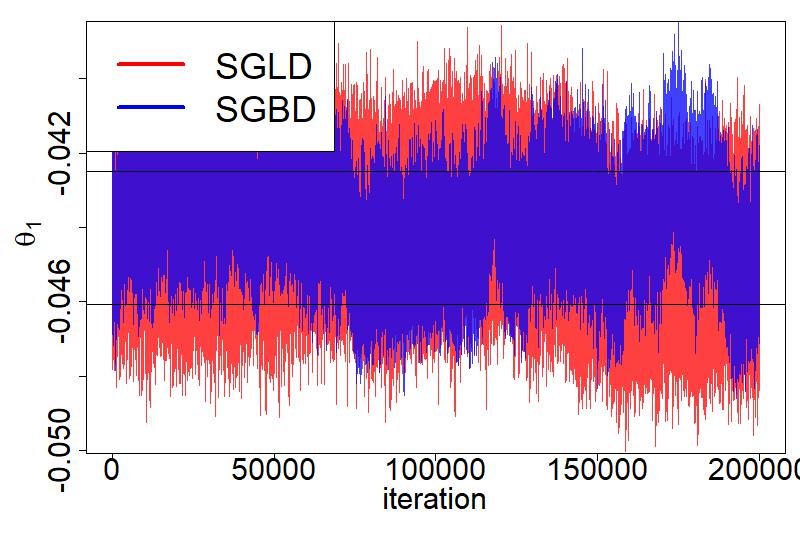}
         %\caption{}
         \label{traceplot:log_reg_sepsis_1}
     \end{subfigure}
     \hfill
     \begin{subfigure}[b]{0.48\textwidth}
         \centering
         \includegraphics[width=\textwidth]{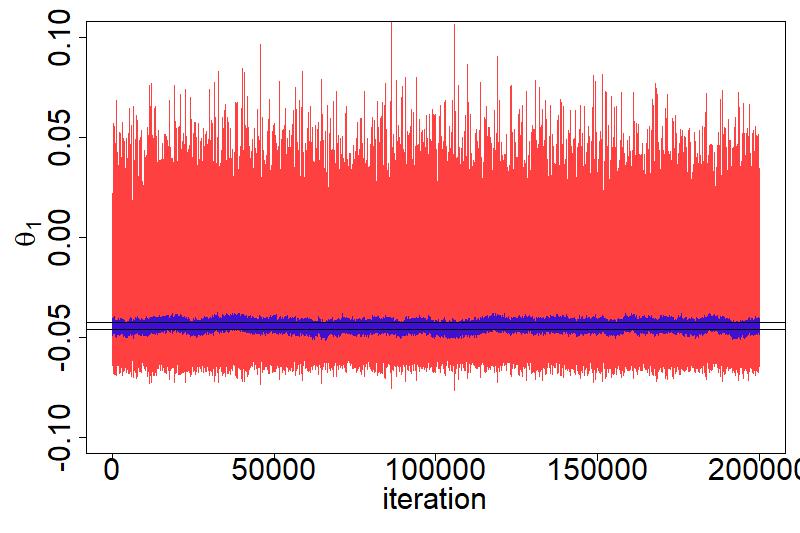}
         %\caption{}
         \label{traceplot:log_reg_sepsis_2}
     \end{subfigure}
     \vfill
     \begin{subfigure}[b]{0.48\textwidth}
         \centering
         \includegraphics[width=\textwidth]{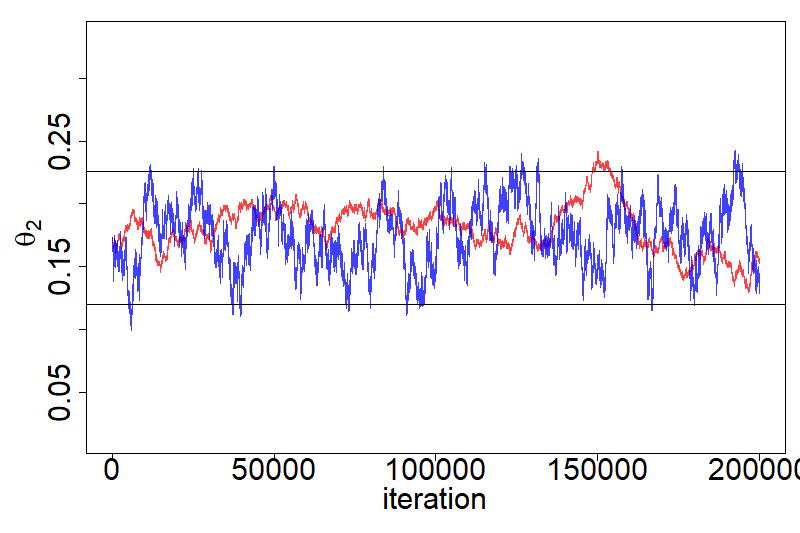}
         %\caption{}
         \label{traceplot:log_reg_sepsis_3}
     \end{subfigure}
     \hfill
     \begin{subfigure}[b]{0.48\textwidth}
         \centering
         \includegraphics[width=\textwidth]{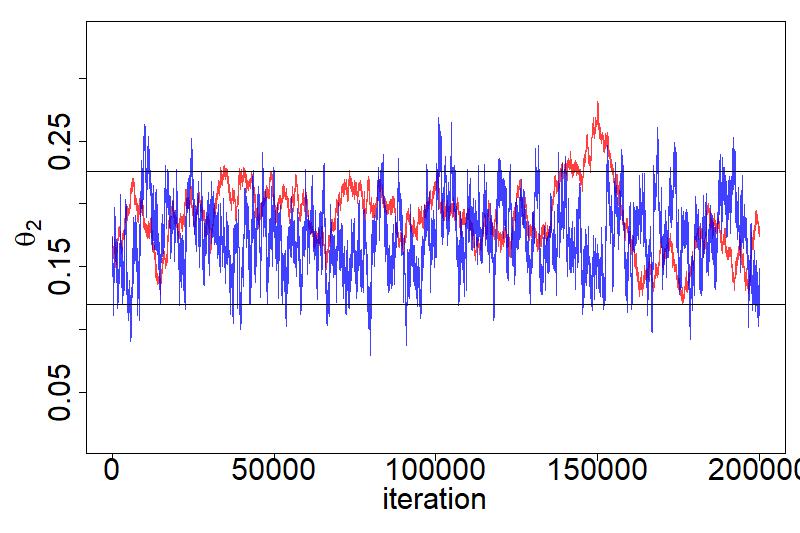}
         %\caption{}
         \label{traceplot:log_reg_sepsis_4}
         \end{subfigure}
    \begin{subfigure}[b] {0.48\textwidth}
         \centering
         \includegraphics[width=\textwidth]{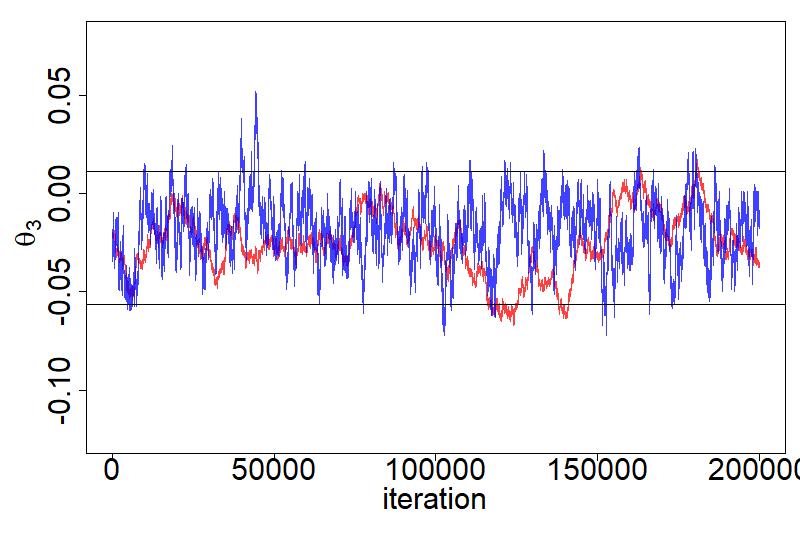}
         %\caption{}
         \label{traceplot:log_reg_sepsis_1}
     \end{subfigure}
     \hfill
     \begin{subfigure}[b]{0.48\textwidth}
         \centering
         \includegraphics[width=\textwidth]{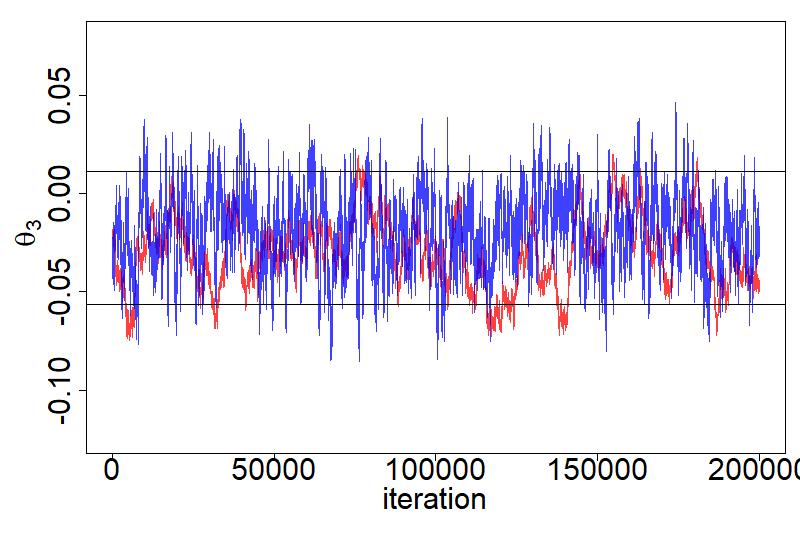}
         %\caption{}
         \label{traceplot:log_reg_sepsis_2}
     \end{subfigure}
     %\vfill
     \end{figure}%
\begin{figure}[t]\ContinuedFloat
     \begin{subfigure}[b]{0.48\textwidth}
         \centering
         \includegraphics[width=\textwidth]{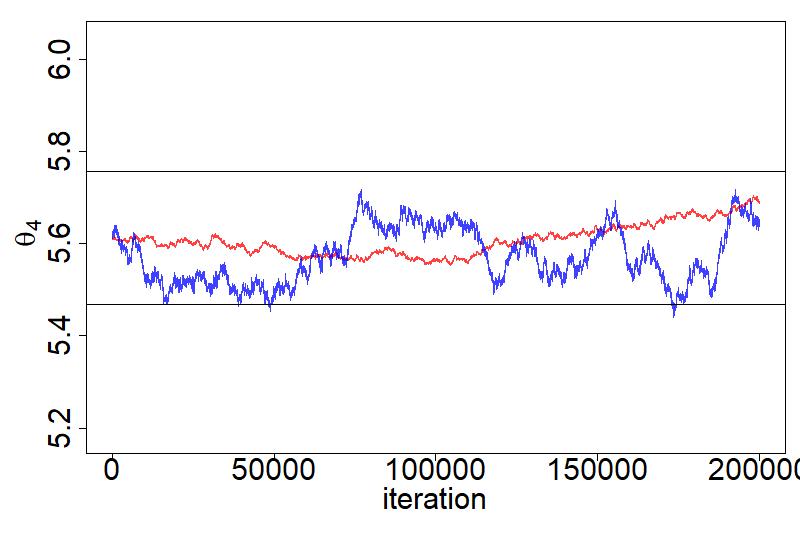}
         %\caption{}
         \label{traceplot:log_reg_sepsis_3}
     \end{subfigure}
     \hfill
     \begin{subfigure}[b]{0.48\textwidth}
         \centering
         \includegraphics[width=\textwidth]{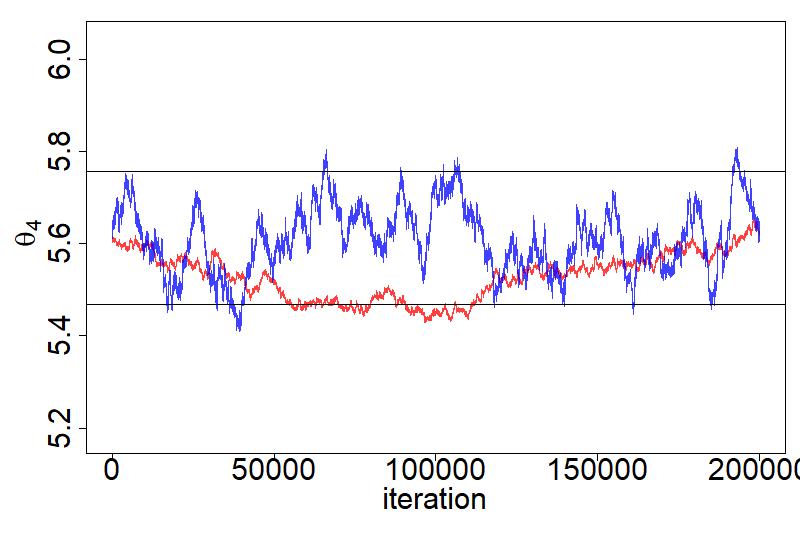}
         %\caption{}
         \label{traceplot:log_reg_sepsis_4}
     \end{subfigure}
          \vspace{.3in}
        \caption[Logistic Regression with Scale Heterogeneity]{Logistic Regression with Scale Heterogeneity on the Sepsis dataset. Traceplots of all the coordinates with two step-size configurations: small $\sigma$ (left), large $\sigma$ (right). Blue refers to v-SGBD and red to v-SGLD. Black horizontal lines represent the interval around the posterior mean with a two standard deviations width.}
        \label{traceplots:log_reg_sepsis_appendix}
    \end{figure}

Figure \ref{fig:sepsis_log_reg_mamba} reports the density estimates for all coordinates of the posterior samples from a second experiment where we tuned the step-sizes via MAMBA \citep{mamba}. MAMBA uses a multi-armed bandits algorithm to minimize the Finite Set Stein Discrepancy \citep{fssd} between true posterior and its Monte
Carlo approximation. 
MAMBA selects a step-size equal to $3.24 \times 10^{-4}$ and $1.36 \times 10^{-3}$ for v-SGLD and v-SGBD respectively. In general, the chosen step-sizes match the largest scale of the coordinates but are too large for the first one. We note that SGBD outperforms SGLD in particular for the first coordinate where SGLD remarkably inflates its variance. \begin{figure}[h!]
     \centering
     \begin{subfigure}[b]{0.48\textwidth}
         \centering
         \includegraphics[width=\textwidth]{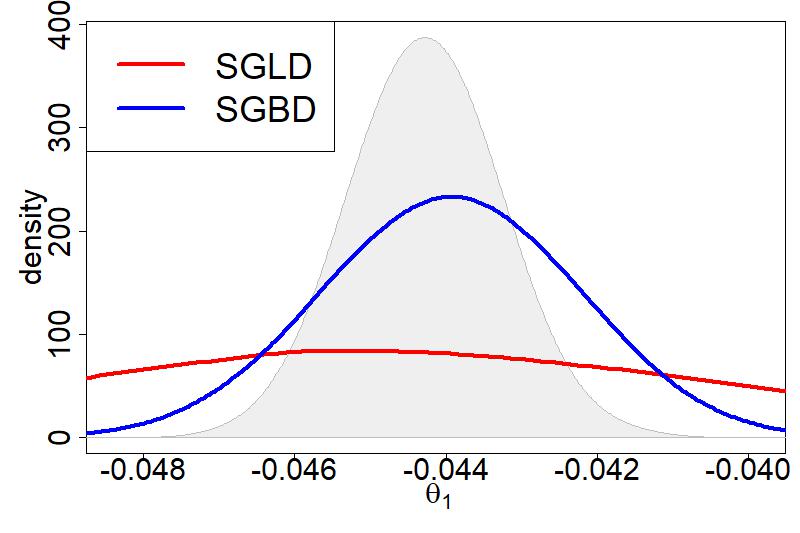}
         %\caption{$\theta_1$}
         \label{}
     \end{subfigure}
     \hfill
     \begin{subfigure}[b]{0.48\textwidth}
         \centering
         \includegraphics[width=\textwidth]{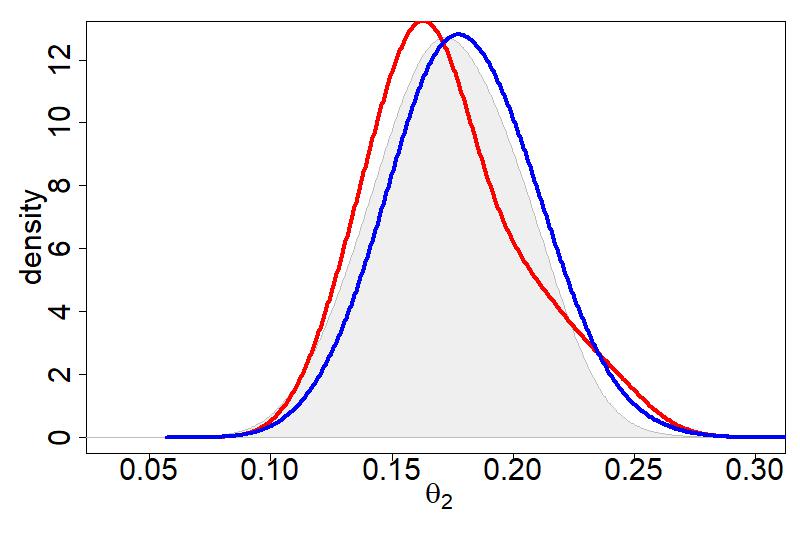}
         %\caption{$\theta_2$}
         \label{}
     \end{subfigure}
     \vfill
     \begin{subfigure}[b]{0.48\textwidth}
         \centering
         \includegraphics[width=\textwidth]{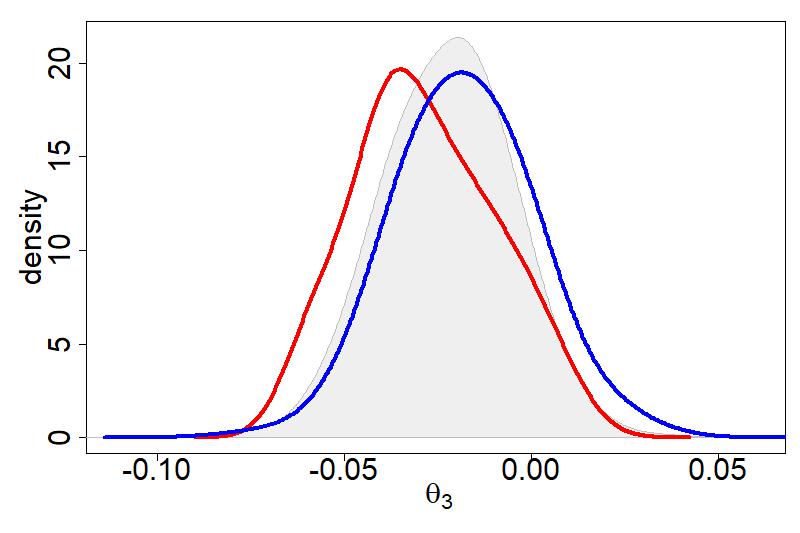}
         %\caption{$\theta_3$}
         \label{}
     \end{subfigure}
     \hfill
    \begin{subfigure}[b]{0.48\textwidth}
         \centering
         \includegraphics[width=\textwidth]{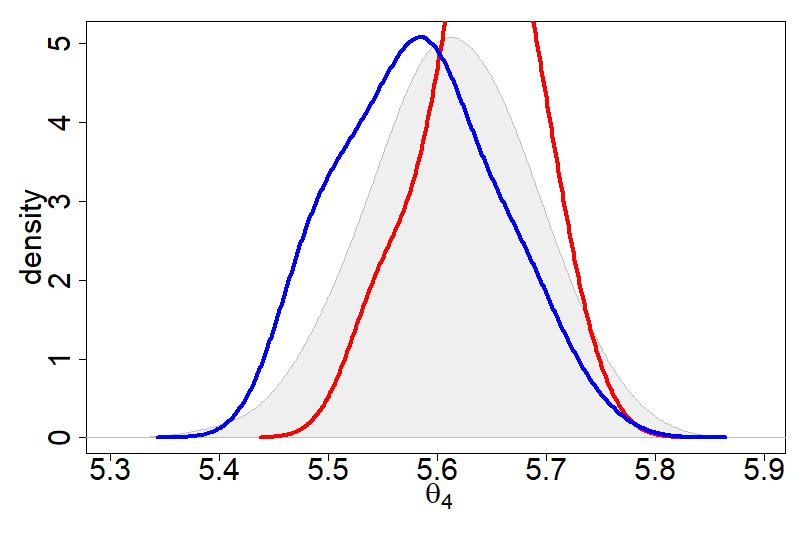}
         %\caption{$\theta_4$}
         \label{}
     \end{subfigure}
          \vspace{.3in}
        \caption[Logistic Regression Univariate distributions with heterogeneity of scale]{Logistic Regression with Scale Heterogeneity on the Sepsis dataset. Blue (red resp.) lines represent the density estimates of v-SGBD (v-SGLD) samples tuning the step-size via MAMBA. Grey areas represent the estimate of the marginal posterior densities obtained with STAN. }
        \label{fig:sepsis_log_reg_mamba}
\end{figure}

Figure \ref{sepsis_log_reg_plot_2} reports the density estimates for all coordinates of the posterior samples from a third experiment where we selected different step-sizes for each coordinated. In particular, we are interested in the case where a diagonal preconditioner is applied and we set $\sigma_{1,j}=0.1\times sd(\pi_j)$ and $\sigma_{2,j}=0.2\times sd(\pi_j)$, for $j=1, \dots, 4$, where $sd(\pi_j)$ denotes the standard deviation of the posterior distribution for $\theta_j$. The aim of this experiment is to study the performance of the algorithm with a correct tuning of the step-size across the coordinates. However, we note that this is in general not easily to do in practice as it requires to have access to posterior quantities which are in general unknown a priori and usually estimated with MCMC. 
\begin{figure}[h!]
     \centering
     \begin{subfigure}[b]{0.48\textwidth}
         \centering
         \includegraphics[width=\textwidth]{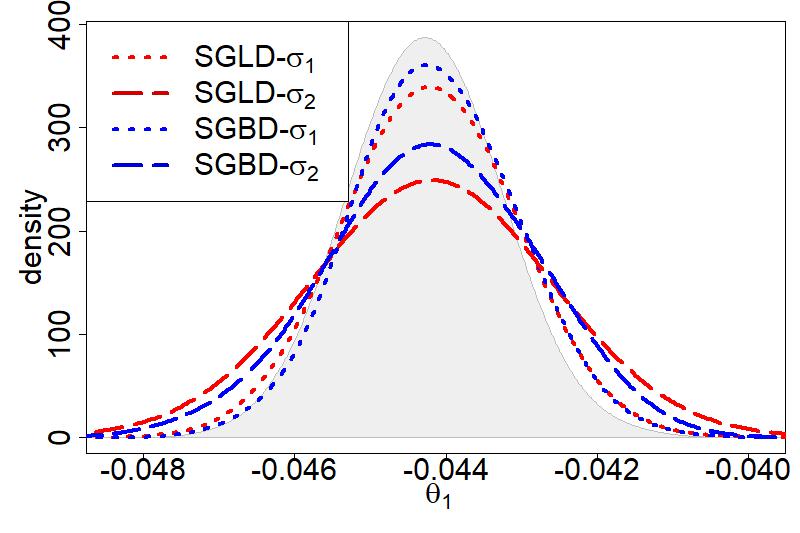}
         %\caption{$\theta_1$}
         \label{}
     \end{subfigure}
     \hfill
     \begin{subfigure}[b]{0.48\textwidth}
         \centering
         \includegraphics[width=\textwidth]{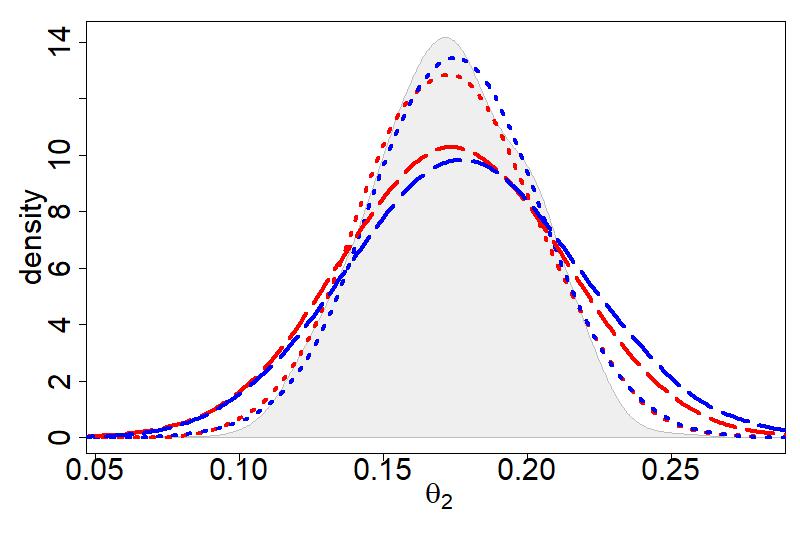}
         %\caption{$\theta_2$}
         \label{}
     \end{subfigure}
     \vfill
     \begin{subfigure}[b]{0.48\textwidth}
         \centering
         \includegraphics[width=\textwidth]{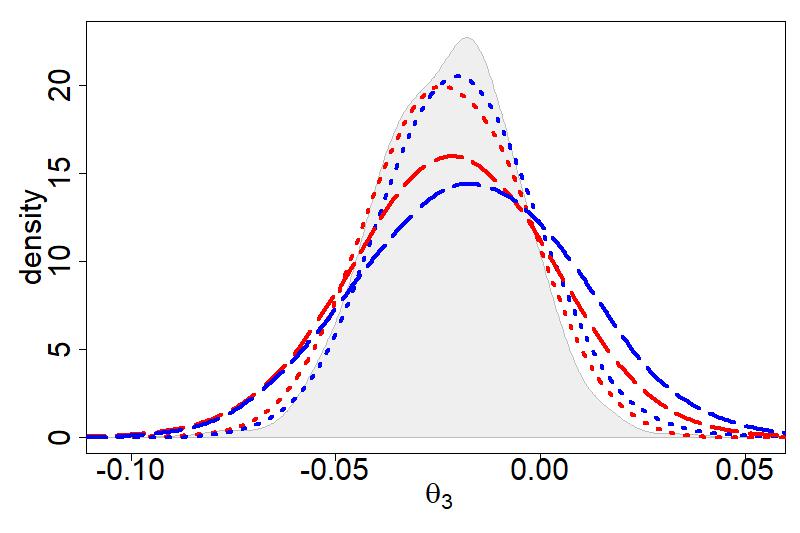}
         %\caption{$\theta_3$}
         \label{}
     \end{subfigure}
     \hfill
    \begin{subfigure}[b]{0.48\textwidth}
         \centering
         \includegraphics[width=\textwidth]{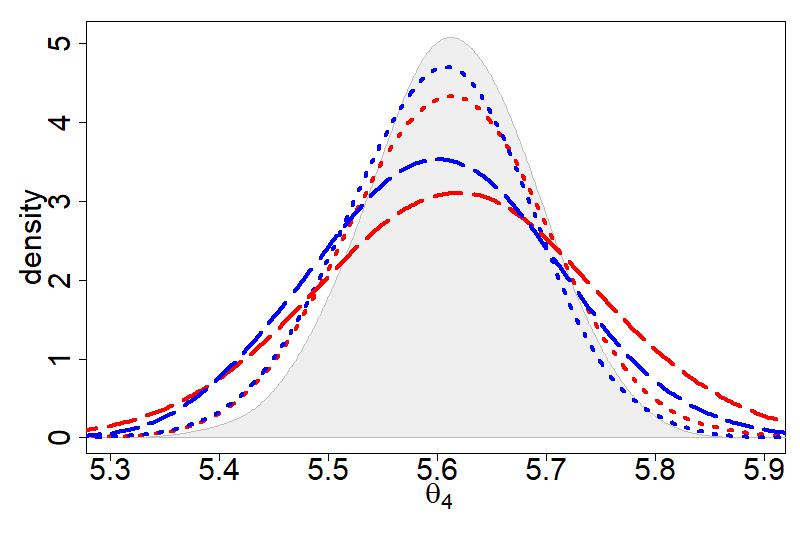}
         %\caption{$\theta_4$}
         \label{}
     \end{subfigure}
          \vspace{.3in}
        \caption[Logistic Regression Univariate distributions with heterogeneity of scale]{Logistic Regression with Scale Heterogeneity on the Sepsis dataset. Blue (red resp.) lines represent the density estimates of v-SGBD (v-SGLD) samples using different step-sizes for each coordinate. Dotted (dashed resp.) lines are produced with $\sigma_{1,j}=0.1\times sd(\pi_j)$ ($\sigma_{2,j}=0.2\times sd(\pi_j)$ resp.) for $j=1, \dots, 4$. Grey areas represent the estimate of the marginal posterior densities obtained with STAN. }
        \label{sepsis_log_reg_plot_2}
\end{figure}
In this scenario, clearly both algorithms perform better than in the previous one and sample accurately with small step-sizes (dotted lines in Figure \ref{sepsis_log_reg_plot_2}). With a larger step-size (dashed lines in Figure \ref{sepsis_log_reg_plot_2}), the samplers moderately inflate the variance of the marginals distribution with SGBD performing slightly better.

\clearpage
\subsection{Binary Regression With High-Dimensional Predictors}
This section studies the performance of SGBD on ill-conditioned high dimensional logistic regression task using model \eqref{eq:log_reg_model}.
We apply the model to the \href{https://archive.ics.uci.edu/ml/datasets/arrhythmia}{Arrhythmia dataset} from the UCI repository. The dataset contains 452 instances and 279 covariates, from which we retain the first 100. A random $80-20\%$ train-test split is applied to the dataset, and we run the samplers for $T=10^5$, discarding the first half as burn-in, iteration using a mini-batch of $n=34$.

We are interested in how hyperparameter tuning affects the sampling accuracy of the algorithm. 
In particular, we study the trade-off between mixing and sampling accuracy, since increasing the step size of the algorithms produces better mixing but less accurate chains, as no MH step is used. 
Figure \ref{fig:arr_log_reg_mixing_accuracy} shows how sampling accuracy decreases as mixing increases when the step sizes vary. 
Mixing is measured with the median effective sample size (ESS) across the parameters and sampling accuracy with the mean standardized $1^{st}$ and $2^{nd}$ order bias,
which are defined as follows:
\begin{equation}
\begin{aligned}
      &\textit{Bias}\left(\mathbb E [\theta_j \mid \{y_i\}_{i=1}^n]\right) = \frac{\left|\Bar{\theta}_j - \mathbb E [\theta_j \mid \{y_i\}_{i=1}^n]\right|}{\left(\mathbb {V} [\theta_j \mid \{y_i\}_{i=1}^n]\right)^{1/2}}\\
      &\textit{Bias}\left(\mathbb V [\theta_j \mid \{y_i \}_{i=1}^n]\right) = \frac{\left|\Bar{\tau}_{\theta_i}^2 - \mathbb V [\theta_j \mid \{y_i \}_{i=1}^n]\right|}{\left(\mathbb {V} [\theta_j \mid \{y_i \}_{i=1}^n]\right)^{1/2}}
\end{aligned} \quad j=1, \dots d,
\end{equation}
where $\bar{\theta}_j = \frac{\sum_{t=1}^T \theta_j^{(t)}}{T}$ and $\Bar{\tau}^2_{\theta_j} = \frac{\sum_{t=1}^T\left(\theta_{j}^{(t)} - \bar{\theta}_j\right)^2}{T-1}$.
\begin{figure}[h!]
     \centering
     \begin{subfigure}[b]{0.48\textwidth}
         \centering
         \includegraphics[width=\textwidth]{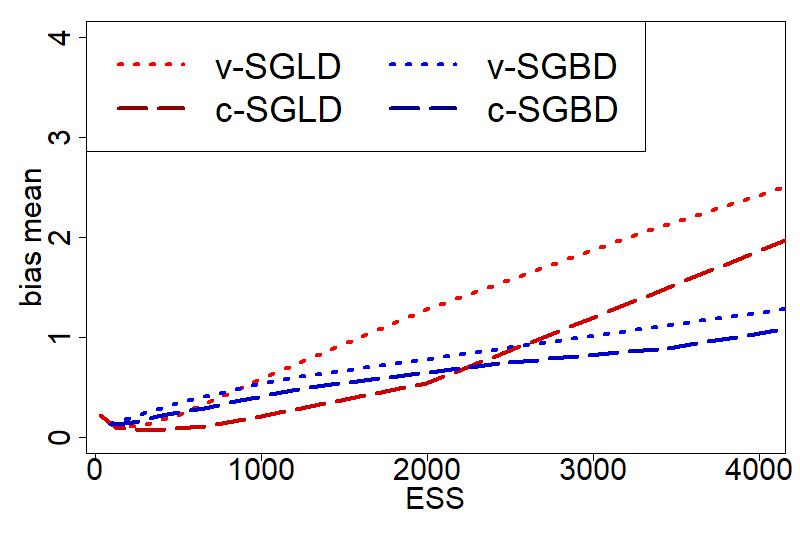}
         \caption{ESS vs bias of mean}
         \label{log_reg_ess_vs_bias_mean}
     \end{subfigure}
     \hfill
     \begin{subfigure}[b]{0.48\textwidth}
         \centering
         \includegraphics[width=\textwidth]{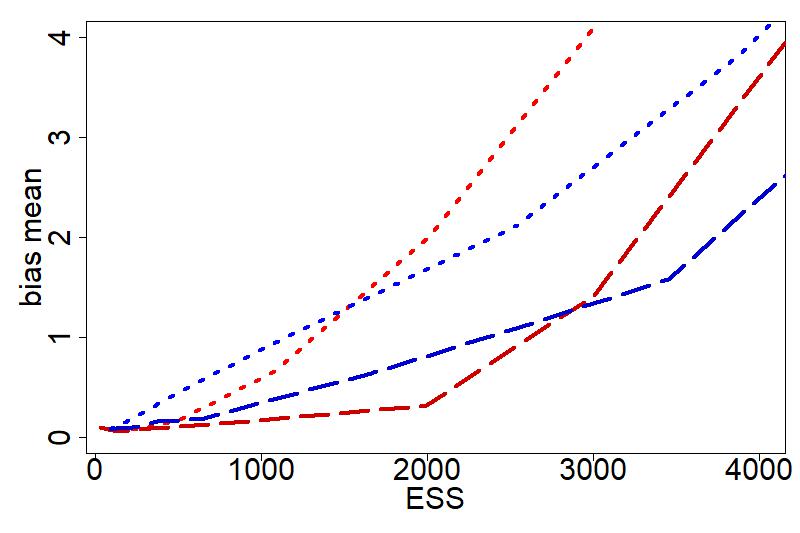}
         \caption{ESS vs bias of variance}
         \label{log_reg_ess_vs_bias_var}
     \end{subfigure}
          \vspace{.3in}
        \caption[Logistic Regression Mixing-Accuracy trade off]{Logistic Regression Mixing-Accuracy trade off on the Arrhythmia dataset. Median ESS versus mean bias of mean (left) and mean bias of variance (right). Red refers to SGLD and blue to the SGBD. Lighter dotted lines refer to vanilla implementations of the algorithm, darker dashed lines to their corrected variants. %This figure was obtained using $\tau^2=1$ as the prior variance.
        }
        \label{fig:arr_log_reg_mixing_accuracy}
\end{figure}
The results in Figure \ref{fig:arr_log_reg_mixing_accuracy} suggests that SGLD is more accurate for small step-sizes and low mixing, while SGBD appears to be more robust to hyperparameter tuning. In particular, it enjoys a more favourable mixing-accuracy trade off when the step size is chosen increasingly large. 

Figure \ref{fig:log_reg_iter_vs_log_loss} reports the predictive performance in terms of log-loss on the held-out test set, when hyperparameters are chosen such that the median ESS of the samples is roughly equal to 1000 and are set to $\sigma=0.25, 0.22, 0.07$ for v-SGBD, c-SGBD and e-SGBD and $\sigma=0.14, 0.11, 0.09$ for v-SGLD, c-SGLD and e-SGLD. The log-loss at iteration $t$ is defined as
\begin{equation}
    l(t) =- \frac{1}{|T|} \sum_{i \in \mathcal T} y_i \log\left(\hat{p}(\mathbf{x}_i, \theta)^{(t)}\right) + (1- y_i) \log \left(1 - \hat{p}(\mathbf{x}_i, \theta)^{(t)}\right)
\end{equation}
where $T$ is the test set, $|T|$ denotes its size, and $\hat{p}(\mathbf{x}_i, \theta)^{(t)} = \frac{1}{t}\sum_{k=1}^t\left(1+e^{-x_i^{\top}\theta^{(k)}}\right)^{-1}$ is the ergodic average of the estimate of probability of $Y_i = 1$ given the predictors and samples for the parameter $\theta$. 
SGBD outperforms SGLD in terms of predictive accuracy.

Figure \ref{fig:log_reg_ess_vs_log_loss} reports how the log-loss using all the samples from each chain varies for each configuration of hyperparameter.
In general, SGBD achieves a better predictive accuracy than SGLD with different step-sizes configurations.

\begin{figure}

\begin{subfigure}[b]{0.48 \textwidth}
         \centering
         \includegraphics[width=\textwidth]{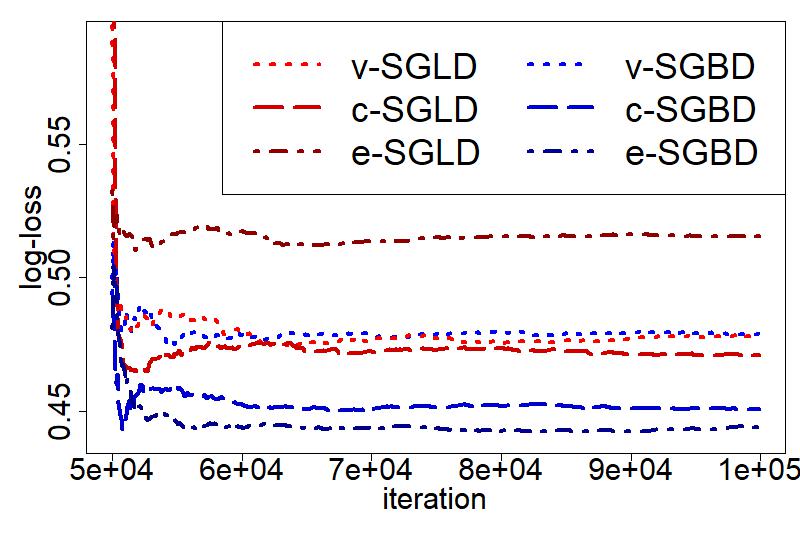}
         \caption{}
\label{fig:log_reg_iter_vs_log_loss}
\end{subfigure}
     \hfill
\begin{subfigure}[b]{0.48 \textwidth}
         \centering
         {\includegraphics[width=\textwidth]{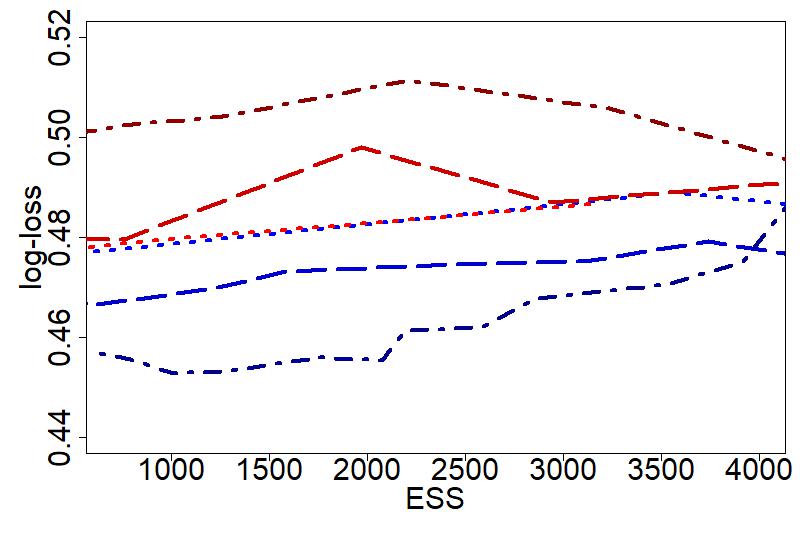}}
         \caption{}
    \label{fig:log_reg_ess_vs_log_loss}
\end{subfigure}
     \vspace{.3in}
\caption[Logistic Regression Predictive Accuracy]{Logistic Regression Predictive Accuracy on the Arrhythmia dataset. Iteration versus log-loss of MCMC estimates for a given configuration of step-size (left) and Median ESS versus log-loss using all samples after the burn-in (right). Red refers to SGLD and blue to the SGBD. For both algorithms, the vanilla (lighter dotted lines), corrected (medium scale dashed lines) and extreme (darker dotted-dashed lines) versions are displayed. 
} 
\end{figure}

\subsection{Additional details for the numerics in Sections \ref{sec.pmf} and \ref{subsec:ica}}

This section reports additional details about the experiments in Section \ref{sec.pmf} and \ref{subsec:ica} of the paper.

In Section \ref{sec.pmf}, we run the algorithms with the following value for the hyperparameter $\sigma=0.022$ and $\sigma=0.005$ for v-SGLD and e-SGLD and $\sigma=0.011$ and $\sigma=0.0105$ for v-SGBD and e-SGBD. 
The rMSE reported in Figure \ref{fig:bpmf} is computed after clipping the ratings predicted values at $1$ and $5$. 
In particular, the sample rMSE (s-rMSE) at iteration $t$ is computed as 
\begin{equation}
    \label{eq:sample_rMSE_bpmf}
    \text{s-rMSE}(t)= \sqrt{\frac{1}{|T|} \sum_{i,j : (i, j) \in T} \left(R_{ij} - \hat R_{ij}^{(t)}\right)}, \quad \hat R_{ij}^{(t)} = \begin{cases}
    1 \quad \text{if } \tilde R_{ij}^{(t)} <1,\\
    5 \quad \text{if } \tilde R_{ij}^{(t)} >5,\\
    \tilde R_{ij}^{(t)} \quad \text{otherwise}
    \end{cases},
\end{equation}
where $\tilde R_{ij}^{(t)} = \mathbf U_i^{(t)} \mathbf V_j^{(t)}$, $T$ is the test-set and $|T|$ denotes its size.

The rMSE of the ergodic average of the preditions (e-rMSE) at iteration $t$ is computed as 
\begin{equation}
    \label{eq:mcmc_rMSE_bpmf}
    \text{e-rMSE}(t)= \sqrt{\frac{1}{|T|} \sum_{i,j : (i, j) \in T} \left(R_{ij} - \bar{\hat{R}}_{ij}^{(t)}, 
    \right)}
\end{equation}
where $\bar{\hat{R}}_{ij}^{(t)} = \frac{1}{t} \sum_{k=1}^t \hat R_{ij}^{(k)}$ is the ergodic average of the predictions for $R_{ij}$
Similar results were obtained without clipping the predictions.

In Figure \ref{ica_plot_1} of Section \ref{subsec:ica}, the sample mean log-likelihood ($\text{s-}\mathcal{L}$) at iteration $t$ is computed as 
\begin{equation}
    \label{eq:sample_log_lik_ica}
    \text{s-}\mathcal{L}(t) = \frac{1}{|T|} \sum_{i \in T}\log \left(p(x_i \mid W^{(t)})\right)
\end{equation}
and the mean log-likelihood of the ergodic average of $W$ 
 ($\text{e-}\mathcal{L}$) at iteration $t$ is given by
\begin{equation}
    \label{eq:mcmc_log_lik_ica}
   \text{e-}\mathcal{L}(t)= \frac{1}{|T|} \sum_{i \in T}\log \left(p(x_i \mid \overline {W}^{(t)})\right)
\end{equation}
where $\overline{W}^{(t)} = \frac{1}{t}\sum_{k=1}^t W^{(k)}$, $T$ is the test-set and $|T|$ denotes its size.
We chose the following step-sizes optimizing predictive performance of the ergodic average of the samples of $W$ over a grid. obtaining $\sigma=0.0110$ for v-SGBD, $\sigma=0.0084$ for c-SGBD and e-SGBD, $\sigma=0.0070$ for v-SGLD and c-SGLD and $\sigma=0.0063$ for e-SGLD.

\end{document}